\tikzstyle{ellipsoid} = [draw, ellipse, minimum height=3em, minimum width=3em]
\tikzstyle{block} = [draw, rectangle, minimum height=3em, minimum width=3em]
\tikzstyle{round} = [draw, circle, minimum height=3em, minimum width=2em]
\tikzstyle{virtual} = [coordinate]
\newcommand{\A}{\boldsymbol{A}}
\newcommand{\bc}{\boldsymbol{c}}
\newcommand{\C}{\boldsymbol{C}}
\newcommand{\x}{\boldsymbol{x}}
\newcommand{\y}{\boldsymbol{y}}
\newcommand{\bv}{\boldsymbol{v}}
\newcommand{\w}{\boldsymbol{w}}
\newcommand{\W}{\boldsymbol{W}}
\newcommand{\m}{\boldsymbol{m}}
\newcommand{\g}{\boldsymbol{g}}
\newcommand{\I}{\boldsymbol{I}}
\newcommand{\D}{\boldsymbol{D}}
\newcommand{\dbold}{\boldsymbol{d}}
\newcommand{\bmu}{\boldsymbol{\mu}}
\newcommand{\X}{\boldsymbol{X}}
\newcommand{\Z}{\boldsymbol{Z}}
\newcommand{\z}{\boldsymbol{z}}
\newcommand{\p}{\boldsymbol{p}}
\newcommand{\bP}{\boldsymbol{P}}
\newcommand{\q}{\boldsymbol{q}}
\newcommand{\Q}{\boldsymbol{Q}}
\newcommand{\Qs}[2][]{\boldsymbol{Q}_{#2}^{#1}}
\newcommand{\bu}{\boldsymbol{u}}
\newcommand{\blambda}{\boldsymbol{\lambda}}
\newcommand{\bdelta}{\boldsymbol{\delta}}
\newcommand{\bbeta}{\boldsymbol{\beta}}
\newcommand{\zero}{\boldsymbol{0}}
\newcommand{\one}{\boldsymbol{1}}
\newcommand{\bxi}{\boldsymbol{\xi}}
\newcommand{\bXi}{\boldsymbol{\Xi}}
\newcommand{\bDelta}{\boldsymbol{\Delta}}
\newcommand{\nov}{\frac{\one}{\sqrt{n}}}
\newcommand{\heta}{\hat{\eta}}
\newcommand{\hetat}{\hat{\eta}_t}
\newcommand{\hetaj}{\hat{\eta}_j}
\newcommand{\barbq}{\bar{\q}}
\newcommand{\barq}{\bar{q}}
\newcommand{\barqki}{\bar{q}_{k,i}}
\newcommand{\barbu}{\bar{\bu}}
\newcommand{\barQ}{\bar{\Q}}
\newcommand{\barw}{\bar{\w}}
\newcommand{\barW}{\bar{\W}}
\newcommand{\tZ}{\tilde{\Z}}
\newcommand{\tX}{\tilde{\X}}
\newcommand{\tQ}{\tilde{\Q}}
\newcommand{\tbq}{\tilde{\q}}
\newcommand{\tq}{\tilde{q}}
\newcommand{\tqki}{\tilde{q}_{k,i}}
\newcommand{\tqyi}{\tilde{q}_{{y_i},i}}
\newcommand{\ick}{\boldsymbol{c}_{k}^{-1}}
\newcommand{\ico}{\boldsymbol{c}_{1}^{-1}}
\newcommand{\icK}{\boldsymbol{c}_{K}^{-1}}
\newcommand{\icki}{c_{k,i}^{-1}}
\newcommand{\icko}{c_{k,1}^{-1}}
\newcommand{\ickn}{c_{k,n}^{-1}}
\newcommand{\icyi}{c_{y_i,i}^{-1}}
\newcommand{\ck}{\boldsymbol{c}_{k}}
\newcommand{\cki}{c_{k,i}}
\newcommand{\chhi}{c_{h,i}}
\newcommand{\cyi}{c_{y_i,i}}
\newcommand{\cy}{\boldsymbol{c}_{y_i}}
\newcommand{\wmni}{\w_{\MNI}}
\newcommand{\Wmni}{\W_{\MNI}}
\newcommand{\Wova}{\W_{\OVA}}
\newcommand{\Wsim}{\W_{\simplex}}
\newcommand{\wovak}{\w_{\OVA,k}}
\newcommand{\wovao}{\w_{\OVA,1}}
\newcommand{\wovaK}{\w_{\OVA,K}}
\newcommand{\wsimk}{\w_{\simplex,k}}
\newcommand{\wsimo}{\w_{\simplex,1}}
\newcommand{\wsimK}{\w_{\simplex,K}}
\newcommand{\bumni}{\bu_{\MNI}}
\newcommand{\qy}{\barbq_{\y}}
\newcommand{\ells}[1]{\ell_{#1}}
\newcommand{\xik}{\boldsymbol{\xi}_k}
\newcommand{\xiK}{\boldsymbol{\xi}_K}
\newcommand{\xio}{\boldsymbol{\xi}_1}
\newcommand{\xiki}{\xi_{k,i}}
\newcommand{\xikj}{\xi_{k,j}}
\newcommand{\xiko}{\xi_{k,1}}
\newcommand{\xikn}{\xi_{k,n}}
\newcommand{\xihi}{\xi_{h,i}}
\newcommand{\xihj}{\xi_{h,j}}
\newcommand{\xiyi}{\xi_{y_i,i}}
\newcommand{\xikis}{\seqr{\xiki}{k=1}{K}}
\newcommand{\vki}{v_{k,i}}
\newcommand{\vi}{v_i}
\newcommand{\vhi}{v_{h,i}}
\newcommand{\ptk}{\p_{t,k}}
\newcommand{\pto}{\p_{t,1}}
\newcommand{\ptK}{\p_{t,K}}
\newcommand{\ptki}{p_{t,k,i}}
\newcommand{\ptyi}{p_{t,y_i,i}}
\newcommand{\pki}{p_{k,i}}
\newcommand{\pyi}{p_{{y_i},i}}
\newcommand{\qtk}{\q_{t,k}}
\newcommand{\qto}{\q_{t,1}}
\newcommand{\qtK}{\q_{t,K}}
\newcommand{\qtki}{q_{t,k,i}}
\newcommand{\qtyi}{q_{t,y_i,i}}
\newcommand{\qki}{q_{k,i}}
\newcommand{\qyi}{q_{{y_i},i}}
\newcommand{\wk}{\w_k}
\newcommand{\wyi}{\w_{y_i}}
\newcommand{\wtk}{\w_{t,k}}
\newcommand{\Pt}{\bP_t}
\newcommand{\Pto}{\bP_{t+1}}
\newcommand{\Po}{\bP_{0}}
\newcommand{\Qt}{\Q_t}
\newcommand{\Qto}{\Q_{t+1}}
\newcommand{\Qo}{\Q_{0}}
\newcommand{\Wt}{\W_t}
\newcommand{\Wto}{\W_{t+1}}
\newcommand{\deltai}{\delta_i}
\newcommand{\deltaki}{\delta_{k,i}}
\newcommand{\deltahi}{\delta_{h,i}}
\newcommand{\R}{\mathbb{R}}
\newcommand{\Ind}{\mathbb{I}}
\newcommand{\E}{\mathbb{E}}
\newcommand{\Prob}{\mathbb{P}}
\newcommand{\Risk}{\mathcal{R}}
\newcommand{\multiloss}{\mathfrak{L}}
\newcommand{\sumn}{\sum_{i=1}^{n}}
\newcommand{\sumk}{\sum_{k=1}^{K}}
\newcommand{\sumh}{\sum_{h=1}^{K}}
\newcommand{\sumjt}{\sum_{j<t}}
\newcommand{\sumt}{\sum_{t=0}^{\infty}}
\newcommand{\sumkny}{\sum_{k\neq y_i}^{K}}
\newcommand{\sumknynk}{\sum_{k\neq y_i, k\neq k}^{K}}
\newcommand{\tinf}[1]{#1\rightarrow \infty}
\newcommand{\tninf}[1]{#1\rightarrow -\infty}
\newcommand{\tzero}[1]{#1\rightarrow 0}
\newcommand{\limt}{\underset{t\rightarrow \infty}{\lim}}
\newcommand{\lima}{\underset{a\rightarrow 0}{\lim}}
\newcommand{\umin}[1]{\underset{#1}{\min}}
\newcommand{\smin}[1]{\min_{#1}}
\newcommand{\umax}[1]{\underset{#1}{\max}}
\newcommand{\ulim}[1]{\underset{#1}{\lim}}
\newcommand{\usup}[1]{\underset{#1}{\sup}}
\newcommand{\uargmin}[1]{\underset{#1}{\arg\min\,}}
\newcommand{\sargmin}[1]{\arg\min_{#1}}
\newcommand{\usub}[2]{\underset{#1}{\underbrace{#2}}}
\newcommand{\diag}[1]{\text{diag}\left(#1\right)}
\newcommand{\diagt}[1]{\text{diag}\bigl(#1\bigl)}
\newcommand{\dom}[1]{\text{dom\,}#1}
\newcommand{\MNI}{\mathsf{MNI}}
\newcommand{\OVA}{\mathsf{OvA}}
\newcommand{\simplex}{\mathsf{simplex}}
\newcommand{\vct}[2]{\left(#1,\cdots,#2\right)^\top}
\newcommand{\vctt}[2]{\bigl(#1,\cdots,#2\bigl)^\top}
\newcommand{\cond}[4]{\left\{ \begin{matrix}
      #1 &\; #2 \\
      #3 &\; #4
    \end{matrix}\right.\,}
\newcommand{\pts}[1]{\left(#1\right)}
\newcommand{\mts}[1]{\left[#1\right]}
\newcommand{\Riskf}[1]{\Risk\left(#1\right)}
\newcommand{\gRiskf}[2][]{\nabla_{#1}\Risk\left(#2\right)}
\newcommand{\ellf}[2][]{\ell^{#1}\left(#2\right)}
\newcommand{\ellsf}[2][]{\ell_{#1}\left(#2\right)}
\newcommand{\gsf}[2][]{g_{#1}\left(#2\right)}
\newcommand{\ellpf}[2][]{\ell_{#1}'\left(#2\right)}
\newcommand{\ellppf}[2][]{\ell_{#1}''\left(#2\right)}
\newcommand{\ellpellnofz}[1]{\ell'\bigl(\ell^{-1}\bigl(#1\bigl)\bigl)}
\newcommand{\ellpellnofzo}[1]{\ell'\Bigl(\ell^{-1}\Bigl(#1\Bigl)\Bigl)}
\newcommand{\ellpellnof}[1]{\ell'\biggl(\ell^{-1}\biggl(#1\biggl)\biggl)}
\newcommand{\ellpellnoft}[1]{\ell'\Biggl(\ell^{-1}\Biggl(#1\Biggl)\Biggl)}
\newcommand{\ellppellnof}[1]{\ell''\biggl(\ell^{-1}\left(#1\right)\biggl)}
\newcommand{\ellpinvf}[1]{\left[\ell'\right]^{-1}\left(#1\right)}
\newcommand{\ip}[2]{\left\langle #1, #2\right\rangle}
\newcommand{\Lossf}[1]{\mathcal{L}\left(#1\right)}
\newcommand{\psif}[2][]{\psi^{#1}\left(#2\right)}
\newcommand{\psicf}[1]{\psi^{*}\left(#1\right)}
\newcommand{\gpsif}[2][]{\nabla_{#1}\psi\left(#2\right)}
\newcommand{\nm}[1]{\ell_#1}
\newcommand{\gf}[2][]{g^{#1}\left(#2\right)}
\newcommand{\gpf}[2][]{\left[g^{#1}\right]'\left(#2\right)}
\newcommand{\hf}[2][]{h^{#1}\left(#2\right)}
\newcommand{\bhf}[2][]{\bar{h}^{#1}\left(#2\right)}
\newcommand{\ff}[2][]{f^{#1}\left(#2\right)}
\newcommand{\nnorm}[2][]{\left\|#2\right\|_{#1}}
\newcommand{\normo}[1]{\left|#1\right|}
\newcommand{\normalize}[2][]{\frac{#2}{\left\|#2\right\|_{#1}}}
\newcommand{\seq}[1]{\left\{#1\right\}}
\newcommand{\seqr}[3]{\left\{#1\right\}_{#2}^{#3}}
\newcommand{\expf}[2][]{\exp^{#1}\left(#2\right)}
\newcommand{\lnf}[2][]{\ln^{#1}\left(#2\right)}
\newcommand{\multilossf}[1]{\multiloss\left(#1\right)}
\newcommand{\breg}[2]{D_{\psi^*}\left(#1, #2\right)}
\newcommand{\deltaf}[2]{\delta\left(#1, #2\right)}
\newcommand{\Ff}[2][]{F^{#1}\left(#2\right)}
\newcommand{\sigmaf}[2][]{\sigma^{#1}\left(#2\right)}
\newcommand{\Hess}[2][]{\boldsymbol{H}_{#2}^{#1}}
\newcommand{\hesscomp}[2]{\frac{\partial^2\psif{\bXi}}{\partial\xi_{k,{#1}}\partial\xi_{h,{#2}}}}
\newcommand{\XX}{\X\X^\top}
\newcommand{\XXi}{\bigl(\X\X^\top\bigl)^{-1}}
\newcommand{\yXXy}{\diag{\y}\X\X^\top\diag{\y}}
\newcommand{\XXiy}{\bigl(\X\X^\top\bigl)^{-1}\y}
\newcommand{\XXXy}{\X^\top\bigl(\X\X^\top\bigl)^{-1}\y}
\newcommand{\XXaI}{\XX-\alpha\I}
\newcommand{\tXtX}{\tX\tX^\top}
\newcommand{\CXXC}{\C\tX\tX^\top\C}
\newcommand{\cXXc}{\diagt{\ick}\X\X^\top\diagt{\ick}}
\newcommand{\cXXic}{\diag{\ck}\bigl(\X\X^\top\bigl)^{-1}\ck}
\newcommand{\cXXicv}{\ck^\top\bigl(\X\X^\top\bigl)^{-1}\ck}
\newcommand{\XXXc}{\X^\top\bigl(\X\X^\top\bigl)^{-1}\ck}
\newcommand{\myquad}[1][1]{\hspace*{#1em}\ignorespaces}
\newcommand{\expt}{\text{exp}}
\newcommand{\logt}{\text{log}}
\newcommand{\vst}{\text{vs}}
\newcommand{\polyt}{\text{poly}}
\newtheorem{assumption}{Assumption}
\newtheorem{theorem}{Theorem}
\newtheorem{lemma}{Lemma}
\newtheorem{proposition}{Proposition}
\newtheorem{corollary}{Corollary}
\newcommand{\kw}[1]{\textcolor{violet}{\it {\bf Kuo-Wei:} {#1}}}
\begin{document}

\doparttoc 
\faketableofcontents 

\begin{center}

{\bf{\LARGE{General Loss Functions Lead to (Approximate) Interpolation in High Dimensions}}}

\vspace*{.2in}

{\large{
\begin{tabular}{ccc}
Kuo-Wei Lai$^\dagger$ & Vidya Muthukumar$^{\dagger,\ddagger}$
\end{tabular}}}

\vspace*{.2in}

\begin{tabular}{c}
School of Electrical \& Computer Engineering, Georgia Institute of Technology$^\dagger$\\
H. Milton School of Industrial \& Systems Engineering, Georgia Institute of Technology$^\ddagger$
\end{tabular}

\vspace*{.2in}
\date{}

\end{center}

\begin{abstract}%
We provide a unified framework that applies to a general family of convex losses across binary and multiclass settings in the overparameterized regime to approximately characterize the implicit bias of gradient descent in closed form. Specifically, we show that the implicit bias is approximated (but not exactly equal to) the minimum-norm interpolation in high dimensions, which arises from training on the squared loss.
In contrast to prior work, which was tailored to exponentially-tailed losses and used the intermediate support-vector-machine formulation, our framework directly builds on the primal-dual analysis of~\cite{ji2021characterizing}, allowing us to provide new approximate equivalences for general convex losses through a novel sensitivity analysis.
Our framework also recovers existing exact equivalence results for exponentially-tailed losses across binary and multiclass settings.
Finally, we provide evidence for the tightness of our techniques and use our results to demonstrate the effect of certain loss functions designed for \emph{out-of-distribution} problems on the closed-form solution. 
\end{abstract}


\section{Introduction}\label{sec:intro}

The choice of loss function to optimize a model over training examples is an important cornerstone of the machine learning (ML) pipeline.
This choice is particularly nuanced for the task of classification, which is evaluated by the 0-1 risk on test data.
An elegant classical viewpoint is that training loss functions should be designed as continuous and optimizable surrogates~\citep{bartlett2006convexity,zhang2004statistical,lugosi2004bayes,steinwart2005consistency} to the 0-1 risk, as the training surrogate loss can often be related to the test surrogate risk, and the test surrogate risk can in turn be related to the test 0-1 risk.
However, the first part of this reasoning breaks down in the modern high-dimensional regime, where infinitely many solutions can achieve zero training loss, but the test risk widely varies across these solutions~\citep{zhang2021understanding,neyshabur2014search}.

The goal of this work is to provide a more transparent understanding of the impact of the training loss function on the eventual solution (and, thereby, its generalization) in this high-dimensional regime.
Recent empirical and theoretical work provides a mixed and incomplete picture of the impact of loss.
On one hand, large-scale empirical studies~\citep{hui2020evaluation,kline2005revisiting,golik2013cross,janocha2017loss} have shown that the less popular squared loss generates surprisingly competitive performance to the popular cross-entropy loss (the multiclass extension of the binary logistic loss).
On the other hand, the cross-entropy loss (and, more generally, the family of \emph{exponentially-tailed losses}~\citep{soudry2018implicit,ji2019implicit}) is the only one that admits a direct relationship with maximization of the worst-case training data margin, which often correlates with good generalization~\citep{bartlett1998boosting,bartlett2002rademacher}.
The empirically more challenging task of \emph{out-of-distribution (OOD) generalization}~\citep{mansour2008domain} yields further subtleties, with a diversity of loss functions that deviate significantly from this standard family of exponentially-tailed losses being recently designed and evaluated~\citep{sagawa2019distributionally,cao2019learning,menon2020long,kini2021label,wang2021importance}.
Even for high-dimensional linear models, a comprehensive theory for the impact of a general loss function on the ensuing solution (and, thereby, its generalization) is currently missing.
While promising frameworks have been recently provided for the implicit bias of general losses through convex programming~\citep{ji2020gradient,ji2021characterizing}, the properties of the implicit bias itself remain opaque. 
A separate recent line of work~\citep{muthukumar2021classification,hsu2021proliferation,wang2022binary,wang2021benign,cao2021risk} shows that the squared loss and cross-entropy loss can yield identical solutions with high probability in high dimensions, complementing their aforementioned noticed similarities in empirical performance.
In particular, both solutions are shown to exactly coincide with \emph{minimum-norm interpolation} (MNI), which enjoys a closed-form expression and often generalizes well in high dimensions~\citep{bartlett2020benign,belkin2020two,hastie2022surprises,kobak2020optimal,muthukumar2020harmless,muthukumar2021classification}.
However, these proof techniques are highly tailored to exponentially-tailed losses and in particular the intermediate support-vector-machine (SVM) formulation~\citep{soudry2018implicit}, leaving open whether such equivalences can be proved for more general losses.

\paragraph{Our contributions:}
In this paper we characterize the closed-form properties of the implicit bias of general convex losses arising from gradient descent in high dimensional linear models, by building on the primal-dual characterization of the implicit bias provided in~\cite{ji2021characterizing}.
\textbf{In Section~\ref{sec:binarymainresults}} we show (Proposition~\ref{thm:b_eqv_thm} and Theorem~\ref{thm:b_sen_upperbound}) that general convex losses in conjunction with gradient descent yield solutions that are \emph{approximately directionally close} to minimum-norm interpolation (MNI) on binary labels in a sufficiently high-dimensional regime with high probability.
 Our approximation error term is a decreasing function of an ``effective dimension" which also appears in sufficient and necessary conditions for exact equivalence between the SVM and MNI~\citep{hsu2021proliferation,ardeshir2021support}.
In contrast to all prior literature that works with the SVM, our analysis directly leverages the primal-dual framework of~\cite{ji2021characterizing}, allowing us to recover the exact equivalence to MNI for exponentially-tailed losses~\citep{hsu2021proliferation} through an alternative proof technique. Our upper bounds on the approximation error utilize a novel sensitivity analysis of the dual implicit bias in high dimensions and are applicable to general convex losses.

\textbf{In Section~\ref{sec:multiclassmainresults}} we extend our framework and analysis in binary classification to the multiclass classification where the primal-dual analysis in~\cite{ji2021characterizing} can be naturally extended.
We also treat the cross-entropy loss separately and provide an alternative proof of exact equivalence to MNI that is conceptually simpler than the one provided in~\cite{wang2021benign}, in particular, not requiring any reparameterization of the dual.


Finally, \textbf{in Section~\ref{sec:converse}} we provide partial evidence for the tightness of our arguments.
First, in Proposition~\ref{prop:converseexact} we show that the conditions for exact equivalence in Theorem~\ref{thm:b_eqv_thm} are not only sufficient but necessary.
We leverage this converse result to make an interpretable link between the popular techniques of \emph{importance-weighting} on heavy-tailed losses~\citep{wang2021importance} and vector-scaling of exponentially-tailed losses~\citep{kini2021label} and a type of \emph{cost-sensitive interpolation}, thereby providing a possible explanation for their success in addressing OOD generalization. 
Finally, under further assumptions on the data covariance, we provide a lower bound in Proposition~\ref{prop:b_sen_lowerbound} that in some sense ``matches" the upper bound of Theorem~\ref{thm:b_sen_upperbound}.


\subsection{Related work}\label{sec:relatedwork}



We organize our discussion of related work under three verticals.

\paragraph{Classical perspectives on loss function design:}

There are two classical perspectives on loss function design for classification. The first, supported by decades of research in the statistics community, advocates for choosing the loss function to match the negative logarithm of the maximum likelihood function and requires knowledge of the family of conditional distributions of the label.
For binary (multiclass) labels, a popular family of conditional distributions is given by the \emph{logistic} (multinomial) model, which yields the empirically popular choice of the \emph{logistic (cross-entropy) loss}.
The second and relatively more recent perspective, pioneered by the papers~\citep{bartlett2006convexity,zhang2004statistical,lugosi2004bayes,steinwart2005consistency}, advocates for designing \emph{continuous surrogates} to the discontinuous 0-1 test risk such that a bound on the 0-1 test risk can be easily obtained by inverting a bound on the surrogate test risk.
In an indirect sense, this perspective suggests a type of equivalence in surrogate loss functions in terms of ensuing generalization bounds.
However, principally because of the reliance on empirical-process-theory (to relate in turn the surrogate test risk to the surrogate training loss), this reasoning can frequently break down in high-dimensional settings, particularly when \emph{perfectly fitting}, or \emph{interpolating} models are considered.
This is because infinitely many models interpolate the training data, but each of them suffers a different test risk that is fundamentally unrelated to the training loss.
On the other hand, while the relations between test risks (e.g.~\cite[Theorems 1 and 3]{bartlett2006convexity}) remain universally applicable, they also suffer from some shortcomings in high-dimensional settings --- in particular, they are only powerful enough to provide faster \emph{statistical rates} for classification tasks as compared to parameter recovery~\citep{audibert2007fast}, rather than full separations in asymptotic consistency (many classic examples of such separations are considered in~\cite{devroye2013probabilistic}, but such separations were also shown more recently in the overparameterized regime in~\cite{muthukumar2021classification}). 
The first statistical perspective is similarly not prescriptive in the high-dimensional regime where the maximum-likelihood estimator is no longer unique, and training loss, again, cannot be related to test risk.


\paragraph{Implicit bias characterization of optimization algorithms:}
In the modern high-dimensional regime, infinitely many solutions achieve zero training loss for most canonical choices of training loss functions. Therefore, it is not only the loss function but also the choice of optimization algorithm that determines the eventual solution, commonly called the \emph{implicit bias}.
An extensive body of work implicitly characterizes this implicit bias of optimization algorithms as solutions to various convex programs~\citep{telgarsky2013margins,soudry2018implicit,ji2019implicit,ji2020gradient,ji2021characterizing,gunasekar2018characterizing,gunasekar2018implicit,woodworth2020kernel,nacson2019convergence}.
The convex program formulation typically does not admit a closed-form solution, except for gradient descent and the squared loss (which yields the MNI for linear models~\citep{engl1996regularization}).
Early work here was tailored to exponentially-tailed losses~\citep{soudry2018implicit,ji2019implicit}, and their established equivalence to the MNI and thereby the squared loss~\citep{muthukumar2021classification,hsu2021proliferation,wang2022binary,wang2021benign,cao2021risk} in turn heavily rely on the intermediate SVM formulation.
The more recent works~\citep{nacson2019convergence,ji2020gradient,ji2021characterizing} study some non-exponential losses, but leave the exact nature of the implicit bias somewhat mysterious, other than that the ensuing convex program no longer corresponds to the max-margin SVM.
For example,~\cite[Figure 1]{ji2020gradient} provides a simulated example for which exponential and polynomial losses induce very different directions, and~\cite[Proposition 12]{ji2020gradient} provides an example under which the training data margin can be arbitrarily worse for polynomial losses. These are specialized examples of $2$-dimensional data that is linearly separable; therefore, do not apply to the high-dimensional regime of interest.
Whether such heavy-tailed losses are actually provably worse than exponentially-tailed losses is left open.
Our results in this work imply intriguing similarities, but also differences, between heavy-tailed losses and exponential losses in the high-dimensional regime.

The recent papers~\citep{ji2020gradient,ji2021characterizing} provide promising avenues to understanding the nature of the implicit bias by formulating convex programs for general losses.
\cite{ji2020gradient} make minimal assumptions on the loss function beyond convexity and differentiability, and characterize the implicit bias as the limit of a set of solutions to convex programs that minimize the training loss subject to an $\ell_2$-norm constraint of increasing radius (i.e. a \emph{regularization path}). 
\cite[Appendix A]{wang2021importance} show for polynomially-tailed losses that this limit can itself be written as the solution to an explicit convex program, but their proof is tailored to polynomially-tailed losses and in particular their property of positive homogeneity --- moreover, no closed-form characterization is provided.
On the other hand,~\cite{ji2021characterizing} make slightly stronger assumptions on the loss function, but provide a clearer path to characterizing a closed-form solution for the implicit bias by understanding its \emph{mirror-descent dual} as a solution to an explicit convex program (i.e.~not a limit of solutions to convex programs on the regularization path).
It is thus natural to attempt to obtain closed-form expressions for the ``primal" implicit bias by understanding its ``dual" for general losses\footnote{This is especially true given that the mirror-descent dual for the case of exponentially-tailed losses turns out to exactly correspond to a scalar multiple of the SVM dual. Indeed, the proofs of SVM equivalence all construct a dual witness.}. 
A second advantage with analyzing the mirror-descent dual is that we show it automatically yields the non-trivial variable substitution of the multiclass SVM dual that was made in~\cite{wang2021benign}, resulting in a conceptually simpler proof of SVM equivalence to MNI for the cross-entropy loss.
We also show that the primal-dual analysis is applicable to more general formulations of multiclass losses~\citep{zhang2004statistical,tewari2007consistency,ji2021fast}.

\paragraph{Generalization analysis of interpolating predictors in high dimensions:}

A comprehensive theory for overparameterized models arising from training with the squared loss (i.e.~the MNI) was provided in work beginning with the papers that analyzed the test regression risk~\citep{bartlett2020benign,belkin2020two,hastie2022surprises,kobak2020optimal,muthukumar2020harmless}.
This theory critically utilizes the closed-form expression for the MNI.
Sharply analyzing the classification risk poses distinct challenges, the most daunting of which is the lack of a closed-form expression for the solution arising from any other convex loss function used for classification.
To tackle this challenge for the special case of exponential losses,~\cite{muthukumar2021classification} introduced a two-step recipe.
First, they related the implicit bias of exponential losses (i.e. the SVM) to the MNI --- in fact, by showing an \emph{exact equivalence} result (which was since improved on by~\cite{hsu2021proliferation}).
Second, they sharply analyzed the classification test risk of the MNI and showed that it can achieve classification-consistency even when a corresponding regression task would not be consistent.
It is worth noting that this type of consistency result cannot be easily recovered through any generalization bound that relies on empirical-process-theory, including margin-based data-dependent generalization bounds (as described in~\cite[Section 6]{muthukumar2021classification}).
This recipe was since applied to binary and multiclass Gaussian and sub-Gaussian mixture models to identify new high-dimensional regimes in which classification-consistency is possible~\citep{wang2022binary,cao2021risk,wang2021benign,subramanian2022generalization}.
To be able to apply this recipe to more general losses, corresponding equivalences would need to be established between general losses and the MNI, which is the focus of this paper.

Other than the approach described above, two other families of techniques are prevalent in the recent literature.
The first applies to proportionally high-dimensional regimes (where $d \propto n$) and directly characterizes the limiting test risk as $(d,n) \to \infty$ as the solution to a system of nonlinear equations, beginning with the efforts tailored to exponential or exponentially tailed losses~\citep{huang2017asymptotic,sur2019modern,mai2019large,salehi2019impact,deng2022model,montanari2019generalization}.
More recently,~\cite{loureiro2021learning} provide precise asymptotic analysis for general losses and multiclass classification for Gaussian mixture models for \emph{regularized empirical risk minimization} with general losses and regularizers\footnote{Note that this covers the implicit bias of gradient descent when the regularization proportion $\lambda \to 0$ due to the results of~\cite{ji2020gradient}.}.
However, they do not examine in detail the impact of loss functions on performance.
In general, none of our results for general losses have direct implications for this proportional regime.
However, we believe the auxiliary convex program proposed in Lemma~\ref{lem:b_imp_bias_relax} might be of independent interest, particularly for the subset of approaches above that utilize Gordon's comparison theorems and the convex Gaussian min-max theorem.

The second technique was proposed by~\cite{chatterji2021finite} for directly analyzing the generalization error of the implicit bias of exponential losses on a sub-Gaussian mixture model.
The key technical innovation is to prove a ``loss ratio" bound: under sufficiently overparameterized settings,~\cite{chatterji2021finite} show that the training losses of any two examples are within a constant factor of each other throughout the optimization path of gradient descent.
This proof technique is quite generally applicable and was since used for polynomially-tailed losses~\citep{wang2021importance}, deep linear networks~\citep{chatterji2022interplay} and certain 2-layer neural networks on high-dimensional data~\citep{frei2022benign}. 
However, the loss-ratio bound often requires a much larger data dimension $d \gg n^2$ to hold as compared to the MNI-equivalence approach to analyzing the SVM, as shown explicitly in~\citep{wang2022binary}.
It is not clear whether this dimension requirement is tight even in the worst case.
A natural question of interest is whether a loss-ratio bound implies exact or approximate equivalence of solutions, or vice versa.
\cite{behnia2022avoid} showed recently that a loss-ratio bound can imply exact equivalence to the MNI in the case of exponential losses, but this is a research direction that is otherwise largely unexplored.

\paragraph{Comparison to related work:} In Table~\ref{tab:comparison}, we succinctly situate our work in the literature on the implicit bias of classification-oriented loss functions.
In sum, we go \emph{beyond worst-case characterizations} (by investigating an approximate equivalence to the MNI under sufficiently high-dimensional random data) of the implicit bias of gradient descent on \emph{general convex loss functions} (going beyond previous work that only established an approximate equivalence for the class of exponentially-tailed losses). 
While the beyond-worst-case aspect had been previously explored on exponentially-tailed losses~\citep{muthukumar2021classification,hsu2021proliferation,wang2021benign}, and a worst-case characterization of general losses was provided~\citep{ji2020gradient,ji2021characterizing}, prior to our work these had not been studied together.
Our starting point for analyzing the implicit bias of general losses is the insightful \emph{dual} convex program characterization provided by~\cite{ji2021characterizing}.
We introduce several novel ideas over and above their work; prominent among them a new, and simpler to analyze, auxiliary convex program for the dual (Lemma~\ref{lem:b_imp_bias_relax}), as well as a new sensitivity analysis of this auxiliary program that is ``fixed-design" in nature (Theorem~\ref{thm:b_sen_upperbound}).
Our main sensitivity theorem can easily be applied in conjunction with standard results on high-dimensional probability, e.g. random matrix concentration, to establish approximate equivalence to the MNI for general losses and a variety of random data models (Corollary~\ref{cor:convergencehighdims}).

\begin{table}[htbp]
    \renewcommand{\arraystretch}{1}
    \caption{Our result, contextualized in related implicit bias literature. 
    }
    \label{tab:comparison}
    \centering
    \resizebox{1\textwidth}{!}{\begin{tabular}{|c | c | c |} 
    \hline
 & \begin{tabular}{c}Implicit bias\\for worst-case data\end{tabular} & \begin{tabular}{c}Implicit bias vs. MNI\\under high-dimensional random data\end{tabular} \\
    \hline
    \begin{tabular}{c}Exponentially-tailed losses\\(e.g. exponential loss, logistic loss)\end{tabular} & \begin{tabular}{c}Soudry et al.~\cite{soudry2018implicit}\\ Ji and Telgarsky~\cite{ji2019implicit}\\ Ravi et al.~\cite{ravi2024implicit}\end{tabular} & \begin{tabular}{c}Muthukumar et al.~\cite{muthukumar2021classification} \\Hsu et al.~\cite{hsu2021proliferation} \\Wang et al.~\cite{wang2021benign}\end{tabular} \\
    \hline\begin{tabular}{c}General convex losses\\(e.g. polynomially-tailed loss)\end{tabular} & \begin{tabular}{c}Ji et al.~\cite{ji2020gradient}\\Ji and Telgarsky~\cite{ji2021characterizing}\end{tabular} & This work \\
    \hline
    \end{tabular}}
    
\end{table}



\paragraph{Notation:} We use lower-case boldface (e.g. $\x$) to denote vector notation and upper-case boldface (e.g. $\X$) to denote matrix notation.
We use $\nnorm[p]{\cdot}$ to denote the $\ell_p$-norm of a vector for $p \in [1,\infty)$ and $\nnorm[2]{\cdot}$ to additionally denote the operator norm of a matrix.
$\diag{\x}$ denotes the diagonal matrix whose entries are given by the vector $\x$.
For a 1-dimensional function $h(\cdot): \R \to \R$, we frequently overload notation and denote its element-wise operation on a vector by 
 $h(\x) :=\vct{h(x_1)}{h(x_n)}$.
All other appearances of the notation $h(\x): \R^n \to \R^k$ instead denote a function that takes a vector-valued argument.
We denote first and second derivatives by $'$ and $''$ respectively, and use $\partial$ to denote a partial derivative.
We use the shorthand notation $[n]$ to denote the set of natural numbers $\{1,\ldots,n\}$.

\section{Approximate Equivalences for Binary Classification}
Since our results build on the primal-dual analysis presented in \cite{ji2021characterizing}, we reproduce their assumptions on the data and loss function below.

\paragraph{Problem setup.} We consider a labeled dataset $\seqr{\x_i, y_i}{i=1}{n}$, where $\x_i \in \R^d$ satisfies the normalization $\nnorm[2]{\x_i} \leq 1$ (which can be done without loss of generality) and the labels $y_i \in \seq{-1, 1}$ are binary.
We denote $\X = \vct{\x_1}{\x_n}\in\R^{n \times d}$ and $\y := \vct{y_1}{y_n}\in\R^n$. We focus on an unbounded, unregularized empirical risk minimization (ERM) problem with a margin-based loss function and a linear classifier:
\begin{align}\label{eq:erm}
    \umin{\w\in\R^d} \Riskf{\w} := \frac{1}{n}\sumn \ellf{-y_i\ip{\w}{\x_i}}=\frac{1}{n}\sumn \ellf{y_i \ip{\w}{\z_i}},
\end{align}
where we denote $\z_i := -\x_i$, $\Z:=-\X$, and $\w\in\R^d$ is the set of parameters of the linear classifier. 

\begin{assumption}[\cite{ji2021characterizing}]\label{asm:b_loss_assump}
    The loss function $\ell(\cdot)$ is twice differentiable, and satisfies:
    \begin{enumerate}
        \item $\ell$, $\ell'$, $\ell''>0$, and $\ulim{\tninf{z}}\ellf{z}=0$.
        \item $z\ellpf{z}/\ellf{z}$ is increasing on $(-\infty, 0)$, and $\ulim{\tninf{z}}z\ellpf{z}=0$.
        \item For all $b\geq1$, there exists $c>0$ (which may depend on $b$), such that for all $a>0$, we have $\ellpellnofz{a}/\ellpellnofz{ab}\geq c$. 
        \item Given $\bxi \in \R^n$, we define
            \begin{align*}
                \Lossf{\bxi} := \sumn\ellf{\xi_i}, \text{ and } \psif{\bxi} := \ellf[-1]{\Lossf{\bxi}},
            \end{align*}
        and the ``generalized sum" $\psi$ is convex and $\beta$-smooth with respect to $\nm{\infty}$ norm.
\end{enumerate}
\end{assumption}

Next, we show that for any loss function $\ellf{\cdot}$ that satisfies Assumption~\ref{asm:b_loss_assump}, there exists an explicit analytical function $\gf{\cdot}$, derived as the limit of a certain ratio of derivatives of inverses of the loss function $\ellf{\cdot}$, that will be instrumental in our analysis of the implicit bias. This lemma is a direct implication of Assumption~\ref{asm:b_loss_assump}, without any additional assumptions.

\begin{lemma}\label{lem:g_func}
    Under Assumption~\ref{asm:b_loss_assump}, the limit $\ulim{a \to 0} \frac{\ellpellnofz{a\cdot z}}{\ellpellnofz{a}}$ exists for every $0 < z \leq 1$. Moreover, there exists a function $\gf{\cdot}$ such that $\gf{z} \coloneqq \ulim{a \to 0} \frac{\ellpellnofz{a\cdot z}}{\ellpellnofz{a}}$ for $0 < z \leq 1$, where $g: (0,1] \rightarrow (0,1]$ is a non-negative, strictly increasing, convex function satisfying $\gf{1}=1$.
\end{lemma}

The proof of Lemma~\ref{lem:g_func} can be found in Appendix~\ref{sec:g_func_proof}. Lemma~\ref{lem:g_func} is central to all of our results, since different loss functions $\ellf{\cdot}$ may result in different functions $\gf{\cdot}$. In particular, we critically use the convexity of the function $\gf{\cdot}$ to obtain a simplified auxiliary convex program, that is equivalent in optimal solution, underlying the dual of the implicit bias. Figure~\ref{fig:g_func} displays various examples of the form the function $\gf{\cdot}$ takes for specific, commonly used loss functions.


\paragraph{The implicit bias formulation.} We use the gradient descent algorithm to solve this unregularized empirical risk minimization problem with initial weights $\w_0$ and the update rule: $\w_{t+1} := \w_t - \eta_t \gRiskf{\w_t}$ for $t\geq0$. We also denote, in the context of mirror-descent analysis, the ``primal" $\p_t:=\diag{\y}\Z\w_t\in\R^n$ and its corresponding ``dual" $\q_t:=\gpsif{\p_t} \in \R^n$, where
\begin{align}\label{eq:dual-primal-t}
    q_{t,i}
    =\frac{\ellpf{p_{t,i}}}{\ellpellnofz{\sumn\ellf{p_{t,i}}}}
    =\frac{\ellpf{p_{t,i}}}{\ellpf{\psif{\p_t}}}.
\end{align}

These mirror-descent primal and dual terms were defined in~\cite{ji2021characterizing}. 
Next, we assume that the data can be interpolated or perfectly fitted, which corresponds to a full-rank assumption on the Gram matrix $\XX$.
Note that this in turn implies that the dataset is linearly separable.
This full-rank assumption is satisfied with high probability in the overparameterized regime $d \gg n$ for most canonical data distributions; see, e.g.~\cite{hsu2021proliferation}.
\begin{assumption} \label{asm:b_lin_sep}
    We assume that $d \geq n$ and the data Gram matrix satisfies $\XX \succ \zero$. This in turn implies that there exists a linear separator $\bu \in \R^d$ that $y_i\ip{\bu}{\x_i}>0$ for all $i\in[n]$.
\end{assumption}

We restate the primal-dual implicit bias formulation of~\cite[Theorem 5]{ji2021characterizing} below.

\begin{lemma}[\cite{ji2021characterizing}] \label{lem:b_imp_bias}
    Under Assumptions~\ref{asm:b_loss_assump} and \ref{asm:b_lin_sep}, and provided that $\hetat \coloneqq \eta_t \ellpf{\psif{\p_t}}/n \leq 1/\beta$ is nonincreasing and $\sumt\hetat=\infty$, the primal and dual implicit bias $(\barw,\barbq)$ are given by:
    \begin{gather} \label{eq:b_imp_bias_eq}
        \barw:=\ulim{\tinf{t}}\normalize[2]{\w_t}=\normalize[2]{-\Z^\top\diag{\y}\barbq}=\normalize[2]{\X^\top\diag{\y}\barbq},
    \end{gather}    
    and 
    \begin{gather}\label{eq:b_barq_def}
        \barbq \in \uargmin{\psif[*]{\q}\leq 0}\ff{\q},
    \end{gather}
    where $\psi^*$ denotes the convex conjugate of $\psi$, and we define $\ff{\q}:=\frac{1}{2}\nnorm[2]{\X^\top\diag{\y}\q}^2$.
\end{lemma}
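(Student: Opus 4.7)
The plan is to treat the gradient descent recursion on $\w_t$ through the lens of mirror descent on the primal variable $\p_t := \diag{\y}\Z\w_t \in \R^n$, where the natural "potential" is $\psif{\cdot}$. The starting observation is that $\Riskf{\w} = \frac{1}{n}\sumn \ellf{p_i} = \frac{1}{n}\ellf{\psif{\p}}$, so minimizing the risk is equivalent to driving $\psif{\p_t}\to -\infty$, and the gradient update yields
\begin{equation*}
    \p_{t+1} = \p_t - \frac{\hetat}{n}\,\diag{\y}\Z\Z^\top\diag{\y}\,\ellpf{\p_t},
\end{equation*}
since $\gRiskf{\w} = \tfrac{1}{n}\Z^\top\diag{\y}\ellpf{\p}$. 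Telescoping the primal weights gives $\w_t = \w_0 - \tfrac{1}{n}\sum_{j<t}\hetaj\,\Z^\top\diag{\y}\ellpf{\p_j}$, and after normalizing by $\ellpellnof{\sumn\ellf{p_{j,i}}}$ at each step, the summands become precisely the dual iterates $\q_j = \gpsif{\p_j}$ defined in equation~\eqref{eq:dual-primal-t}. This identifies $\w_t$, up to a scalar and lower-order initialization terms, with $-\Z^\top\diag{\y}$ applied to a positively weighted average of $\seq{\q_j}_{j<t}$, which motivates the form of $\barw$ in equation~\eqref{eq:b_imp_bias_eq}.

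Next I would establish that $\Riskf{\w_t}\to 0$ (equivalently $\psif{\p_t}\to-\infty$), which is where Assumption~\ref{asm:b_lin_sep} is essential: linear separability rules out a stationary point of $\psi$ in the relevant direction, and the $\beta$-smoothness of $\psi$ with respect to $\nm{\infty}$ together with $\hetat\leq 1/\beta$ gives a standard descent lemma so $\psif{\p_t}$ is strictly decreasing whenever $\gpsif{\p_t}\neq \zero$. A Polyak--{\L}ojasiewicz-type argument under Assumption~\ref{asm:b_loss_assump}, parts 1--2 (using that $z\ellpf{z}/\ellf{z}$ is increasing to control the normalized gradient) then gives $\sum_t \hetat = \infty \Rightarrow \Riskf{\w_t}\to 0$. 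This step also shows $\nnorm[2]{\w_t}\to\infty$, so the direction $\w_t/\nnorm[2]{\w_t}$ is the meaningful asymptotic object.

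The main technical hurdle is the third step: identifying the limit of the normalized dual average as the specific minimizer $\barbq$ of the convex program~\eqref{eq:b_barq_def}. The strategy is a duality argument. On one hand, the recursion and the accumulation identity for $\w_t$ imply that the suitably normalized squared norm of $\w_t$ is approximated by $f$ evaluated at the running dual average, with the constraint $\psicf{\q}\leq 0$ emerging because $\q_t = \gpsif{\p_t}$ always lies on the set $\{\psicf{\q}=\ip{\q}{\p_t}-\psif{\p_t}\}$ (shifted by $\psif{\p_t}\to -\infty$). On the other hand, Assumption~\ref{asm:b_loss_assump}, part~\ref{cond:b_g_cond} (in particular the growth function $\gf{\cdot}$ and its convexity) allows one to show that along the trajectory the marginal dual mass is spent on directions that decrease $\ff{\q}$ at first-order, hence any accumulation point of the normalized dual average satisfies the KKT conditions of~\eqref{eq:b_barq_def}. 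Uniqueness of the minimizer of a strictly convex program (with $\ff{\cdot}$ quadratic and $\XX\succ\zero$) then pins down the limit as $\barbq$. This is the step I expect to require the most care, since it must pair the mirror-descent energy analysis with the sensitivity of $\psicf{\q}$ near its zero level set.

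Finally, plugging the identified $\barbq$ into $\w_t = -\tfrac{1}{n}\sum_{j<t}\hetaj\,\Z^\top\diag{\y}\q_j \cdot(1+o(1))$, dividing by $\nnorm[2]{\w_t}$, and invoking $\Z=-\X$ yields the claimed expression $\barw = \X^\top\diag{\y}\barbq / \nnorm[2]{\X^\top\diag{\y}\barbq}$, completing the proof.
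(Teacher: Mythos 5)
First, be aware that the paper does not actually prove this lemma: it is imported verbatim from \cite[Theorem 5]{ji2021characterizing} (the only related derivation in the paper is the multiclass extension in Appendix~\ref{app_m_imp_bias}). So the comparison is against the cited proof. Your overall blueprint --- view gradient descent as mirror descent on $\p_t=\diag{\y}\Z\w_t$ with potential $\psi$, define the dual iterates $\q_t=\gpsif{\p_t}$, telescope $\w_t-\w_0=-\sumjt\hetaj\,\Z^\top\diag{\y}\q_j$, and get risk convergence from $\beta$-smoothness plus separability --- is exactly the architecture of that proof. (Minor slip: with the rescaled step size $\hetat:=\eta_t\ellpf{\psif{\p_t}}/n$, the recursion reads $\p_{t+1}=\p_t-\hetat\,\diag{\y}\Z\Z^\top\diag{\y}\,\q_t$; your version mixes $\eta_t$ with $\hetat$ and keeps $\ellpf{\p_t}$ in place of $\q_t$.)

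The genuine gap is in your Step 3, whose described mechanism is not the one that works. (i) Feasibility of the comparator set does not come from ``$\q_t$ lying on a shifted level set'': the actual mechanism is the implication $\psif{\p}\leq 0\Rightarrow\psicf{\gpsif{\p}}\leq 0$ (\cite[Lemma 6]{ji2021characterizing}, reproved for the multiclass case as Lemma~\ref{lem:m_ccsz_cond}), combined with primal convergence forcing $\psif{\p_t}\leq 0$ for all large $t$. (ii) The identification of the limit is not via KKT conditions of accumulation points of a dual average; it is a mirror-descent regret inequality: for every $\q$ with $\psicf{\q}\leq 0$ one shows $\hetat\pts{\ff{\q_{t+1}}-\ff{\q}}\leq \breg{\q}{\q_t}-\breg{\q}{\q_{t+1}}$, which telescopes to $\ff{\q_t}-\ff{\barbq}\leq \breg{\barbq}{\q_0}/\sumjt\hetaj\to 0$ and hence $\X^\top\diag{\y}\q_t\to\X^\top\diag{\y}\barbq$; a separate primal-side energy inequality then converts this into directional convergence of $\w_t$ (this is the content of the two parts of Lemma~\ref{lem:m_primal_dual_conv} and Lemma~\ref{lem:m_dual_imp_bias} in the multiclass appendix). (iii) You invoke the convexity of $\gf{\cdot}$ from Assumption~\ref{asm:b_loss_assump}, Part~\ref{cond:b_g_cond} in this step, but that strengthened assumption plays no role in Lemma~\ref{lem:b_imp_bias}; it enters only in the paper's new results beginning with Lemma~\ref{lem:b_imp_bias_relax}. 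As written, your Step 3 is a plausible-sounding but unsubstantiated plan for the one step that carries all the weight, so the proposal does not yet constitute a proof.
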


Consequently, a characterization of any 
solution $\barbq$ to the convex program~\eqref{eq:b_barq_def} defined in Lemma~\ref{lem:b_imp_bias} would directly characterize the desired primal implicit bias $\barw$.
Accordingly, our techniques and results largely focus on characterizing a suitable solution $\barbq$ to~\eqref{eq:b_barq_def}.


\paragraph{Minimum-norm interpolation:} We are especially interested in relating the primal implicit bias $\barw$ to the minimum-norm interpolation (MNI) $\wmni:=\XXXy$.
The MNI arises as the implicit bias of gradient descent applied to the square loss under a sufficiently small step size and initialization $\w_0 = \zero$~\citep{engl1996regularization}.
For example, it is easy to see that the candidate dual solution $\q := \diag{\y}\XXiy$ would correspond to a primal solution proportional to $\wmni$; we will utilize this candidate solution in our equivalence results.


\subsection{Main Results}\label{sec:binarymainresults}

The convex program defined in~\eqref{eq:b_barq_def} is challenging to directly work with and analyze.
This is primarily because the convex conjugate constraint $\psif[*]{\q}$ is in general an implicitly defined function on $\q$ (except for the exact exponential loss as shown in~\cite{ji2021characterizing}), and therefore its non-positivity can be difficult to verify.
To make progress, we present a simple but critical auxiliary convex program that recovers the same dual implicit bias solution in Lemma~\ref{lem:b_imp_bias_relax} that critically utilizes the convex function $\gf{\cdot}$ that we defined in Lemma~\ref{lem:g_func}.


\begin{lemma} \label{lem:b_imp_bias_relax}
    Under Assumptions~\ref{asm:b_loss_assump} and~\ref{asm:b_lin_sep}, any solution to the auxiliary convex program
    \begin{align} \label{eq:b_imp_bias_relax_eq}
        \barbq \in &\,\uargmin{\q\in\R^n}\usub{\ff{\q}}{\frac{1}{2}\q^\top\yXXy\q} \\
            \text{subject to} \myquad[1] &-q_i < 0 \myquad[1] \text{for all } i\in[n],
            \text{ and } \myquad[1] 1-\sumn \gf[-1]{q_i}  \leq 0\nonumber,
    \end{align}
    is also an optimal solution to the original convex program~\eqref{eq:b_barq_def}.
\end{lemma}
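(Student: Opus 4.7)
The plan is to prove the lemma by establishing two set-theoretic inclusions on the feasible regions that together transfer optimality from the relaxed program to the original. Specifically, I will show (i) every $\q$ with $\psif[*]{\q} \leq 0$ satisfies $q_i \geq 0$ for all $i$ and $\sumn \gf[-1]{q_i} \geq 1$, so the original feasible set is contained in (the closure of) the relaxed feasible set; and (ii) any minimizer $\barbq$ of the relaxed program is itself feasible for the original, i.e.\ $\psif[*]{\barbq}\leq 0$. Combined, (i) gives $\min f$ over the original feasible set $\geq \min f$ over the relaxed set, and (ii) shows this lower bound is attained at $\barbq$, so $\barbq$ is also optimal for the original.

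For step (i), I will use the Fenchel characterization $\psif[*]{\q} \leq 0 \iff \ip{\q}{\bxi} \leq \psif{\bxi}$ for every $\bxi \in \R^n$. The coordinate-wise non-negativity $q_i \geq 0$ follows by driving $\xi_i \to -\infty$ with the other coordinates fixed: $\psif{\bxi}$ stays bounded because $\lim_{\tninf{z}}\ellf{z}=0$, while $\ip{\q}{\bxi}\to +\infty$ whenever $q_i < 0$, a contradiction. For the sum constraint, I will substitute the one-parameter family $\xi_i^{(a)} := \ellinv\!\left(\gf[-1]{q_i}\,a\right)$ indexed by $a \to 0^+$. Under this choice, $\sumn \ellf{\xi_i^{(a)}} = T\,a$ with $T := \sumn \gf[-1]{q_i}$, so $\psif{\bxi^{(a)}} = \ellinv(Ta)$, and Fenchel's inequality specializes to $\sum_i q_i \ellinv(\gf[-1]{q_i}\,a) \leq \ellinv(Ta)$. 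Comparing the leading-order asymptotics on both sides via the defining property of $g$ in Assumption~\ref{asm:b_loss_assump}(\ref{cond:b_g_cond}) will force $T \geq 1$.

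For step (ii), I will appeal to KKT. The quadratic $\ff{\q} = \frac{1}{2}\q^\top \yXXy\, \q$ is strongly convex by Assumption~\ref{asm:b_lin_sep}, and the relaxed feasible set is convex because convexity of $g$ makes $\gf[-1]{\cdot}$ concave. Strong convexity yields a unique minimizer $\barbq$, and the sum constraint must be tight at the optimum, $\sumn \gf[-1]{\bar q_i} = 1$, because otherwise the scaling $\q \mapsto (1-\epsilon)\q$ (which satisfies $\ff{(1-\epsilon)\q} = (1-\epsilon)^2 \ff{\q} < \ff{\q}$) would strictly decrease the objective while preserving feasibility for sufficiently small $\epsilon>0$. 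Given $\sumn \gf[-1]{\bar q_i} = 1$ with $\bar q_i > 0$, I will then reverse the test-vector construction of (i): for any $\bxi \in \R^n$, I will parametrize $\ell(\xi_i) = s_i R$ with $\sum s_i = 1$ and use concavity of $\gf[-1]{\cdot}$ together with the active equality (Jensen's inequality against the weights $\gf[-1]{\bar q_i}$) to verify $\ip{\barbq}{\bxi} \leq \psif{\bxi}$, yielding $\psif[*]{\barbq}\leq 0$.

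The main obstacle is the asymptotic analysis in step (i) (and its mirror image in step (ii)): extracting the clean conclusion $T \geq 1$ from the scalar inequality $\sum_i q_i \ellinv(\gf[-1]{q_i}\,a) \leq \ellinv(Ta)$ as $a\to 0^+$ is delicate because both sides diverge to $-\infty$, so the constraint has to be read off from subleading asymptotics governed precisely by $\gf{1/b} = \lim_{a\to 0}\ellpellnof{a}/\ellpellnof{ab}$. Handling this limit rigorously and sidestepping degeneracies at coordinates with $q_i = 0$ (where the test vector $\xi_i^{(a)}$ becomes ill-defined and must be regularized away) is where the argument requires the most care.
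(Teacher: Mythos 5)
Your overall architecture matches the paper's (relax the program, then show the relaxed optimum is feasible for, hence optimal for, the original), but the two load-bearing steps are exactly the ones you defer, and in both cases the route you sketch is harder than necessary and does not clearly close. In step (i) you try to prove the blanket inclusion that \emph{every} $\q$ with $\psicf{\q}\leq 0$ satisfies $\sumn\gf[-1]{q_i}\geq 1$ via asymptotics of the Fenchel inequality as $a\to 0^+$. This is stronger than what is needed and is not what the assumptions readily give: $g$ is defined only as the limit $\gf{1/b}=\lim_{a\to 0}\ellpellnof{a}/\ellpellnof{ab}$, so it controls the tail behavior of $\nabla\psi$, not the value of $\sumn\gf[-1]{q_i}$ at a generic $\q=\gpsif{\p^*}$ with finite $\p^*$. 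The paper sidesteps this entirely: it only needs the \emph{specific} original optimum $\barbq=\lim_{t\to\infty}\q_t$ to be relaxed-feasible, and for that point primal convergence gives $\ellf{p_{t,i}}\to 0$, so the defining limit of $g$ applies exactly and yields $\barq_i=\gf{\alpha_i}$ with $\sum_i\alpha_i=1$, hence the exact identity $\sumn\gf[-1]{\barq_i}=1$ (Lemma~\ref{lem:b_q_dom}). The ``delicate subleading asymptotics'' you flag as the main obstacle are replaced by this identity and never need to be carried out for arbitrary feasible $\q$.

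In step (ii), your verification of $\psicf{\barbq}\leq 0$ by checking $\ip{\barbq}{\bxi}\leq\psif{\bxi}$ for \emph{all} $\bxi$ decouples the weights $s_i=\ellf{\xi_i}/\sum_j\ellf{\xi_j}$ from $\barbq$, and the resulting inequality $\sumn\barq_i\,\ellinv(s_iR)\leq\ellinv(R)$ is not a Jensen inequality and is not obviously true uniformly in $(s,R)$ (both sides diverge as $R\to 0^+$, and the sign of the gap depends on whether $\sumn\barq_i$ equals or falls below $1$). The paper's route is different and is where the convexity of $g$ actually enters: since $g$ is convex, increasing, with $\gf{0}=0$ and $\gf{1}=1$, one has $\gf[-1]{q}\geq q$, so the active constraint $\sumn\gf[-1]{\barq_i}=1$ implies $\sumn\barq_i\leq 1$; the conjugate is then evaluated only at the maximizing $\p^*$, where $p_i^*=\ellpinvf{\barq_i\ellpf{\psif{\p^*}}}\leq\psif{\p^*}$ gives $\psicf{\barbq}\leq\psif{\p^*}\left(\sumn\barq_i-1\right)\leq 0$ (Lemma~\ref{lem:b_ccsz_cond_rev}). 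The step $\gf[-1]{q}\geq q$ is absent from your proposal. Your scaling argument $\q\mapsto(1-\epsilon)\q$ for the tightness of the sum constraint is a clean elementary substitute for the paper's complementary-slackness argument, but without the two repairs above the proof does not close.
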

The full proof for Lemma~\ref{lem:b_imp_bias_relax} is contained in Appendix~\ref{sec:b_relaxproofs}.
%
The proof of Lemma~\ref{lem:b_imp_bias_relax} follows via a two-part argument. We first show that the convex conjugate constraint in the convex program~\eqref{eq:b_barq_def} must be active at optimality, which implies that $\psif[*]{\barbq}=0$.
We then demonstrate that the condition $\sumn \gf[-1]{q_i}=1$, derived from the Karush-Kuhn-Tucker (KKT)~\citep{karush1939minima} conditions for the auxiliary convex program~\eqref{eq:b_imp_bias_relax_eq}, is sufficient to ensure that $\psif[*]{\q}=0$. Next, we show that any solution to the original convex program~\eqref{eq:b_barq_def} also satisfies $\sumn \gf[-1]{q_i}=1$. Therefore, every solution to the auxiliary convex program~\eqref{eq:b_imp_bias_relax_eq} is also a solution to the original program~\eqref{eq:b_barq_def}. The idea is illustrated in Figure~\ref{fig:convex_prog}.

\begin{figure}
\centering 
\begin{minipage}[t]{0.45\textwidth}
 \includegraphics[width=70mm]{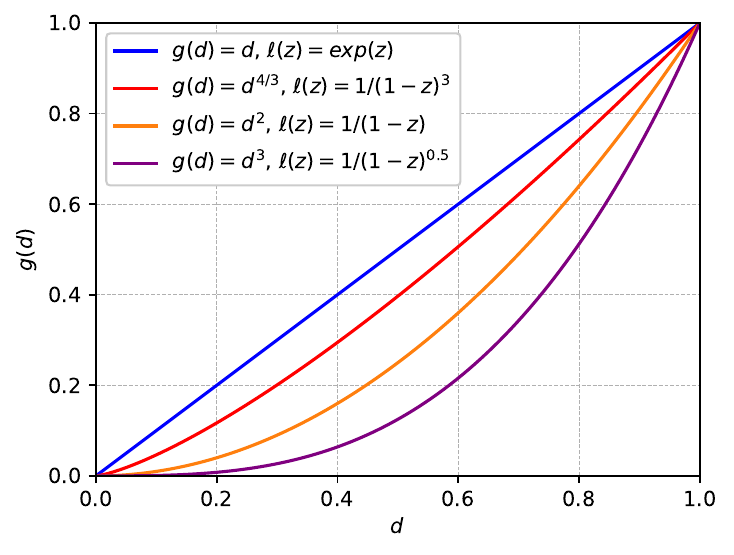}
 \caption{Plots of $g(\cdot)$ for different losses.} \label{fig:g_func}
\end{minipage}
\hfill
\begin{minipage}[t]{.45\textwidth}
 \includegraphics[width=65mm]{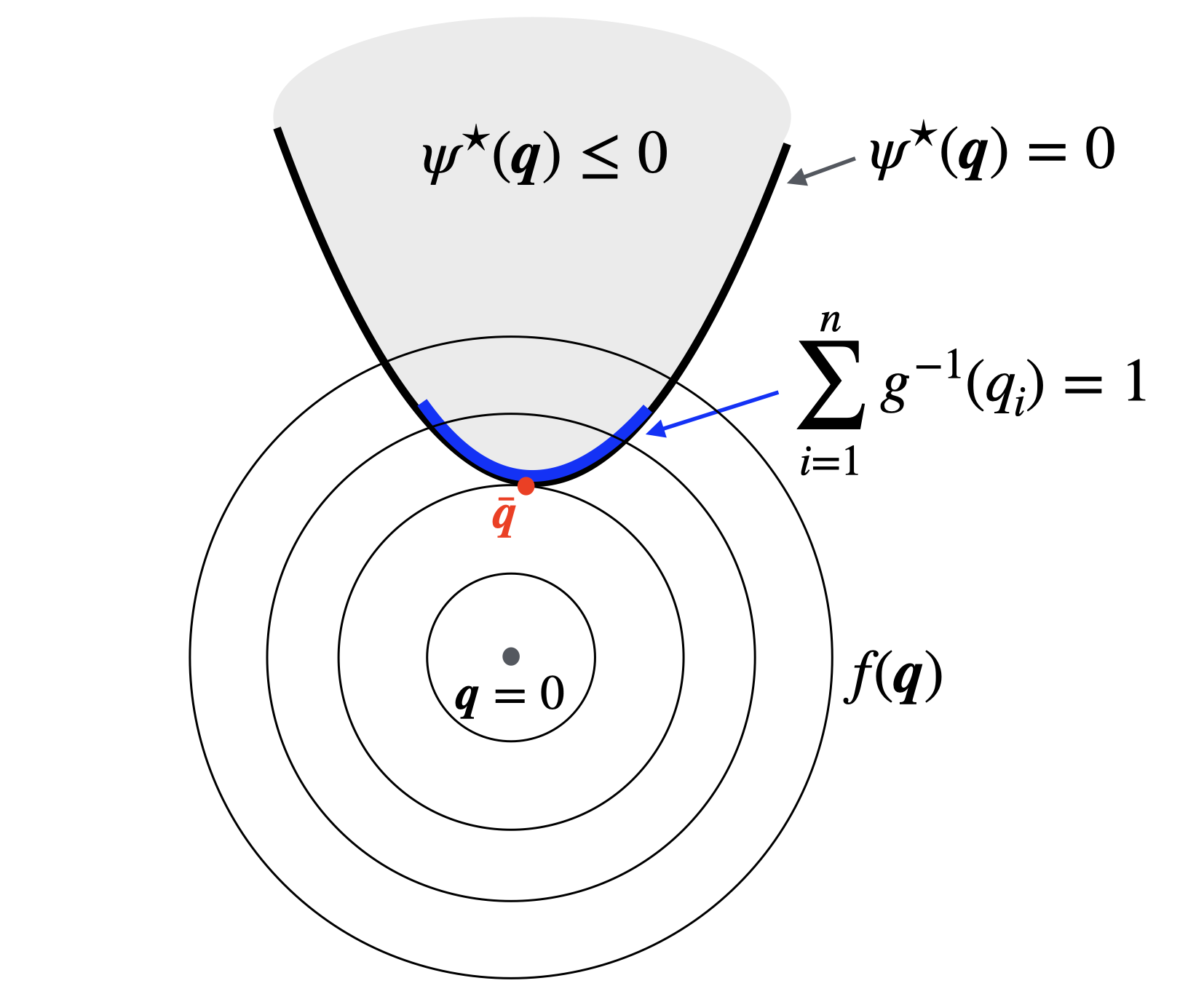}
 \caption{Illustration of the original convex program~\eqref{eq:b_barq_def} and how it relates to the auxiliary convex program~\eqref{eq:b_imp_bias_relax_eq}.} \label{fig:convex_prog}
\end{minipage}
\end{figure}

\subsubsection{Warm-up: Conditions for exact equivalence to MNI}\label{sec:binaryexact}

Although the auxiliary convex program in~\eqref{eq:b_imp_bias_relax_eq} is simpler to analyze, it still does not admit a closed-form solution in general.
We begin by providing a warm-up result characterizing settings under which~\eqref{eq:b_imp_bias_relax_eq} does admit a closed-form solution, which turns out to yield the MNI primal $\wmni$.


\begin{proposition} \label{thm:b_eqv_thm}
    Under Assumptions~\ref{asm:b_loss_assump} and \ref{asm:b_lin_sep}, the following statements hold:
    \begin{enumerate}
        \item If $\y$ is an exact eigenvector of $\XX$, the implicit bias $\barw$ is parallel to the MNI $\wmni$, i.e. $\normalize[2]{\barw} = \normalize[2]{\wmni}$.
        \item For any loss function that admits the identity function $g(d) = d$, the implicit bias $\barw$ is parallel to the MNI $\wmni$, i.e. $\normalize[2]{\barw} = \normalize[2]{\wmni}$ iff
        \begin{align}\label{eq:svmequivalence}
            \XX \succ \zero \text{ and } \bbeta:=\XXiy \text{ satisfies } y_i\beta_i>0 \text{ for all } i \in [n].
        \end{align} \label{thm:b_exp_eqv}
    \end{enumerate}
\end{proposition}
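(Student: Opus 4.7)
The plan is to apply Lemma~\ref{lem:b_imp_bias_relax} and work with the relaxed convex program~\eqref{eq:b_imp_bias_relax_eq} directly; since Assumption~\ref{asm:b_lin_sep} makes $\yXXy \succ \zero$, that program is strictly convex and its unique minimizer can be identified by exhibiting any feasible KKT point. For each part I will construct an explicit MNI-aligned candidate $\barbq$ and verify KKT, after which the implied primal $\barw \propto \X^\top\diag{\y}\barbq$ will be compared to $\wmni = \XXXy$ by direct substitution.

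For Part 1, write $\XX\y = \lambda\y$ with $\lambda>0$. I propose the flat candidate $\barbq = c\one$ with $c = g(1/n)$, chosen so that the constraint $\sumn \gf[-1]{q_i} = 1$ is active. The verification rests on two observations: first, $\yXXy\,\one = \diag{\y}\XX\y = \lambda\diag{\y}\y = \lambda\one$, using $y_i^2 = 1$ and the eigenvector assumption; and second, $[g^{-1}]'(c) > 0$ is a common positive scalar across coordinates. Stationarity then reduces to equating two positive multiples of $\one$, producing a multiplier $\lambda_0 = c\lambda/[g^{-1}]'(c) > 0$ on the sum constraint and $\mu_i = 0$ on the positivity constraints by complementary slackness (since $q_i = c > 0$). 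The resulting primal satisfies $\barw \propto \X^\top\diag{\y}\one = \X^\top\y \propto \X^\top(\XX)^{-1}\y = \wmni$, where the final proportionality reuses the eigenvector assumption.

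For Part 2 with $g(d) = d$ the relaxed program collapses to a strictly convex QP in $\q$ with constraints $q_i > 0$ and $\sumn q_i \geq 1$. For the ``if'' direction, I take $\barbq = c\,\diag{\y}\bbeta$ with $c = 1/\sumn y_i\beta_i$. A direct computation gives $\yXXy\,\barbq = c\diag{\y}\XX(\XX)^{-1}\y = c\one$, so stationarity holds with $\lambda_0 = c$ and $\mu = \zero$, while the hypothesis $y_i\beta_i > 0$ forces $c > 0$ and thus $\barbq > \zero$; the sum constraint is tight by construction. The primal then reads $\barw \propto \X^\top\diag{\y}\diag{\y}\bbeta = \X^\top\bbeta = \wmni$. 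For the ``only if'' direction, suppose $\barw \propto \wmni$. Then $\X^\top(\diag{\y}\barbq - c(\XX)^{-1}\y) = \zero$ for some scalar $c$, and since $\XX \succ \zero$ makes $\X$ have full row rank $n$ so that $\X^\top : \R^n \to \R^d$ is injective, we obtain $\barbq = c\,\diag{\y}\bbeta$; positivity of $\barbq$ then forces all coordinates of $\diag{\y}\bbeta$ to share one sign, and stationarity gives $c\one = \yXXy\,\barbq = \lambda_0\one$, so $\lambda_0 = c$, whence $\lambda_0 \geq 0$ pins the sign down as positive. The main subtlety is this last step of signing the Lagrange multiplier to complete the necessity argument; everything else is strictly convex bookkeeping, with Lemma~\ref{lem:b_imp_bias_relax} guaranteeing that each of our KKT candidates also satisfies the original (implicit) convex conjugate constraint $\psicf{\q} \leq 0$.
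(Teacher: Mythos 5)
Your proposal is correct and follows essentially the same route as the paper: both parts are proved by exhibiting the same candidate dual solutions ($\barbq = g(1/n)\one$ for Part 1, and $\barbq \propto \diag{\y}\XXiy$ for Part 2) and verifying the KKT conditions of the relaxed program~\eqref{eq:b_imp_bias_relax_eq}, then mapping back to the primal via $\barw \propto \X^\top\diag{\y}\barbq$. The one place you go beyond the paper is the ``only if'' direction of Part 2, which the paper's written proof does not actually carry out (it only verifies sufficiency of the candidate); your argument --- injectivity of $\X^\top$ on $\R^n$ under $\XX \succ \zero$ to force $\barbq \propto \diag{\y}\bbeta$, then strict positivity of the feasible $\barq_i$ together with the sign of the multiplier on the sum constraint (and the trivial exclusion of $c=0$) to pin down $y_i\beta_i > 0$ --- is a valid and welcome way to close that direction.
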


The full proof of Proposition~\ref{thm:b_eqv_thm} is provided in Appendix~\ref{sec:beqvthmproof} and works directly with the KKT conditions of the auxiliary convex program~\eqref{eq:b_imp_bias_relax_eq}.
We make a few remarks here about this proposition.
First, note that Part 2 of Proposition~\ref{thm:b_eqv_thm} recovers the sufficient and necessary condition for the equivalence between the SVM and the MNI, i.e. \emph{support-vector-proliferation} (SVP) originally studied in~\cite{muthukumar2021classification,hsu2021proliferation}.
This makes sense, as the class of loss functions that admits the identity function $g(d) = d$ corresponds to the class of \emph{exponentially-tailed losses}, which are well-known to generate implicit bias that is parallel to the SVM~\citep{soudry2018implicit}.
Next, note that the condition for general losses in Part 1 (that $\y$ is an exact eigenvector of $\XX$) is significantly stronger than the condition in Part 2 --- while $\y$ being an exact eigenvector of $\XX$ implies Eq.~\eqref{eq:svmequivalence}, the reverse implication does not hold.
We show in Proposition~\ref{prop:converseexact} in Section~\ref{sec:converse} that the exact-eigenvector condition is in fact necessary for any loss function that does not admit the identity function $g(d) = d$.
Finally, we informally remark on some sufficient conditions under which the exact-eigenvector condition would hold.
One easily verifiable case is when the Gram matrix is an exact multiple of the identity, as stated below.
\begin{corollary}\label{cor:identityexact}
If $\XX = \alpha \I$ for some $\alpha > 0$, then we have $\barbq\propto \one$ and $\barw \propto \wmni$ for any loss satisfying Assumption~\ref{asm:b_loss_assump}.
\end{corollary}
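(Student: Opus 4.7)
The plan is to apply the relaxed convex program from Lemma~\ref{lem:b_imp_bias_relax} directly. Under the hypothesis $\XX = \alpha\I$, since $y_i^2 = 1$ for every $i$, the quadratic form reduces as $\yXXy = \alpha\,\diag{\y}^2 = \alpha \I$, so the objective in~\eqref{eq:b_imp_bias_relax_eq} becomes $\ff{\q} = (\alpha/2)\nnorm[2]{\q}^2$, which is strictly convex. The feasible set $\{\q : q_i > 0 \text{ for all } i,\; \sumn \gf[-1]{q_i} \geq 1\}$ is invariant under coordinate permutations, since both constraints depend only on the unordered multiset of coordinate values of $\q$.

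The key step is to exploit this permutation symmetry. By strict convexity of the objective, the relaxed program admits a unique minimizer $\barbq$. For any permutation matrix $P$, the vector $P\barbq$ is also feasible and attains the same objective value, so by uniqueness $P\barbq = \barbq$ for every $P$, which forces $\barbq = c\,\one$ for some $c \geq 0$. To pin down $c$, I would argue that the constraint $\sumn \gf[-1]{q_i} \geq 1$ must be active at the optimum: otherwise the uniform rescaling $\q \mapsto (1-\epsilon)\q$ with $\epsilon > 0$ small would strictly decrease $\nnorm[2]{\q}^2$ while preserving feasibility (using continuity of $g^{-1}$, together with the fact that the strict inequalities $q_i > 0$ are preserved under scaling by $1-\epsilon$). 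Activity yields $n\,\gf[-1]{c} = 1$, i.e.\ $c = \gf{1/n} > 0$, giving $\barbq = \gf{1/n}\,\one \propto \one$.

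Plugging back into Lemma~\ref{lem:b_imp_bias} produces $\barw \propto \X^\top \diag{\y}\,\one = \X^\top \y$, which matches $\wmni = \XXXy = (1/\alpha)\,\X^\top \y$ up to a positive scalar; hence $\barw \propto \wmni$. The only delicate step in this plan is justifying activity of the nonlinear constraint at the optimum; the rest reduces to a symmetry argument plus strict convexity of a quadratic. As a sanity check, one can alternatively derive the $\barw \propto \wmni$ half of the claim by observing that $\y$ is trivially an eigenvector of $\XX = \alpha\I$ and invoking Part~1 of Proposition~\ref{thm:b_eqv_thm}, but the relaxed-program route has the advantage of simultaneously producing the explicit dual identity $\barbq \propto \one$.
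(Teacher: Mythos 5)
Your argument is correct, but it takes a genuinely different route from the paper's. The paper obtains the corollary as an immediate special case of Part~1 of Proposition~\ref{thm:b_eqv_thm}: when $\XX = \alpha\I$, every vector --- in particular $\y$ --- is an exact eigenvector, and the proof of that proposition simply posits the candidate $\barbq=\gf{\frac{1}{n}}\one$ and verifies all the KKT conditions of the relaxed program (with $\blambda=\zero$ and $\mu=\alpha\gf{\frac{1}{n}}/\gpf[-1]{\gf{\frac{1}{n}}}$). You instead extract $\barbq\propto\one$ from permutation symmetry of the relaxed program plus strict convexity of the objective $\frac{\alpha}{2}\nnorm[2]{\q}^2$, and fix the scale by a rescaling argument showing the constraint $1-\sumn\gf[-1]{q_i}\leq 0$ must be active; your activity step is the variational counterpart of the paper's observation in the proof of Lemma~\ref{lem:b_imp_bias_relax} that one must take $\mu>0$. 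Your route buys something the paper's verification does not state: uniqueness of the dual optimum (the paper only guarantees uniqueness of the primal direction $\X^\top\diag{\y}\barbq$), and it needs no guessed candidate; on the other hand it is confined to $\XX=\alpha\I$, since for a general eigenvector $\y$ the objective $\q^\top\yXXy\q$ is no longer permutation invariant, whereas the paper's KKT verification covers that more general case. Two points you should make explicit: (i) uniqueness from strict convexity requires convexity of the feasible set, which holds because $\gf[-1]{\cdot}$ is concave ($g$ being convex and increasing), so that the midpoint of $\barbq$ and $P\barbq$ remains feasible; and (ii) existence of a minimizer is not supplied by strict convexity alone --- it is inherited from Lemma~\ref{lem:b_imp_bias}, since every solution of the original program~\eqref{eq:b_barq_def} is feasible and optimal for the relaxation. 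The final identification $\barw\propto\X^\top\y=\alpha\,\wmni$ is the same in both approaches.
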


Corollary~\ref{cor:identityexact} describes a scenario that will not arise in practice, as in general the Gram matrix $\X$ will be random. \cite{muthukumar2020harmless} showed that the scenario $\XX = \alpha \I$ can, however, arise with data that is \emph{uniformly spaced} in conjunction with certain feature families.
Uniformly-spaced data models also appear in some pedagogical analyses of nonparametric statistics, as they often provide a simpler analysis as compared to random data~\citep{nemirovski2000topics,tsybakov2009nonparametric}.

\subsubsection{Main result: Approximate equivalence to MNI in high dimensions}\label{sec:binaryapprox}

We now turn to more realistic scenarios to handle random data.
In general, we only expect the Gram matrix to be \emph{close} to a multiple of the identity (in the sense that the operator norm of the difference $\nnorm[2]{\XX - \alpha \I}$ is typically controlled in high dimensions). This leads to whether the solution $\barw$ is now close in its direction to $\wmni$. 
Theorem~\ref{thm:b_sen_upperbound} below addresses this question.
\begin{theorem} \label{thm:b_sen_upperbound}
    Under Assumptions~\ref{asm:b_loss_assump} and \ref{asm:b_lin_sep},
    consider any value of $\alpha > 0$ satisfying\\ $\frac{\nnorm[2]{\XX-\alpha\I}}{\alpha} \leq \frac{1}{3}$.
    Then, the implicit bias $\barw$ converges in direction to $\wmni$ at the rate
    \begin{align}\label{eq:b_sen_upperbound}
    \nnorm[2]{\normalize[2]{\barw} - \normalize[2]{\wmni}} &\leq \frac{C \nnorm[2]{\XX \y - \alpha \y}}{\alpha \nnorm[2]{\y}},
    \end{align}
    where $C$ is a universal constant that does not depend on $\alpha,\X$ or $\y$.
\end{theorem}

Theorem~\ref{thm:b_sen_upperbound} shows that every loss function satisfying Assumption~\ref{asm:b_loss_assump} yields an approximately equivalent implicit bias in high dimensions. 
It also recovers Corollary~\ref{cor:identityexact} as a special case (as in this case the RHS of Eq.~\eqref{eq:b_sen_upperbound} becomes equal to $0$).

Before discussing how to prove Theorem~\ref{thm:b_sen_upperbound}, we describe a canonical high-dimensional statistical ensemble under which it implies directional convergence of the implicit bias $\barw$ to the MNI $\wmni$.
\begin{corollary}\label{cor:convergencehighdims}
    Assume independent and identically distributed data $\{\x_i,y_i\}_{i=1}^n$ such that each covariate satisfies one of the following: a) $\x_i \sim \mathcal{N}(\mathbf{0}, \boldsymbol{\Sigma})$, and we denote the spectrum of $\boldsymbol{\Sigma}$ by $\blambda$; \emph{or} b) $\x_i = \diag{\blambda}^{1/2} \z_i$, where $\z_i$ has independent entries such that each $z_{ij}$ is mean-zero, unit-variance, and sub-Gaussian with parameter $v > 0$ (i.e.~$\E[z_{ij}] = 0,\E[z_{ij}^2] = 1$, and $\E[e^{tz_{ij}}] \leq e^{vt^2/2}$ for all $t \in \R$).
    In both cases, define the effective dimensions $d_2 := \frac{\nnorm[1]{\blambda}^2}{\nnorm[2]{\blambda}^2}$ and $d_{\infty} := \frac{\nnorm[1]{\blambda}}{\nnorm[\infty]{\blambda}}$ and assume that $d_2 \gg v^2 n$ and $d_{\infty} \gg vn$.
    Then, Theorem~\ref{thm:b_sen_upperbound} implies that
    \begin{align*}
            \nnorm[2]{\normalize[2]{\barw} - \normalize[2]{\wmni}} &\leq C \cdot v \cdot \max\left\{ \sqrt{\frac{n}{d_2}}, \frac{n}{d_{\infty}} \right\},
    \end{align*}
    with probability at least $1 - 4e^{-cn}$, where $C, c> 0$ are appropriately chosen universal constants.
    This implies that $\nnorm[2]{\normalize[2]{\barw} - \normalize[2]{\wmni}}$ is vanishingly small for any high-dimensional ensemble $\{(n,d,\blambda)\}_{n \geq 1}$ satisfying $d_2 \gg v^2 n$ and $d_{\infty} \gg v n$.
\end{corollary}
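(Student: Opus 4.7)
The plan is to invoke Theorem~\ref{thm:b_sen_upperbound} with the natural choice $\alpha := \nnorm[1]{\blambda}$, which equals the common value of $\E[\x_i^\top \x_i]$ under this covariate model and makes $\E[\XX] = \alpha \I$. Under this choice the corollary reduces to a single high-probability operator-norm bound on the centered Gram matrix $\XX - \alpha \I$: indeed, $\nnorm[2]{(\XX - \alpha \I)\y} \leq \nnorm[2]{\XX - \alpha \I}\,\nnorm[2]{\y}$, so both the hypothesis $\nnorm[2]{\XX-\alpha\I}/\alpha \leq 1/3$ and the right-hand side of Eq.~\eqref{eq:b_sen_upperbound} are simultaneously controlled by the single scalar $\nnorm[2]{\XX-\alpha\I}/\alpha$.

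The core step is to show that with probability at least $1 - e^{-cn}$,
\begin{align*}
\nnorm[2]{\XX - \alpha \I} \;\lesssim\; v\,\nnorm[2]{\blambda}\sqrt{n} \;+\; v\,\nnorm[\infty]{\blambda}\,n.
\end{align*}
I would establish this via the Hanson-Wright inequality combined with a standard $\epsilon$-net argument on $S^{n-1}$. For a fixed unit vector $\bu \in \R^n$, setting $\w := \sum_i u_i \z_i \in \R^d$ produces a vector whose coordinates are mean-zero and sub-Gaussian with $\psi_2$-parameter of order $\sqrt{v}$ (since $\sum_i u_i^2 = 1$), and a direct computation yields
\begin{align*}
\bu^\top(\XX - \alpha \I)\bu \;=\; \sum_{j=1}^{d} \lambda_j (w_j^2 - 1).
\end{align*}
Hanson-Wright applied to this weighted chi-squared sum gives sub-exponential concentration with Gaussian scale $v\nnorm[2]{\blambda}$ and exponential scale $v\nnorm[\infty]{\blambda}$; taking the deviation $t$ equal to the right-hand side above makes the per-$\bu$ tail probability $e^{-cn}$, which is small enough to absorb the $e^{C'n}$ cardinality of a constant-radius net via a union bound. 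Standard net-to-operator-norm passage then yields the displayed estimate.

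Dividing this operator-norm bound by $\alpha = \nnorm[1]{\blambda}$ and re-expressing the two ratios in terms of the effective dimensions gives
\begin{align*}
\frac{\nnorm[2]{\XX - \alpha \I}}{\alpha} \;\lesssim\; v\sqrt{\tfrac{n}{d_2}} + v\cdot\tfrac{n}{d_\infty} \;\lesssim\; v \cdot \max\!\left\{\sqrt{\tfrac{n}{d_2}},\,\tfrac{n}{d_\infty}\right\}.
\end{align*}
Under the hypotheses $d_2 \gg v^2 n$ and $d_\infty \gg v n$, this is $\ll 1$, so for sufficiently large $n$ it falls below $1/3$ as required by Theorem~\ref{thm:b_sen_upperbound}; it also implies $\XX \succeq (\alpha - \nnorm[2]{\XX - \alpha \I})\I \succ \zero$, verifying Assumption~\ref{asm:b_lin_sep} on the same event. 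Substituting the bound into Eq.~\eqref{eq:b_sen_upperbound} then yields the claim, with the constants $C,c$ inherited from the Hanson-Wright inequality and the net cardinality.

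The main obstacle is really only the operator-norm concentration step, because the diagonal entries of $\XX - \alpha \I$ are coupled quadratic forms $\sum_j \lambda_j (z_{ij}^2 - 1)$ while the off-diagonal entries are decoupled forms $\sum_j \lambda_j z_{ij} z_{kj}$, and handling both uniformly requires care. Passing through the single quadratic-form representation $\bu^\top(\XX - \alpha \I)\bu$ circumvents this by treating them simultaneously inside one application of Hanson-Wright. Once this uniform bound is in hand, the rest of the corollary is a direct substitution into Theorem~\ref{thm:b_sen_upperbound}.
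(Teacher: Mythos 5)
Your proposal is correct and follows essentially the same route as the paper: choose $\alpha := \nnorm[1]{\blambda}$, reduce both the hypothesis $\nnorm[2]{\XX-\alpha\I}/\alpha \leq 1/3$ and the bound in Eq.~\eqref{eq:b_sen_upperbound} to a single operator-norm concentration statement for $\XX - \alpha\I$ at scale $v\nnorm[2]{\blambda}\sqrt{n} + v\nnorm[\infty]{\blambda}n$, and substitute. The only difference is that you re-derive that concentration inequality from scratch via the quadratic-form representation $\bu^\top(\XX-\alpha\I)\bu = \sum_j \lambda_j(w_j^2-1)$, Hanson--Wright, and a $9^n$-cardinality net, whereas the paper simply invokes it as \cite[Lemma 8]{hsu2021proliferation} (which is itself proved by essentially the argument you sketch).
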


The proof of Corollary~\ref{cor:convergencehighdims} is in Appendix~\ref{sec:corhighdimsproof} and applies the operator norm concentration inequality of~\cite[Lemma 8]{hsu2021proliferation} (which in turn uses a volume argument from~\cite{pisier1999volume}).
The corollary demonstrates the role of a sufficiently high-dimensional ensemble in ensuring that the implicit bias from a general convex loss eventually converges, in a directional sense, to the MNI.
As a special case, consider the \emph{isotropic} high-dimensional ensemble for which $\blambda = \one$ and $v = 1$.
Here, we have $d_2 = d_{\infty} = d$, and the required effective dimension conditions reduce to $d \gg n$.
\cite{hsu2021proliferation} shows that when $d_2 \gg v^2 n$ and $d_{\infty} \gg v n \log n$, the stronger phenomenon of SVP would occur\footnote{The careful reader might notice that the SVP result has an extra $\log n$ factor in the required condition on the effective dimension $d_{\infty}$, that in fact turns out to be necessary~\citep{ardeshir2021support}.
There is no contradiction with our results, because SVP describes a stronger phenomenon of \emph{exact} equivalence that holds even when $n$ and $d$ are finite, as opposed to our directional convergence result, which only gives exact asymptotic equivalence as $(n,d) \to \infty$.
}, working from the condition in Proposition~\ref{thm:b_eqv_thm} Part 2. The anisotropic Gaussian or independent sub-Gaussian model for covariates considered in Corollary~\ref{cor:convergencehighdims} does not directly cover certain high-dimensional ensembles for which conditions for SVP have been characterized; in particular, mixture models~\citep{wang2022binary,wang2021benign,cao2021risk}.
We believe that results similar to Corollary~\ref{cor:convergencehighdims} can also be established for these cases.

\begin{figure}[ht]
\centering 
\noindent\begin{subfigure}[b]{.45\textwidth}
 \includegraphics[width=65mm]{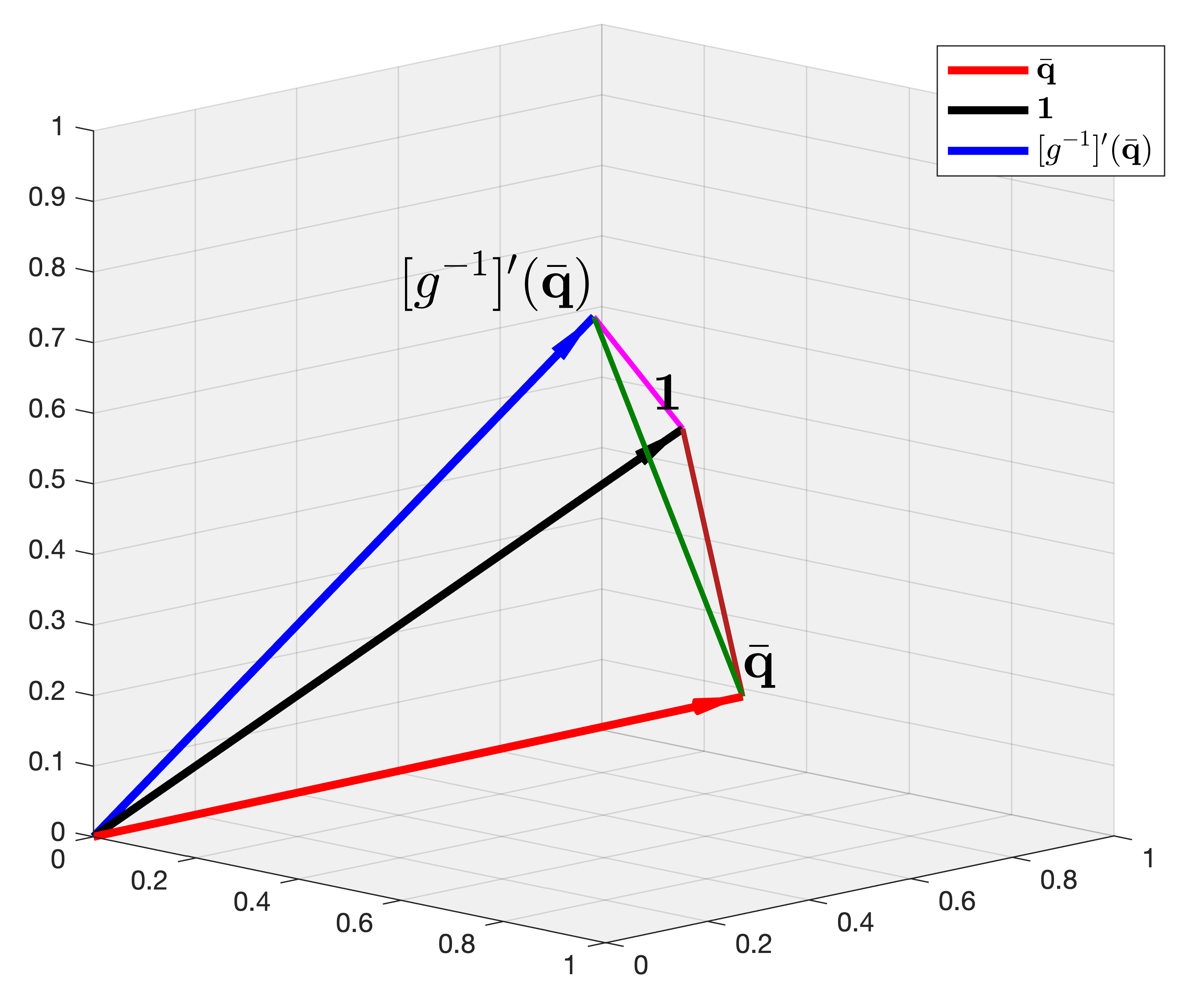}
 \caption{Proof idea of Theorem~\ref{thm:b_sen_upperbound}}\label{fig:q_hq}
\end{subfigure}
\noindent\begin{subfigure}[b]{.45\textwidth}
 \includegraphics[width=75mm]{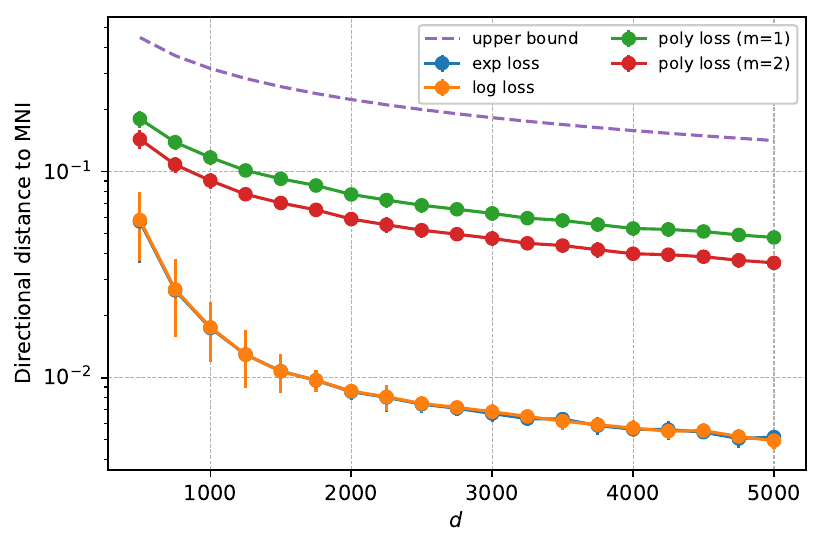}
 \caption{Simulation for binary classification}\label{fig:binary_exp}
\end{subfigure}
\caption{Panel (a) illustrates the relationship between the vectors $\barbq$, $\gpf[-1]{\barbq}$ and $\one$ for the loss function $\ellf{z}=1/\pts{1-z}$. Panel (b) is a simulation that compares the implicit bias of gradient descent to the MNI. The covariate-response pairs $\seqr{\x_i, y_i}{i=1}{n}$ are independently and identically distributed (IID) with a fixed sample size $n = 100$ and varying data dimension $d$, where $\x_i$ is isotropic Gaussian and $y_i$ is uniformly distributed in $\{\pm 1\}$. Gradient descent is run for the minimum of $10^3$ iterations or when the empirical risk falls below $10^{-12}$. The results demonstrate that the directional distance to the MNI is upper bounded by the theoretical guarantee in Theorem~\ref{thm:b_sen_upperbound}. For exponentially-tailed loss functions, exact convergence to the MNI is not observed, as it only occurs when the number of iterations of gradient descent is infinite. Each experiment is repeated over 100 independent trials.}
\end{figure}

\paragraph{Proof sketch for Theorem~\ref{thm:b_sen_upperbound}:} The full proof of Theorem~\ref{thm:b_sen_upperbound} is in Appendix~\ref{sec:proofofsensitivity}.
We divide the proof in four steps.
\textbf{In Step 1,} we begin with the auxiliary convex program~\eqref{eq:b_imp_bias_relax_eq}, and determine necessary characteristic equations for the solution $\barbq$; in particular, we show that it is \emph{necessary} for $\barbq$ to solve the system of nonlinear equations $\XX \diag{\y} \barbq = \mu \diag{\y} \gpf[-1]{\barbq}$ for some $\mu > 0$.
\textbf{In Step 2,} we use the relative closeness (in an operator-norm sense) of $\XX$ to a multiple of $\I$ to show that the nonlinear equation above implies that the vectors $\barbq$ and $\gpf[-1]{\barbq}$ are close in a directional sense in Eq.~\eqref{b_sen_inter_res3}.

Next, \textbf{Step 3} proves a simple but non-trivial observation which, as pictured in Figure~\ref{fig:q_hq}, states that the vector $\one$ is \emph{in between} the vectors $\barbq$ and $\gpf[-1]{\barbq}$ in Eq.~\eqref{eq:dualconvergenceqy} (implying that its angle with either of the vectors is smaller than the angle between $\barbq$ and $\gpf[-1]{\barbq}$).
The proof of this observation critically uses the convexity of $g(\cdot)$ which turns out to lead to an application of Chebyshev's sum inequality~\citep{hardy1952inequalities} to complete the desired argument.
Steps 1, 2 and 3 together give a rate on the directional convergence of the \emph{dual optimal solution} $\barbq$ to $\one$ in Eq.~\eqref{eq:dualconvergencefinal}.

The final \textbf{Step 4} uses the primal-dual relationship in Eq.~\eqref{eq:dual-primal-t} to show that the primal convergence rate is identical to the dual convergence rate up to universal constant factors and is proved through a series of algebraic manipulations which repeatedly utilize the operator-norm concentration of $\XX$ around $\alpha \I$.


\paragraph{Loss functions satisfying Assumption~\ref{asm:b_loss_assump}:} \label{sec:b_losses} We conclude this section with a brief discussion of popular loss functions that satisfy Assumption~\ref{asm:b_loss_assump}, and to which Proposition~\ref{thm:b_eqv_thm} and Theorem~\ref{thm:b_sen_upperbound} are therefore applicable.
These loss functions are also discussed in~\cite[Sec. 5]{ji2021characterizing}.
\begin{proposition} \label{prop:b_loss_func}
    Assumption~\ref{asm:b_loss_assump} is satisfied by the following losses with the corresponding values of the function $\gf{\cdot}$ provided:
    \begin{align*}
        \text{Exponential loss: } \ells{\expt} &:= \expf{z}\text{, } \gsf[\expt]{d} = d \\
        \text{Logistic loss: } \ells{\logt}&:=\lnf{1+\expf{z}}\text{, } \gsf[\logt]{d} = d \\
        \text{Polynomial loss (degree $m > 0$): }  \ellsf[\polyt]{z} &:=\cond{\frac{1}{\pts{1-z}^m}}{z\leq0}{\frac{1}{\pts{1+z}^m} + 2mz}{z>0}{, \;} \gsf[\polyt]{d} = d^{\frac{m+1}{m}}.
    \end{align*}
\end{proposition}

The proof of Proposition~\ref{prop:b_loss_func} is provided in Appendix~\ref{sec:b_loss_proof}.
A plot of the function $g(\cdot)$ that underlies each loss is given in Figure~\ref{fig:g_func}.
Note that $g(\cdot)$ that deviate more from $g(d) = d$ are heavier-tailed, that the furthest pictured such function corresponds to the purple line $g(d) = d^3$ for the polynomial loss with degree $m = 0.5$.





\section{Approximate equivalences for multiclass classification}\label{sec:m_approx_eq}
We now turn to the multiclass setting and consider a labeled dataset $\seqr{\x_i, y_i}{i=1}{n}$, where $\x_i \in \R^d$ and $y_i \in [K]$. We assume there is at least one example in each class. For each class $k$, we assign a weight vector $\wk\in\R^d$.  We denote as shorthand $\X=\vct{\x_1}{\x_n} \in \R^{n \times d}$, and an $n$-dimensional encoding of the multiclass labels $\bc_k(\alpha,\beta)=\vct{c_{k,1}(\alpha,\beta)}{c_{k,n}(\alpha,\beta)}\in\R^n$, where $\cki(\alpha,\beta)=\cond{\alpha}{k=y_i}{-\beta}{k\neq y_i}, \text{ for all } \alpha,\beta>0$ for all $i\in[n]$ and $k\in[K]$. 
(We frequently omit the arguments $(\alpha,\beta)$ and simply write $\bc_k$ when the values of $\alpha$ and $\beta$ are clear from context.)

We assume w.l.o.g. that $\nnorm[2]{\x_i}\leq\umax{k\in[K]}|\cki|$.
We concatenate the weight vector $\W\in\R^{Kd}$, data matrix $\tX\in\R^{Kn\times Kd}$ and label matrix across classes $\C\in\R^{Kn\times Kn}$ as below:
\begin{align*}
    \W=
    \left[
    \begin{matrix}
        \w_1    \\
        \vdots  \\
        \wk    \\
   \end{matrix}
   \right],\;
    \tX=
    \left[
    \begin{matrix}
        \X      &\cdots &\zero      \\
        \vdots  &\ddots &\vdots      \\
        \zero       &\cdots &\X     \\
   \end{matrix}
   \right],\;
   \C=
    \left[
    \begin{matrix}
        \diagt{\ico}     &\cdots &\zero      \\
        \vdots  &\ddots &\vdots      \\
        \zero       &\cdots &\diagt{\icK}     \\
   \end{matrix}
   \right].
\end{align*}
We focus on an unbounded, unregularized ERM problem with a linear classifier:
\begin{align}
    \umin{\W\in\R^{Kd}} \Riskf{\W}:=\frac{1}{n}\sumn \multilossf{-\seqr{\icki\ip{\wk}{\x_i}}{k=1}{K}}=\frac{1}{n}\sumn \multilossf{\seqr{\icki\ip{\wk}{\z_i}}{k=1}{K}},
\end{align}
where we denote $\z_i:=-\x_i$, and therefore $\Z:=-\X\in\R^{n\times d}$ and $\tZ=-\tX\in\R^{Kn\times Kd}$. Next, we introduce different variants of the multiclass loss function, which we denote by $\multiloss$.

\begin{assumption}[One-vs-all multiclass loss] \label{asm:m_ova_loss_assump} 
    The multiclass loss function satisfies
    \begin{align*}
        \multilossf{\seqr{\icki\ip{\wk}{\z_i}}{k=1}{K}} &= \sumk \ellf{\icki\ip{\wk}{\z_i}}
    \end{align*}
    where $\ell$ follows Assumption~\ref{asm:b_loss_assump} Parts 1, 2 and 3. Additionally, given $\xik \in \R^n$ and $\bXi=\vct{\xio^\top}{\xiK^\top}\in\R^{Kn}$, we define $\Lossf{\bXi}:=\sumn\multilossf{\xikis}$
    and $\psif{\bXi}:=\ellf[-1]{\Lossf{\bXi}}$, where $\psi$ is jointly convex and $\beta$-smooth with respect to the $\nm{\infty}$ norm.
\end{assumption}

Our framework is able to handle general losses satisfying Assumption~\ref{asm:b_loss_assump} under the popular one-vs-all framework.
Finally, we treat the popular cross-entropy loss, which is a generalization of the binary logistic loss, separately.
\begin{assumption}[Cross-entropy loss] \label{asm:m_ce_loss_assump}
    The loss function $\multiloss$ satisfies
    \begin{align*}
        \multilossf{\seqr{\icki\ip{\wk}{\z_i}}{k=1}{K}} &= -\ln\Biggl(\frac{\exp\bigl(\ip{\wyi}{\x_i}\bigl)}{\sumk \exp\bigl(\ip{\wk}{\x_i}\bigl)}\Biggl)\\
        &=\ln\Biggl(1+\sumkny\exp\bigl(\cyi\bigl(\icyi\ip{\wyi}{\z_i}\bigl)-\cki\bigl(\icki\ip{\w_{k}}{\z_i}\bigl)\bigl)\Biggl).
    \end{align*}
    Given $\xik \in \R^n$, $\bXi=\vctt{\xio^\top}{\xiK^\top}\in\R^{Kn}$, and $\ellf{z}=\lnf{1+\expf{z}}$, we define
    \begin{align*}
        \Lossf{\bXi}:&=\sumn \multilossf{\xikis} = \sumn\ln\Biggl(1+\sumkny \expf{\cyi\xiyi-\cki\xiki}\Biggl)\\
        \text{and} \;\; \psif{\bXi}:&=\ellf[-1]{\Lossf{\bXi}},
    \end{align*}
    where $\psi$ is individually convex with respect to each $\xik$, and $\beta$-smooth with respect to $\nm{\infty}$ norm.
\end{assumption}

For the loss functions that satisfy Assumption~\ref{asm:m_ova_loss_assump}, we use
the “equal assignment” encoding of the labels, $\alpha=\beta=1$; for cross-entropy loss under Assumption~\ref{asm:m_ce_loss_assump}, we use the ``simplex representation" encoding of the labels~\citep{lee2004multicategory,wang2021benign} with $\alpha=\frac{K-1}{K}$ and $\beta=\frac{1}{K}$.
In Appendix~\ref{app_m_conv_beta} we show that the properties of convexity and $\beta$-smoothness of $\psi$ carry over to the multiclass case; interestingly, we can only prove \emph{individual convexity} for cross-entropy loss under Assumption~\ref{asm:m_ce_loss_assump}.

\paragraph{Multiclass minimum-norm interpolation:} Analogous to the case of binary labels, we define the minimum-norm interpolator (MNI) of multiclass labels as $\Wmni \coloneqq \XXXc$ where $\bc_k$ is a specific encoding of the multiclass labels as defined at the beginning of this section.
Specifically, gradient descent run with the square loss on labels encoded with the ``equal assignment" choice $\alpha = \beta = 1$ would result in what we call the \emph{one-vs-all MNI}, given by $\Wova \coloneqq \vctt{\wovao^\top}{\wovaK^\top} \in \R^{Kd}$ where $\wovak \coloneqq \X^\top \XXi \ck(1,1)$.
Similarly, gradient descent run with the square loss on labels encoded with the ``simplex representation" $\alpha=\frac{K-1}{K}$ and $\beta=\frac{1}{K}$ would result in what we call the \emph{simplex MNI}, given by $\Wsim \coloneqq \vctt{\wsimo^\top}{\wsimK^\top} \in \R^{Kd}$ where $\wsimk \coloneqq \X^\top (\XX)^{-1} \ck\left(\frac{K-1}{K},\frac{1}{K}\right)$.

\subsection{Main results}\label{sec:multiclassmainresults}

First, we extend the primal-dual framework from~\cite{ji2021characterizing} to the multiclass case. We again use gradient descent to solve this unregularized ERM problem with initialization $\W_0$ and update rule: $\Wto:=\Wt-\eta_t \gRiskf{\Wt}$ for $t\geq0$. We denote, in the context of mirror-descent analysis, the ``primal'' $\Pt:=\C\tZ\Wt=\vctt{\pto^\top}{\ptK^\top}\in\R^{Kn}$, and its corresponding ``dual'' $\Qt:=\gpsif{\Pt}=\vctt{\qto^\top}{\qtK^\top}\in\R^{Kn}$, where $\ptk=\diagt{\ick}\Z\wtk \in \R^n$ and $\qtk=\gpsif[\ptk]{\Pt} \in \R^n$ for all $k\in[K]$ and $t\geq0$. 
%
This concatenated representation together with Assumption~\ref{asm:m_ova_loss_assump}, (or~\ref{asm:m_ce_loss_assump}) and Assumption~\ref{asm:b_lin_sep} ensure that the setup is identical to that of~\cite{ji2021characterizing}.
Therefore, we can directly apply their primal-dual result, which we restate below in our notation specific to the multiclass setting.

\begin{lemma} \label{lem:m_imp_bias_res}
    Under Assumption~\ref{asm:m_ova_loss_assump}, (or~\ref{asm:m_ce_loss_assump}) and \ref{asm:b_lin_sep}, when all $t$ with $\psi\bigl(\C\tZ\Wt\bigl)\leq0$, and iteration of gradient descent goes to infinity with $\heta_t=\eta_t\ell'\bigl(\psi\bigl(\C\tZ\Wt\bigl)\bigl)/n\leq1/\beta$ is nonincreasing and $\sumt\hetat=\infty$, we have the implicit bias $\barW:=\ulim{\tinf{t}}\normalize[2]{\Wt}=\normalize[2]{\tX^\top\C\barQ}$, where
    \begin{align} \label{eq:m_barQ_def}
        \barQ\in\uargmin{\psicf{\Q}\leq0}\Ff{\Q}, \text{ and } \Ff{\Q}:=\frac{1}{2}\nnorm[2]{\tX^\top\C\Q}^2
    \end{align}
\end{lemma}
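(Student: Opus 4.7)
The plan is to reduce the multiclass primal-dual characterization to a direct invocation of~\cite[Theorem 5]{ji2021characterizing} via the concatenated representation $(\tZ,\C,\W)$, treating Assumption~\ref{asm:m_loss_assump1} and Assumption~\ref{asm:m_loss_assump2} in parallel and flagging a small extra step for the cross-entropy case.

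First, I would show that under either multiclass assumption the ERM rewrites in concatenated form as $n\Riskf{\W} = \Lossf{\C\tZ\W}$, so by the chain rule the gradient descent update becomes $\Wto = \Wt - (\eta_t/n)\,\tZ^\top\C\,\nabla\Lossf{\C\tZ\Wt}$. Setting $\Pt := \C\tZ\Wt$ and $\Qt := \gpsif{\Pt}$ exhibits the same mirror-descent primal-dual pair used in~\cite{ji2021characterizing}, with $\tZ$ and $\C$ playing the structural roles of $\Z$ and $\diag{\y}$ from the binary setup. The prescribed rescaled step size $\hetat = \eta_t\,\ellpf{\psif{\Pt}}/n$ also matches the schedule imposed there verbatim.

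Second, I would verify the structural hypotheses of~\cite[Theorem 5]{ji2021characterizing} in this concatenated setting: (i) convexity (or individual convexity, in the cross-entropy case) and $\beta$-smoothness of $\psi$ with respect to $\nm{\infty}$ are imposed by Assumptions~\ref{asm:m_loss_assump1}--\ref{asm:m_loss_assump2} and verified in Appendix~\ref{app_m_conv_beta}; (ii) the tail and derivative properties of $\ell$ carry over from Assumption~\ref{asm:b_loss_assump} Parts 1--3; (iii) Assumption~\ref{asm:m_lin_sep} supplies per-class separators $\bu_k$ whose concatenation $\bu = \vct{\bu_1}{\bu_K}$ satisfies $-\C\tZ\bu > \zero$ entrywise, because $\cki = \alpha > 0$ when $k = y_i$ and $\cki = -\beta < 0$ otherwise flips the sign pattern of $\ip{\bu_k}{\x_i}$ correctly. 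Directly applying~\cite[Theorem 5]{ji2021characterizing} then yields $\barW = \ulim{\tinf{t}}\normalize[2]{\Wt} = \normalize[2]{\tX^\top\C\barQ}$ with $\barQ$ characterized by the convex program~\eqref{eq:m_barQ_def}.

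The main obstacle is the cross-entropy case under Assumption~\ref{asm:m_loss_assump2}, where $\psi$ is only \emph{individually} convex in each block $\xik$ rather than jointly convex on $\R^{Kn}$. I would revisit the Ji et al. proof to check that convexity of $\psi$ enters only through (a) a descent-lemma estimate controlling $\psif{\Pto} - \psif{\Pt}$ along the gradient trajectory and (b) the Fenchel-conjugate characterization $\psicf{\Q} \leq 0$ in the dual. For (a), because each gradient step on $\wkt$ moves only the $k$-th block of $\Pt$, the descent lemma can be obtained blockwise from individual convexity combined with the global $\beta$-smoothness. For (b), I would use the specific softmax-over-classes structure of the cross-entropy loss to show that $\gpsif{\cdot}$ remains a well-defined map onto the relevant dual domain and that $\seq{\Q : \psicf{\Q} \leq 0}$ is the correct closed convex feasibility set. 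I expect this to go through cleanly but spelling it out requires a short standalone argument separate from the reduction used for Assumption~\ref{asm:m_loss_assump1}.
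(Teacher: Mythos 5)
Your overall strategy is the same as the paper's: rewrite the multiclass ERM in the concatenated form $(\tX,\C,\W)$, check that the setup matches~\cite{ji2021characterizing}, invoke their Theorem~5, and patch the cross-entropy case where $\psi$ is only individually convex. However, there are two concrete gaps. First, you do not identify the one ingredient of the Ji--Telgarsky argument that genuinely must be re-proved for \emph{both} multiclass assumptions: the feasibility implication $\psif{\bP}\leq 0 \implies \psicf{\gpsif{\bP}}\leq 0$ (the analog of their Lemma~6), which is what guarantees the dual iterates eventually enter the constraint set $\seq{\Q:\psicf{\Q}\leq 0}$. This cannot be inherited by the reduction because the generalized sum changes form in the multiclass setting --- it is $\ellf[-1]{\sumn\ellf{\sumk\xiki}}$ under Assumption~\ref{asm:m_loss_assump1} and has the softmax structure under Assumption~\ref{asm:m_loss_assump2}. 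The paper proves it via super-additivity of $\sigmaf{s}:=\ellpellnof{s}\ellf[-1]{s}$ on $\pts{0,\ellf{0}}$ (Lemma~\ref{lem:g_sup_add}) and, for cross-entropy, a separate chain of inequalities exploiting $\ellpellnof{z}=\frac{\expf{z}-1}{\expf{z}}$. Your item (b) gestures at "the correct feasibility set" but names neither the statement nor a mechanism for proving it, and you restrict the concern to cross-entropy only.

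Second, your diagnosis of where joint convexity enters is partly misplaced. The descent-lemma estimate on $\psif{\Pto}-\psif{\Pt}$ requires only $\beta$-smoothness, not convexity, so your blockwise argument in item (a) is solving a non-problem. The place joint convexity is actually used in~\cite{ji2021characterizing} is the conjugate-duality step $\Q\in\dom{\psi^*}\implies\Q=\gpsif{\bP^*}$ for a maximizer $\bP^*$ (via Rockafellar's Theorem~23.5), which is needed in their Lemma~4 and Theorem~5 Part~1. The paper's Lemma~\ref{lem:m_psi_ind_conv} shows this forward implication survives under individual convexity (the reverse implication fails but is not needed). Your proposal would need to replace item (a) with this observation to close the cross-entropy case.
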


We provide the details of this proof, which is mostly an extension of~\cite{ji2021characterizing}, in Appendix~\ref{app_m_imp_bias}.
One subtlety is that we were only able to establish individual convexity in $\psi$ for cross-entropy loss in Assumption~\ref{asm:m_ce_loss_assump}.
Lemma~\ref{lem:m_psi_ind_conv} shows that this is sufficient to recover Lemma~\ref{lem:m_imp_bias_res}, and joint convexity is only required to prove the tightness of the convergence rates in~\cite{ji2021characterizing}.

We now present the main results of this section. We first show that for any multiclass loss satisfying Assumption~\ref{asm:m_ova_loss_assump}, the implicit bias solution $\barW$ is approximately close to the one-vs-all MNI $\Wova$. This result is analogous to Theorem~\ref{thm:b_sen_upperbound} which we proved for the binary case.
\begin{theorem} \label{thm:m_sen_upperbound}
    Under Assumptions~\ref{asm:m_ova_loss_assump} and \ref{asm:b_lin_sep},
    consider any value of $\alpha > 0$ satisfying\\ $\frac{\nnorm[2]{\XX-\alpha\I}}{\alpha} \leq \frac{1}{3}$.
    Then, for every class $k \in [K]$, the implicit bias $\barw_k$ converges in direction to $\wovak$ at the rate:
    \begin{align}\label{eq:m_sen_upperbound}
    \nnorm[2]{\normalize[2]{\barw_k} - \normalize[2]{\wovak}} &\leq \frac{C\nnorm[2]{\XX \ck - \alpha \ck}}{\alpha\nnorm[2]{\ck}}, \qquad
    \end{align}
    where $C$ is a universal constant that does not depend on $\alpha$, $\X$ or $\ck$.
\end{theorem}
The proof of Theorem~\ref{thm:m_sen_upperbound} is provided in Appendix~\ref{sec:m_sen_upperbound_proof} and is a simple extension of the proof of Theorem~\ref{thm:b_sen_upperbound}.
We now state a corollary (analogous to Corollary~\ref{cor:convergencehighdims}) showing that the canonical high-dimensional ensembles that admit directional convergence in probability of the implicit bias to the MNI on binary labels also do so for the one-vs-all MNI on one-hot-encoded labels.
\begin{corollary}\label{cor:m_convergencehighdims}
    Assume independent and identically distributed data $\{\x_i,y_i\}_{i=1}^n$ such that each covariate satisfies one of the following: a) $\x_i \sim \mathcal{N}(\mathbf{0}, \boldsymbol{\Sigma})$, and we denote the spectrum of $\boldsymbol{\Sigma}$ by $\blambda$; \emph{or} b) $\x_i = \diag{\blambda}^{1/2} \z_i$, where $\z_i$ has independent entries such that each $z_{ij}$ is mean-zero, unit-variance, and sub-Gaussian with parameter $v > 0$ (i.e.~$\E[z_{ij}] = 0,\E[z_{ij}^2] = 1$, and $\E[e^{tz_{ij}}] \leq e^{vt^2/2}$ for all $t \in \R$).
    In both cases, define the effective dimensions $d_2 := \frac{\nnorm[1]{\blambda}^2}{\nnorm[2]{\blambda}^2}$ and $d_{\infty} := \frac{\nnorm[1]{\blambda}}{\nnorm[\infty]{\blambda}}$ and assume that $d_2 \gg v^2 n$ and $d_{\infty} \gg vn$.
    Then, Theorem~\ref{thm:m_sen_upperbound} implies that for each class $k \in [K]$, we have
    \begin{align*}
            \nnorm[2]{\normalize[2]{\barw_k} - \normalize[2]{\wovak}} &\leq C \cdot v \cdot \max\left\{ \sqrt{\frac{n}{d_2}}, \frac{n}{d_{\infty}} \right\},
    \end{align*}
    with probability at least $1 - 4e^{-cn}$, where $C, c> 0$ are appropriately chosen universal constants.
    This implies that $\nnorm[2]{\normalize[2]{\barw_k} - \normalize[2]{\wovak}}$ is vanishingly small for any high-dimensional ensemble $\{(n,d,\blambda)\}_{n \geq 1}$ satisfying $d_2 \gg v^2 n$ and $d_{\infty} \gg v n$.
\end{corollary}
The proof of Corollary~\ref{cor:m_convergencehighdims} is identical to the proof of Corollary~\ref{cor:convergencehighdims}, only with $\y$ replaced by $\ck$; therefore, we omit the details.

The next theorem shows an \emph{exact} equivalence to the simplex MNI for cross-entropy loss under Assumption~\ref{asm:m_ce_loss_assump}. This result is the multiclass analog of Proposition~\ref{thm:b_eqv_thm} Part 2.

%

\begin{theorem} \label{thm:m_exp_ce_imp_bias} Under Assumption~\ref{asm:m_ce_loss_assump}, the implicit bias is parallel to the simplex MNI $\Wsim$ iff $\XX \succ \zero$ and  $\bbeta_k:=\XXi\ck$ satisfies $\cki\beta_{k,i}>0$ for all $i\in[n]$ and $k\in[K]$. 
\end{theorem}

The proof of Theorem~\ref{thm:m_exp_ce_imp_bias} is in provided in Appendix~\ref{sec:m_ce_imp_proof}.
Note that Theorem~\ref{thm:m_exp_ce_imp_bias} recovers the exact equivalence condition of~\cite{wang2021benign} without using the intermediate multiclass SVM formulation of the implicit bias primal.
Interestingly, the convex programs on $\barbq_k$ for all $k\in[K]$ that are formulated in the proof of Theorem~\ref{thm:m_exp_ce_imp_bias} already contains the novel equality constraints that~\cite{wang2021benign} were only able to obtain after applying a non-trivial transformation to the multiclass SVM dual variables.
This suggests that the mirror-descent dual is the more natural dual to analyze in the multiclass case.

\begin{figure}[ht]
\centering 
\noindent\begin{subfigure}[b]{.45\textwidth}
 \includegraphics[width=68mm]{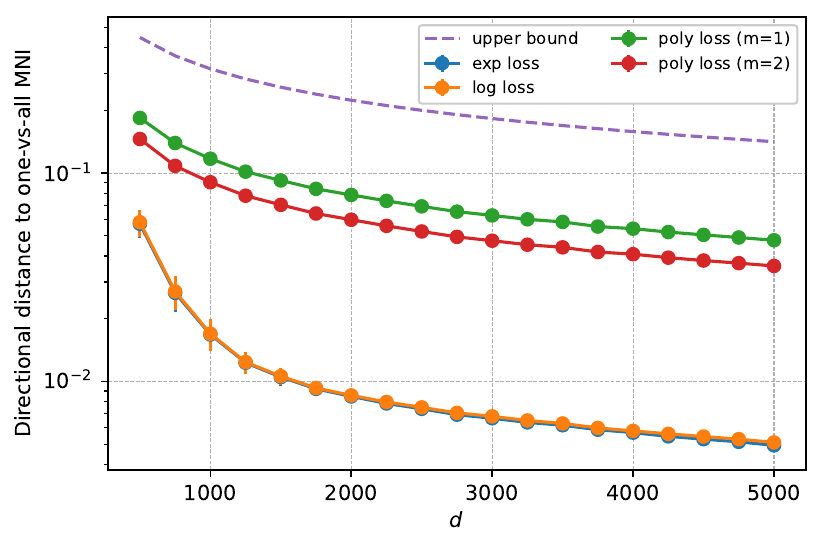}
 \caption{Simulation for multiclass classification using one-vs-all losses (Assumption~\ref{asm:m_ova_loss_assump}).}\label{fig:ova}
\end{subfigure}
\hfill
\noindent\begin{subfigure}[b]{.45\textwidth}
 \includegraphics[width=69mm]{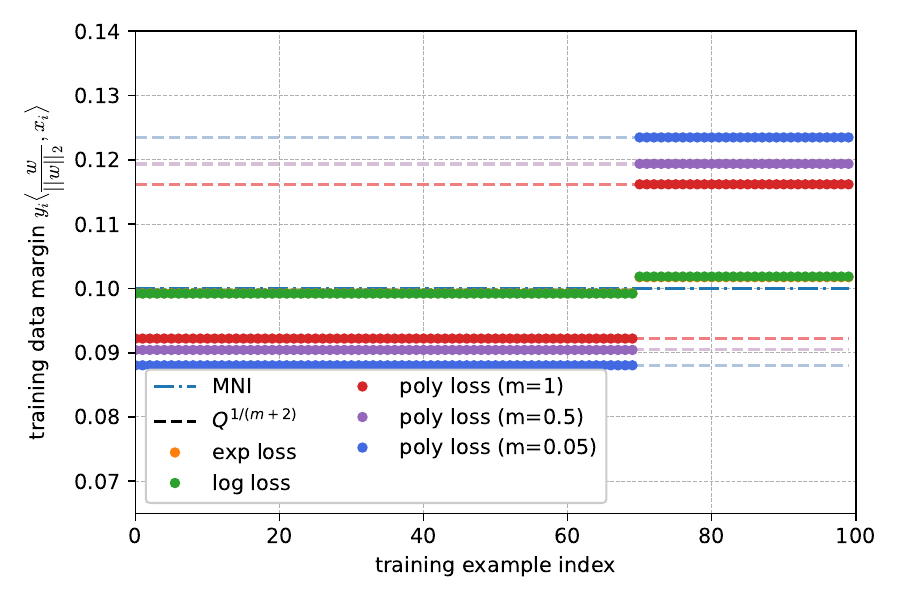}
 \caption{Simulation for importance weighting on different loss functions (Corollary~\ref{cor:ood_interpolation}).
 }\label{fig:margin}
\end{subfigure}
\caption{Panel (a) compares the implicit bias of gradient descent to the one-vs-all MNI. The results demonstrate that the directional distance to the MNI is upper bounded by the theoretical guarantee in Theorem~\ref{thm:m_sen_upperbound}, analogous to the binary case. The simulation setup is the same as Figure~\ref{fig:binary_exp} with $K=5$ classes, and labels drawn uniformly at random in $[K]$. Panel (b) visualizes the \emph{normalized} training data margins induced by importance weighting on different loss functions in Corollary~\ref{cor:ood_interpolation}. We consider the idealized assumption $\XX = \I$ with $n=100$ and $d=5000$. The first $70$ examples are \emph{majority examples} and labeled as $y_i=+1$, and the rest of the $30$ examples are \emph{minority examples} labeled as $y_i=-1$. Note that we apply the importance weighting factor $Q=2.0$ only to the minority examples. We run gradient descent on different loss functions for a minimum of $10^4$ iterations, or when the empirical risk falls below $10^{-12}$. As predicted by Corollary~\ref{cor:ood_interpolation}, the margins of exponentially-tailed losses are not impacted by importance weighting and are almost intact to those of the MNI, but polynomially-tailed losses interpolate adjusted labels to different extents depending on the value of $m$.  In Appendix~\ref{app:simulation}, we provide corresponding simulations on random data.}

\end{figure}

\section{A converse result}\label{sec:converse}

We now show that the condition for exact equivalence in Proposition~\ref{thm:b_eqv_thm} is necessary.
%
For conciseness, we consider binary labels, but these proofs can easily be extended to the multiclass case.
\begin{proposition}\label{prop:converseexact}
Consider any loss function that satisfies Assumption~\ref{asm:b_loss_assump} with a strictly convex function $g(d) \neq d$.
Define $\hf{d}:=\gpf[-1]{d}$ and $f(d) := \frac{\hf{d}}{d}$.
Then, the following statements are true about the optimal solution $\barbq$ to the dual convex program~\eqref{eq:b_barq_def}:
\begin{enumerate}
    \item If $\y$ is not an exact eigenvector of $\XX$, then at least two of the entries in $\barbq$ need to be distinct, i.e. $\barbq$ cannot be parallel to $\one$; therefore, $\barw$ is not parallel to $\wmni$.
    \item If $\XX = \D = \diag{\dbold}$, the primal solution $\barw$ interpolates the adjusted labels $\tilde{y}_i = y_i d_i \cdot f^{-1} \left(\frac{d_i}{\mu}\right)$, where $\mu > 0$ is any solution to the equation $\sum_{i=1}^n g^{-1}\Bigl(f^{-1}\left(\frac{d_i}{\mu}\right)\Bigl) = 1$.
\end{enumerate}
\end{proposition}

Proposition~\ref{prop:converseexact} is proved in Appendix~\ref{sec:converseexactproof} and also utilizes the relaxed convex program of Lemma~\ref{lem:b_imp_bias_relax}.
The proposition shows that the condition for exact equivalence in Eq.~\eqref{eq:svmequivalence} only applies to the implicit bias of exponentially-tailed losses, which satisfy Assumption~\ref{asm:b_loss_assump} with the identity mapping $g(d) = d$.
Moreover, Part 1 of Proposition~\ref{thm:b_eqv_thm} is a sufficient \emph{and} necessary condition for exact equivalence for any non-exponential loss with a non-identity mapping $g(d) \neq d$.
Part 2 of Proposition~\ref{prop:converseexact} provides explicit counterexamples in the form of Gram matrices $\XX = \D$ that can easily be verified to satisfy the SVM equivalence condition  $y_i (\XX)^{-1} \y \succ 0$, yet, induce a very different solution from the MNI that interpolates labels adjusted differently \emph{per training example}.
To drive home this point, we use Proposition~\ref{prop:converseexact} to characterize the impact of the \textbf{importance weighting procedure} with polynomial losses.
This procedure, parameterized by a subset of underrepresented examples $S \subset [n]$ and weight $Q > 1$ and applied with a loss function $\ell(\cdot)$, minimizes the weighted risk $\Riskf{\w;(Q,S)} := \frac{1}{n} \sumn Q^{\Ind[i \in S]} \cdot \ellf{-y_i\ip{\w}{\x_i}}$. 
Recently,~\cite{wang2021importance} proposed applying this procedure with polynomial losses to address OOD generalization.

\begin{corollary}\label{cor:ood_interpolation}
Consider the idealized data matrix $\XX = \alpha \I$ for some $\alpha > 0$, as in Corollary~\ref{cor:identityexact}.
Then, importance weighting with a polynomial loss of degree $m$ leads to implicit bias $\barw$ that interpolates per-example-adjusted labels $\tilde{y_i} \propto Q^{\frac{1}{m+2} \cdot \Ind[i \in S]} y_i$.
We call the implicit bias $\barw$ the cost-sensitive MNI.
\end{corollary}

Corollary~\ref{cor:ood_interpolation} is proved in Appendix~\ref{sec:oodinterpolationproof} and implies that importance weighting with polynomial losses will interpolate labels that are larger in magnitude on minority points.
As shown in~\citep{kini2021label,behnia2022avoid}, this type of \emph{cost-sensitive interpolation} is provably beneficial for OOD generalization.
Since $Q > 1$, heavier-tailed polynomial losses (corresponding to smaller values of $m$) lead to a stronger importance-weighting effect. In Figure~\ref{fig:margin}, we illustrate how different loss functions influence the training data margins (and also the interpolated adjusted labels) with an identical choice of importance weighting $Q$. This visualization clearly demonstrates that heavier-tailed losses (e.g. smaller values of $m$ in the polynomially-tailed loss) increase the margin on minority examples. 
Interestingly, we also observe a corresponding slight \emph{decrease} in the margin on majority examples.
This is because we normalized the training data margins (i.e. use the normalized weights of the linear model $\normalize[2]{\w}$) in order to provide a fair comparison of the directional differences between solutions.
Appendix~\ref{app:simulation} shows that similar patterns manifest on randomly generated data, for which Corollary~\ref{cor:ood_interpolation} does not apply.
One can compare this interpolation to that induced by the \textbf{vector-scaling} (VS-loss)~\citep{ye2020identifying,kini2021label}, defined as a per-example loss function $\ellsf[\vst]{z_i;(Q,S)}:= \ln\Bigl(1+\expf{Q^{\Ind[i \in S]} z_i}\Bigl)$.
\cite{behnia2022avoid} shows\footnote{This result is also recoverable in our framework, although we omit the details for brevity.} that in our high-dimensional regime, this will lead to cost-sensitive interpolation of the adjusted labels $\tilde{y}_i \propto Q^{\Ind[i \in S]} y_i$, which is in fact a stronger interpolation effect. Finally, we present converse results on multiclass data that are analogous to Proposition~\ref{prop:converseexact} and Corollary~\ref{cor:ood_interpolation} respectively.
\begin{proposition}\label{prop:m_converseexact}
Consider any loss function that satisfies Assumption~\ref{asm:m_ova_loss_assump} with a strictly convex function $g(d) \neq d$.
Define $\hf{d}:=\gpf[-1]{d}$ and $f(d) := \frac{\hf{d}}{d}$.
Then, the following statements are true about the optimal solution $\barbq_k$ for $k\in[K]$ to the dual convex program~\eqref{eq:m_barQ_def}:
\begin{enumerate}
    \item If $\ck$ is not an exact eigenvector of $\XX$, then at least two of the entries in $\barbq_k$ need to be distinct, i.e. $\barbq_k$ cannot be parallel to $\one$; therefore, $\barw_k$ is not parallel to $\wovak$.
    \item If $\XX = \D = \diag{\dbold}$, the primal solution $\barw_k$ interpolates the adjusted labels $\tilde{\bc}_{k,i} = \cki d_i \cdot f^{-1} \left(\frac{d_i}{\mu}\right)$ for each $k\in[K]$, where $\mu > 0$ is any solution to the equation $\sum_{i=1}^n\sumk g^{-1}\Bigl(f^{-1}\left(\frac{d_i}{\mu}\right)\Bigl) = 1$.
\end{enumerate}
\end{proposition}
\begin{corollary}\label{cor:m_ood_interpolation}
Consider the idealized data matrix $\XX = \alpha \I$ for some $\alpha > 0$.
Then, importance weighting with a polynomial loss of degree $m$ leads to implicit bias $\barw_k$ that interpolates per-example-adjusted labels $\tilde{\bc}_{k,i} \propto Q^{\frac{1}{m+2} \cdot \Ind[\{k,i\} \in S]} \cki$ for each $k\in[K]$.

\end{corollary}
The proof of Proposition~\ref{prop:m_converseexact} is identical to the proof of Proposition~\ref{prop:converseexact}, and the proof of Corollary~\ref{cor:m_ood_interpolation} is identical to the proof of Corollary~\ref{cor:ood_interpolation}, since they analyze the same characteristic equation -- Eq.\eqref{b_con_sen_charc_1} in the binary case and Eq.\eqref{m_sen_charc_1} in multiclass case with $\y$ replaced by $\ck$ for each class $k \in [K]$. Therefore, we omit the details.



\subsection{Lower bound on directional convergence between \texorpdfstring{$\barbq$}{q} and  \texorpdfstring{$\one$}{1}}\label{sec:converseapprox}

The preceding Proposition~\ref{prop:converseexact} addressed the question of tightness of our \emph{exact} equivalence theorem (Theorem~\ref{thm:b_eqv_thm}).
This section addresses whether we can obtain a lower bound on the approximation error that matches Theorem~\ref{thm:b_sen_upperbound}.
We show that we can obtain a lower bound on the approximation error for loss functions with homogeneous function $\hf{z}$ such that, in some sense, ``matches" our upper bound.

\begin{proposition}\label{prop:b_sen_lowerbound}
Consider any loss function $\ell(z)$ satisfying Assumption~\ref{asm:b_loss_assump}, and additionally assume that its corresponding function $\hf{q}:=\gpf[-1]{q}$ is a homogeneous function, i.e. $\hf{ab}=a^{\gamma}\hf{b}$ for $a,b\geq0$ and $\gamma\in\R$. 
Further, assume that $\nnorm[2]{\normalize[2]{\barbq}-\nov} \leq \frac{\delta}{\sqrt{n}}$ for some $\delta \in (0,1)$.
Then, the dual implicit bias is lower bounded (in its directional distance from the dual-MNI) as:
\begin{align}\label{eq:b_sen_lowerbound}
        \nnorm[2]{\normalize[2]{\barbq}-\nov} &\geq \frac{1}{2\sqrt{n}}\min_{\alpha > 0} \min\left\{\frac{\nnorm[2]{\XX\y-\alpha\y}}{k\alpha}, \frac{\nnorm[2]{\XX\y-\alpha\y}}{\nnorm[2]{\XX}} \right\},
\end{align}
where $k=\max\pts{{\frac{\bhf{1-\delta}-1}{\delta}, \frac{1-\bhf{1+\delta}}{\delta}}}$, and $\bhf{z}=a\hf{z}$ for some $a>0$ such that $\bhf{1}=1$.
\end{proposition}

Proposition~\ref{prop:b_sen_lowerbound} is proved in Appendix~\ref{sec:converseapproxproof}. We first remark on the sense in which Eq.~\eqref{eq:b_sen_lowerbound} is tight with respect to the upper bound in Theorem~\ref{thm:b_sen_upperbound}.
If the best value of $\alpha$ is one for which $\nnorm[2]{\XX} \leq \frac{4\alpha}{3}$ (which is the assumption made in Theorem~\ref{thm:b_sen_upperbound}), then the lower bound becomes $\min\left\{\frac{\nnorm[2]{\XX\y-\alpha\y}}{2k\alpha \nnorm[2]{\y}}, \frac{3\nnorm[2]{\XX\y-\alpha\y}}{8\alpha \nnorm[2]{\y}}\right\}$, which matches the upper bound (Eq.~\eqref{eq:b_sen_upperbound}) up to the constant factor $k$.
Next, we briefly comment on the extra assumptions appearing in the proposition, starting with the assumption of homogeneity on $\hf{q}$.
In particular, the special case of polynomial loss has $\hf{q}=\frac{m}{m+1}q^{\frac{-1}{m+1}}$ which is a homogeneous function; therefore, Proposition~\ref{prop:b_sen_lowerbound} applies. 
We also comment on the requirement that  $\nnorm[2]{\normalize[2]{\barbq}-\nov} \leq \frac{\delta}{\sqrt{n}}$ for some $\delta \in (0,1)$.
Note that Corollary~\ref{cor:convergencehighdims} directly implies that this condition would be satisfied w.h.p. if $d_2\gg v^2n^2$ and $d_\infty \gg vn^{\frac{3}{2}}$; i.e.~under a very high-dimensional regime.
We believe that the extra $\frac{1}{\sqrt{n}}$ factor in the upper bound above is not required, and could be removed if one were able to show that all entries of the directional error vector $\normalize[2]{\barbq}-\nov$ were within constant factors of one another.
Showing this (and, relatedly, providing tight upper and lower bounds on the $\ell_{\infty}$-directional error) is an important direction for future work.
Finally, we present a corollary (analogous to Proposition~\ref{prop:b_sen_lowerbound}) that lower bounds the approximation error for multiclass losses under Assumption~\ref{asm:m_ova_loss_assump}.
\begin{corollary}\label{cor:m_ova_sen_lowerbound}
Consider any multiclass loss function satisfying Assumption~\ref{asm:m_ova_loss_assump}, and additionally assume that its corresponding function $\hf{q}:=\gpf[-1]{q}$ is a homogeneous function, i.e. $\hf{ab}=a^{\gamma}\hf{b}$ for $a,b\geq0$ and $\gamma\in\R$. 
Further, assume that $\nnorm[2]{\normalize[2]{\barbq_k}-\nov} \leq \frac{\delta}{\sqrt{n}}$ for some $\delta \in (0,1)$ for all $k\in[K]$.
Then, the dual implicit bias for each class $k$ is lower bounded (in its directional distance from the dual-MNI) as:
\begin{align}\label{eq:m_ova_sen_lowerbound}
        \nnorm[2]{\normalize[2]{\barbq_k}-\nov} &\geq \frac{1}{2\sqrt{n}}\min_{\alpha > 0} \min\left\{\frac{\nnorm[2]{\XX\ck-\alpha\ck}}{t\alpha}, \frac{\nnorm[2]{\XX\ck-\alpha\ck}}{\nnorm[2]{\XX}} \right\},
\end{align}
where $t=\max\pts{{\frac{\bhf{1-\delta}-1}{\delta}, \frac{1-\bhf{1+\delta}}{\delta}}}$, and $\bhf{z}=a\hf{z}$ for some $a>0$ such that $\bhf{1}=1$.
\end{corollary}

The proof of Corollary~\ref{cor:m_ova_sen_lowerbound} is identical to the proof of Proposition~\ref{prop:b_sen_lowerbound}, as it analyzes the same characteristic equation---Eq.\eqref{b_con_sen_charc_1} in the binary case and Eq.\eqref{m_sen_charc_1} in multiclass case with $\y$ replaced by $\ck$ for each class $k \in [K]$. Therefore, we omit the details.

\section{Discussion}
Our results show that once we move away from the exponentially-tailed family of losses, general losses exhibit a variety of influence on the eventual solution, with similarities for ``in-distribution"-oriented loss functions but differences for ``out-of-distribution"-oriented loss functions.
We believe that these results show the potential of the primal-dual framework to study closed-form properties of the implicit bias.
It would be interesting to provide similar closed-form characterizations for the implicit bias of other optimization algorithms and/or for nonlinear models.
Specific to linear models and gradient descent, there are still many open questions.
Based on converse results in~\cite{hsu2021proliferation,ardeshir2021support} for exponential losses, the effective overparameterization conditions in Corollary~\ref{cor:convergencehighdims} appear necessary for asymptotic directional convergence of the implicit bias to MNI. However, whether Theorem~\ref{thm:b_sen_upperbound} provides the optimal rate of convergence (beyond the partial converse result in Proposition~\ref{prop:b_sen_lowerbound}) is unclear.
Also, of interest is whether it is possible to obtain results similar to Propositions~\ref{thm:b_eqv_thm} and Theorem~\ref{thm:b_sen_upperbound} under even fewer assumptions on losses, such as in~\citep{ji2020gradient,bartlett2006convexity}.
Finally, we are interested in using these closed-form characterizations to obtain tight non-asymptotic bounds on the test risk.


\section*{Acknowledgements}

We gratefully acknowledge the support of the NSF (through CAREER award CCF-2239151 and award IIS-2212182), an Adobe Data Science Research Award, an Amazon Research Award and a Google Research Colabs award.

\newpage
\printbibliography

@article{bartlett2006convexity,
  title={Convexity, classification, and risk bounds},
  author={Bartlett, Peter L and Jordan, Michael I and McAuliffe, Jon D},
  journal={Journal of the American Statistical Association},
  volume={101},
  number={473},
  pages={138--156},
  year={2006},
  publisher={Taylor \& Francis}
}

@article{zhang2004statistical,
  title={Statistical behavior and consistency of classification methods based on convex risk minimization},
  author={Zhang, Tong},
  journal={The Annals of Statistics},
  volume={32},
  number={1},
  pages={56--85},
  year={2004},
  publisher={Institute of Mathematical Statistics}
}

@article{lugosi2004bayes,
  title={On the Bayes-risk consistency of regularized boosting methods},
  author={Lugosi, G{\'a}bor and Vayatis, Nicolas},
  journal={The Annals of statistics},
  volume={32},
  number={1},
  pages={30--55},
  year={2004},
  publisher={Institute of Mathematical Statistics}
}

@article{steinwart2005consistency,
  title={Consistency of support vector machines and other regularized kernel classifiers},
  author={Steinwart, Ingo},
  journal={IEEE transactions on information theory},
  volume={51},
  number={1},
  pages={128--142},
  year={2005},
  publisher={IEEE}
}

@article{bartlett1998boosting,
  title={Boosting the margin: A new explanation for the effectiveness of voting methods},
  author={Bartlett, Peter and Freund, Yoav and Lee, Wee Sun and Schapire, Robert E},
  journal={The Annals of Statistics},
  volume={26},
  number={5},
  pages={1651--1686},
  year={1998},
  publisher={Institute of Mathematical Statistics}
}

@article{bartlett2002rademacher,
  title={Rademacher and Gaussian complexities: Risk bounds and structural results},
  author={Bartlett, Peter L and Mendelson, Shahar},
  journal={The Journal of Machine Learning Research},
  volume={3},
  number={Nov},
  pages={463--482},
  year={2002}
}

@article{zhang2021understanding,
  title={Understanding deep learning (still) requires rethinking generalization},
  author={Zhang, Chiyuan and Bengio, Samy and Hardt, Moritz and Recht, Benjamin and Vinyals, Oriol},
  journal={Communications of the ACM},
  volume={64},
  number={3},
  pages={107--115},
  year={2021},
  publisher={ACM New York, NY, USA}
}

@article{neyshabur2014search,
  title={In search of the real inductive bias: On the role of implicit regularization in deep learning},
  author={Neyshabur, Behnam and Tomioka, Ryota and Srebro, Nathan},
  journal={arXiv preprint arXiv:1412.6614},
  year={2014}
}

@article{hui2020evaluation,
  title={Evaluation of neural architectures trained with square loss vs cross-entropy in classification tasks},
  author={Hui, Like and Belkin, Mikhail},
  journal={arXiv preprint arXiv:2006.07322},
  year={2020}
}

@article{kline2005revisiting,
  title={Revisiting squared-error and cross-entropy functions for training neural network classifiers},
  author={Kline, Douglas M and Berardi, Victor L},
  journal={Neural Computing \& Applications},
  volume={14},
  pages={310--318},
  year={2005},
  publisher={Springer}
}

@inproceedings{golik2013cross,
  title={Cross-entropy vs. squared error training: a theoretical and experimental comparison.},
  author={Golik, Pavel and Doetsch, Patrick and Ney, Hermann},
  booktitle={Interspeech},
  volume={13},
  pages={1756--1760},
  year={2013}
}

@article{janocha2017loss,
  title={On loss functions for deep neural networks in classification},
  author={Janocha, Katarzyna and Czarnecki, Wojciech Marian},
  journal={arXiv preprint arXiv:1702.05659},
  year={2017}
}

@article{sagawa2019distributionally,
  title={Distributionally robust neural networks for group shifts: On the importance of regularization for worst-case generalization},
  author={Sagawa, Shiori and Koh, Pang Wei and Hashimoto, Tatsunori B and Liang, Percy},
  journal={arXiv preprint arXiv:1911.08731},
  year={2019}
}

@article{mansour2008domain,
  title={Domain adaptation with multiple sources},
  author={Mansour, Yishay and Mohri, Mehryar and Rostamizadeh, Afshin},
  journal={Advances in Neural Information Processing Systems},
  volume={21},
  year={2008}
}

@article{cao2019learning,
  title={Learning imbalanced datasets with label-distribution-aware margin loss},
  author={Cao, Kaidi and Wei, Colin and Gaidon, Adrien and Arechiga, Nikos and Ma, Tengyu},
  journal={Advances in Neural Information Processing Systems},
  volume={32},
  year={2019}
}

@article{menon2020long,
  title={Long-tail learning via logit adjustment},
  author={Menon, Aditya Krishna and Jayasumana, Sadeep and Rawat, Ankit Singh and Jain, Himanshu and Veit, Andreas and Kumar, Sanjiv},
  journal={arXiv preprint arXiv:2007.07314},
  year={2020}
}

@article{kini2021label,
  title={Label-imbalanced and group-sensitive classification under overparameterization},
  author={Kini, Ganesh Ramachandra and Paraskevas, Orestis and Oymak, Samet and Thrampoulidis, Christos},
  journal={Advances in Neural Information Processing Systems},
  volume={34},
  pages={18970--18983},
  year={2021}
}

@article{wang2021importance,
  title={Is importance weighting incompatible with interpolating classifiers?},
  author={Wang, Ke Alexander and Chatterji, Niladri S and Haque, Saminul and Hashimoto, Tatsunori},
  journal={arXiv preprint arXiv:2112.12986},
  year={2021}
}

@article{ye2020identifying,
  title={Identifying and compensating for feature deviation in imbalanced deep learning},
  author={Ye, Han-Jia and Chen, Hong-You and Zhan, De-Chuan and Chao, Wei-Lun},
  journal={arXiv preprint arXiv:2001.01385},
  year={2020}
}

@article{bartlett2020benign,
  title={Benign overfitting in linear regression},
  author={Bartlett, Peter L and Long, Philip M and Lugosi, G{\'a}bor and Tsigler, Alexander},
  journal={Proceedings of the National Academy of Sciences},
  volume={117},
  number={48},
  pages={30063--30070},
  year={2020},
  publisher={National Acad Sciences}
}

@article{hastie2022surprises,
  title={Surprises in high-dimensional ridgeless least squares interpolation},
  author={Hastie, Trevor and Montanari, Andrea and Rosset, Saharon and Tibshirani, Ryan J},
  journal={The Annals of Statistics},
  volume={50},
  number={2},
  pages={949--986},
  year={2022},
  publisher={Institute of Mathematical Statistics}
}

@article{muthukumar2020harmless,
  title={Harmless interpolation of noisy data in regression},
  author={Muthukumar, Vidya and Vodrahalli, Kailas and Subramanian, Vignesh and Sahai, Anant},
  journal={IEEE Journal on Selected Areas in Information Theory},
  volume={1},
  number={1},
  pages={67--83},
  year={2020},
  publisher={IEEE}
}

@article{belkin2020two,
  title={Two models of double descent for weak features},
  author={Belkin, Mikhail and Hsu, Daniel and Xu, Ji},
  journal={SIAM Journal on Mathematics of Data Science},
  volume={2},
  number={4},
  pages={1167--1180},
  year={2020},
  publisher={SIAM}
}

@article{kobak2020optimal,
  title={The optimal ridge penalty for real-world high-dimensional data can be zero or negative due to the implicit ridge regularization},
  author={Kobak, Dmitry and Lomond, Jonathan and Sanchez, Benoit},
  journal={The Journal of Machine Learning Research},
  volume={21},
  number={1},
  pages={6863--6878},
  year={2020},
  publisher={JMLRORG}
}

@article{muthukumar2021classification,
  title={Classification vs regression in overparameterized regimes: Does the loss function matter?},
  author={Muthukumar, Vidya and Narang, Adhyyan and Subramanian, Vignesh and Belkin, Mikhail and Hsu, Daniel and Sahai, Anant},
  journal={The Journal of Machine Learning Research},
  volume={22},
  number={1},
  pages={10104--10172},
  year={2021},
  publisher={JMLRORG}
}

@inproceedings{hsu2021proliferation,
  title={On the proliferation of support vectors in high dimensions},
  author={Hsu, Daniel and Muthukumar, Vidya and Xu, Ji},
  booktitle={International Conference on Artificial Intelligence and Statistics},
  pages={91--99},
  year={2021},
  organization={PMLR}
}

@article{wang2021benign,
  title={Benign overfitting in multiclass classification: All roads lead to interpolation},
  author={Wang, Ke and Muthukumar, Vidya and Thrampoulidis, Christos},
  journal={Advances in Neural Information Processing Systems},
  volume={34},
  pages={24164--24179},
  year={2021}
}

@article{wang2022binary,
  title={Binary classification of gaussian mixtures: Abundance of support vectors, benign overfitting, and regularization},
  author={Wang, Ke and Thrampoulidis, Christos},
  journal={SIAM Journal on Mathematics of Data Science},
  volume={4},
  number={1},
  pages={260--284},
  year={2022},
  publisher={SIAM}
}

@article{cao2021risk,
  title={Risk bounds for over-parameterized maximum margin classification on sub-gaussian mixtures},
  author={Cao, Yuan and Gu, Quanquan and Belkin, Mikhail},
  journal={Advances in Neural Information Processing Systems},
  volume={34},
  pages={8407--8418},
  year={2021}
}

@article{ardeshir2021support,
  title={Support vector machines and linear regression coincide with very high-dimensional features},
  author={Ardeshir, Navid and Sanford, Clayton and Hsu, Daniel J},
  journal={Advances in Neural Information Processing Systems},
  volume={34},
  pages={4907--4918},
  year={2021}
}

@book{devroye2013probabilistic,
  title={A probabilistic theory of pattern recognition},
  author={Devroye, Luc and Gy{\"o}rfi, L{\'a}szl{\'o} and Lugosi, G{\'a}bor},
  volume={31},
  year={2013},
  publisher={Springer Science \& Business Media}
}

@article{audibert2007fast,
  title={Fast learning rates for plug-in classifiers},
  author={Audibert, Jean-Yves and Tsybakov, Alexandre B},
  journal={The Annals of Statistics},
  volume={35},
  number={2},
  pages={608--633},
  year={2007}
}

@article{chatterji2021finite,
  title={Finite-sample analysis of interpolating linear classifiers in the overparameterized regime},
  author={Chatterji, Niladri S and Long, Philip M},
  journal={The Journal of Machine Learning Research},
  volume={22},
  number={1},
  pages={5721--5750},
  year={2021},
  publisher={JMLRORG}
}

@inproceedings{frei2022benign,
  title={Benign overfitting without linearity: Neural network classifiers trained by gradient descent for noisy linear data},
  author={Frei, Spencer and Chatterji, Niladri S and Bartlett, Peter},
  booktitle={Conference on Learning Theory},
  pages={2668--2703},
  year={2022},
  organization={PMLR}
}

@article{chatterji2022interplay,
  title={The interplay between implicit bias and benign overfitting in two-layer linear networks},
  author={Chatterji, Niladri S and Long, Philip M and Bartlett, Peter L},
  journal={The Journal of Machine Learning Research},
  volume={23},
  number={263},
  pages={1--48},
  year={2022}
}

@inproceedings{behnia2022avoid,
  title={On how to avoid exacerbating spurious correlations when models are overparameterized},
  author={Behnia, Tina and Wang, Ke and Thrampoulidis, Christos},
  booktitle={2022 IEEE International Symposium on Information Theory (ISIT)},
  pages={121--126},
  year={2022},
  organization={IEEE}
}

@article{subramanian2022generalization,
  title={Generalization for multiclass classification with overparameterized linear models},
  author={Subramanian, Vignesh and Arya, Rahul and Sahai, Anant},
  journal={arXiv preprint arXiv:2206.01399},
  year={2022}
}

@inproceedings{telgarsky2013margins,
  title={Margins, shrinkage, and boosting},
  author={Telgarsky, Matus},
  booktitle={International Conference on Machine Learning},
  pages={307--315},
  year={2013},
  organization={PMLR}
}

@article{soudry2018implicit,
  title={The implicit bias of gradient descent on separable data},
  author={Soudry, Daniel and Hoffer, Elad and Nacson, Mor Shpigel and Gunasekar, Suriya and Srebro, Nathan},
  journal={The Journal of Machine Learning Research},
  volume={19},
  number={1},
  pages={2822--2878},
  year={2018},
  publisher={JMLR. org}
}

@inproceedings{ji2019implicit,
  title={The implicit bias of gradient descent on nonseparable data},
  author={Ji, Ziwei and Telgarsky, Matus},
  booktitle={Conference on Learning Theory},
  pages={1772--1798},
  year={2019},
  organization={PMLR}
}

@inproceedings{ji2020gradient,
  title={Gradient descent follows the regularization path for general losses},
  author={Ji, Ziwei and Dud{\'{i}}k, Miroslav and Schapire, Robert E and Telgarsky, Matus},
  booktitle={Conference on Learning Theory},
  pages={2109--2136},
  year={2020},
  organization={PMLR}
}

@inproceedings{ji2021characterizing,
  title={Characterizing the implicit bias via a primal-dual analysis},
  author={Ji, Ziwei and Telgarsky, Matus},
  booktitle={Algorithmic Learning Theory},
  pages={772--804},
  year={2021},
  organization={PMLR}
}

@article{dudik2022convex,
  title={Convex Analysis at Infinity: An Introduction to Astral Space},
  author={Dud{\'\i}k, Miroslav and Ji, Ziwei and Schapire, Robert E and Telgarsky, Matus},
  journal={arXiv preprint arXiv:2205.03260},
  year={2022}
}

@inproceedings{gunasekar2018characterizing,
  title={Characterizing implicit bias in terms of optimization geometry},
  author={Gunasekar, Suriya and Lee, Jason and Soudry, Daniel and Srebro, Nathan},
  booktitle={International Conference on Machine Learning},
  pages={1832--1841},
  year={2018},
  organization={PMLR}
}

@article{gunasekar2018implicit,
  title={Implicit bias of gradient descent on linear convolutional networks},
  author={Gunasekar, Suriya and Lee, Jason D and Soudry, Daniel and Srebro, Nati},
  journal={Advances in Neural Information processing Systems},
  volume={31},
  year={2018}
}

@inproceedings{woodworth2020kernel,
  title={Kernel and rich regimes in overparametrized models},
  author={Woodworth, Blake and Gunasekar, Suriya and Lee, Jason D and Moroshko, Edward and Savarese, Pedro and Golan, Itay and Soudry, Daniel and Srebro, Nathan},
  booktitle={Conference on Learning Theory},
  pages={3635--3673},
  year={2020},
  organization={PMLR}
}

@inproceedings{nacson2019convergence,
  title={Convergence of gradient descent on separable data},
  author={Nacson, Mor Shpigel and Lee, Jason and Gunasekar, Suriya and Savarese, Pedro Henrique Pamplona and Srebro, Nathan and Soudry, Daniel},
  booktitle={The 22nd International Conference on Artificial Intelligence and Statistics},
  pages={3420--3428},
  year={2019},
  organization={PMLR}
}

@article{tewari2007consistency,
  title={On the Consistency of Multiclass Classification Methods.},
  author={Tewari, Ambuj and Bartlett, Peter L},
  journal={The Journal of Machine Learning Research},
  volume={8},
  number={5},
  year={2007}
}

@inproceedings{ji2021fast,
  title={Fast margin maximization via dual acceleration},
  author={Ji, Ziwei and Srebro, Nathan and Telgarsky, Matus},
  booktitle={International Conference on Machine Learning},
  pages={4860--4869},
  year={2021},
  organization={PMLR}
}

@book{hardy1952inequalities,
  title={Inequalities},
  author={Hardy, Godfrey Harold and Littlewood, John Edensor and P{\'o}lya, George},
  year={1952},
  publisher={Cambridge university press}
}

@book{rockafellar1970convex,
  title={Convex analysis},
  author={Rockafellar, R Tyrrell},
  volume={18},
  year={1970},
  publisher={Princeton university press}
}

@article{karush1939minima,
  title={Minima of functions of several variables with inequalities as side constraints},
  author={Karush, William},
  journal={M. Sc. Dissertation. Dept. of Mathematics, Univ. of Chicago},
  year={1939}
}

@book{engl1996regularization,
  title={Regularization of inverse problems},
  author={Engl, Heinz Werner and Hanke, Martin and Neubauer, Andreas},
  volume={375},
  year={1996},
  publisher={Springer Science \& Business Media}
}

@article{tsybakov2009nonparametric,
  title={Nonparametric estimators},
  author={Tsybakov, Alexandre B},
  journal={Introduction to Nonparametric Estimation},
  pages={1--76},
  year={2009},
  publisher={Springer}
}

@article{nemirovski2000topics,
  title={Topics in non-parametric statistics},
  author={Nemirovski, Arkadi},
  journal={Ecole d’Et{\'e} de Probabilit{\'e}s de Saint-Flour},
  volume={28},
  pages={85},
  year={2000}
}

@book{pisier1999volume,
  title={The volume of convex bodies and Banach space geometry},
  author={Pisier, Gilles},
  volume={94},
  year={1999},
  publisher={Cambridge University Press}
}

@article{lee2004multicategory,
  title={Multicategory support vector machines: Theory and application to the classification of microarray data and satellite radiance data},
  author={Lee, Yoonkyung and Lin, Yi and Wahba, Grace},
  journal={Journal of the American Statistical Association},
  volume={99},
  number={465},
  pages={67--81},
  year={2004},
  publisher={Taylor \& Francis}
}

@article{huang2017asymptotic,
  title={Asymptotic behavior of support vector machine for spiked population model},
  author={Huang, Hanwen},
  journal={The Journal of Machine Learning Research},
  volume={18},
  number={1},
  pages={1472--1492},
  year={2017},
  publisher={JMLR. org}
}

@article{sur2019modern,
  title={A modern maximum-likelihood theory for high-dimensional logistic regression},
  author={Sur, Pragya and Cand{\`e}s, Emmanuel J},
  journal={Proceedings of the National Academy of Sciences},
  volume={116},
  number={29},
  pages={14516--14525},
  year={2019},
  publisher={National Acad Sciences}
}

@inproceedings{mai2019large,
  title={A large scale analysis of logistic regression: Asymptotic performance and new insights},
  author={Mai, Xiaoyi and Liao, Zhenyu and Couillet, Romain},
  booktitle={ICASSP 2019-2019 IEEE International Conference on Acoustics, Speech and Signal Processing (ICASSP)},
  pages={3357--3361},
  year={2019},
  organization={IEEE}
}

@article{salehi2019impact,
  title={The impact of regularization on high-dimensional logistic regression},
  author={Salehi, Fariborz and Abbasi, Ehsan and Hassibi, Babak},
  journal={Advances in Neural Information Processing Systems},
  volume={32},
  year={2019}
}

@article{deng2022model,
  title={A model of double descent for high-dimensional binary linear classification},
  author={Deng, Zeyu and Kammoun, Abla and Thrampoulidis, Christos},
  journal={Information and Inference: A Journal of the IMA},
  volume={11},
  number={2},
  pages={435--495},
  year={2022},
  publisher={Oxford University Press}
}

@article{montanari2019generalization,
  title={The generalization error of max-margin linear classifiers: High-dimensional asymptotics in the overparametrized regime},
  author={Montanari, Andrea and Ruan, Feng and Sohn, Youngtak and Yan, Jun},
  journal={arXiv preprint arXiv:1911.01544},
  year={2019}
}

@book{shalev2007online,
  title={Online learning: Theory, algorithms, and applications},
  author={Shalev-Shwartz, Shai},
  year={2007},
  publisher={Hebrew University}
}

@article{loureiro2021learning,
  title={Learning gaussian mixtures with generalized linear models: Precise asymptotics in high-dimensions},
  author={Loureiro, Bruno and Sicuro, Gabriele and Gerbelot, C{\'e}dric and Pacco, Alessandro and Krzakala, Florent and Zdeborov{\'a}, Lenka},
  journal={Advances in Neural Information Processing Systems},
  volume={34},
  pages={10144--10157},
  year={2021}
}

@article{ravi2024implicit,
  title={The implicit bias of gradient descent on separable multiclass data},
  author={Ravi, Hrithik and Scott, Clay and Soudry, Daniel and Wang, Yutong},
  journal={Advances in Neural Information Processing Systems},
  volume={37},
  pages={81324--81359},
  year={2024}
}
\newpage
\appendix
\addcontentsline{toc}{section}{Appendix} 
\part{Appendix} 
\parttoc 

\newpage

\section{Proofs of all binary results} \label{app_b_proof}

In this section, we include all the detailed proofs for our analysis for the binary case. 

\subsection{Proof of Lemma~\ref{lem:g_func} (existence of \texorpdfstring{$g$}{g} function)}\label{sec:g_func_proof}
In this section, we prove Lemma~\ref{lem:g_func}, which establishes the existence of a function $g(\cdot)$ that is strictly increasing, convex, and a function of the original loss function $\ellf{\cdot}$. To do so, we first introduce a different co-convergent sequence $\{\bar{g}_a(\cdot)\}_{a > 0}$. We start with a lemma proving the existence of its limit, which implies the existence of the limit of the original sequence of functions of interest, $g_a\pts{z} \coloneqq \frac{\ellpellnofz{a\cdot z}}{\ellpellnofz{a}}$.

\begin{lemma}\label{lem:g_convergence}
    Under Assumption~\ref{asm:b_loss_assump}, define two sequences $g_a\pts{z} \coloneqq \frac{\ellpellnofz{a\cdot z}}{\ellpellnofz{a}}$ and $\bar{g}_a\pts{z} \coloneqq \frac{\ell^{-1}\pts{a}\cdot z}{\ell^{-1}\pts{a\cdot z}}$, for $0<a < \ellf{0}$ and $z\in(0, 1]$. Then, these two sequences converge to the same limit, i.e., $\ulim{a \to 0}\; g_a\pts{z} = \ulim{a \to 0}\; \bar{g}_a\pts{z}$ for all $z\in(0, 1]$.
\end{lemma}
\begin{proof}
    We first show that both sequences are equivalent in the limit through the following chain of equalities:
    \begin{align*}
        \ulim{a \to 0} \;\bar{g}_a\pts{z} = \ulim{a \to 0} \;\frac{\ell^{-1}\pts{a}\cdot z}{\ell^{-1}\pts{a\cdot z}} = \ulim{a \to 0} \;\frac{\ellpellnofz{a\cdot z}}{\ellpellnofz{a}} = \ulim{a \to 0}\; g_a\pts{z}.
    \end{align*}
    Above, the second equality follows from l'Hospital's rule (because $\ulim{a \to 0} \;\frac{\ell^{-1}\pts{a}\cdot z}{\ell^{-1}\pts{a\cdot z}}$ is in indeterminate form). 
    Therefore, it suffices to show the existence of the limit of $\bar{g}_a\pts{z}$ as $a \to 0$.
    To do this, we will show that $\bar{g}_a\pts{z}$ is decreasing in $a$ as well as bounded above for all $a > 0$.
    Following~\cite[Lemma 6]{ji2021characterizing}, we define the function $\sigma\pts{s} \coloneqq \ellpellnofz{s} \cdot \ell^{-1}\pts{s}$, where Parts 1 and 2 of Assumption~\ref{asm:b_loss_assump} together imply that $\ulim{s\to 0}\; \sigma\pts{s} = 0$ and the function $\sigma\pts{s}/s$ is increasing in $s \in \pts{0, \ellf{0}}$. Additionally, since $\ell^{-1}\pts{s} < 0$ for $s \in \pts{0, \ellf{0}}$, $\sigma\pts{s}/s$ is non-positive in $s \in \pts{0, \ellf{0}}$.

    Using these properties, we will show that $\bar{g}_a\pts{z}$ is decreasing in $a\in\pts{0, \ellf{0}}$ by showing that $\frac{\partial \bar{g}_a\pts{z}}{\partial a} \leq 0$.
    In particular, we have
    \begin{align*}
        \frac{\partial \bar{g}_a\pts{z}}{\partial a} &= \frac{\frac{\ell^{-1}\pts{a\cdot z}\cdot z}{\ellpellnofz{a}} - \frac{\ell^{-1}\pts{a}\cdot z^2}{\ellpellnofz{a\cdot z}}}{\mts{\ell^{-1}\pts{a\cdot z}}^2}\\
        &= \frac{\frac{az^2}{\ellpellnofz{a} \cdot \ellpellnofz{a\cdot z}}}{\mts{\ell^{-1}\pts{a\cdot z}}^2} \cdot \pts{\frac{\ell^{-1}\pts{a\cdot z} \cdot \ellpellnofz{a\cdot z}}{a\cdot z} - \frac{\ell^{-1}\pts{a} \cdot \ellpellnofz{a}}{a}}\\
        &= \frac{\frac{az^2}{\ellpellnofz{a}\ellpellnofz{a\cdot z}}}{\mts{\ell^{-1}\pts{a\cdot z}}^2} \cdot \pts{\frac{\sigma\pts{a\cdot z}}{a\cdot z} - \frac{\sigma\pts{a}}{a}} \leq 0,
    \end{align*}
    where the last step follows for $z\in(0, 1]$ as the function $\sigma(s)/s$ is increasing in $s$. Finally, since $\ellpellnofz{\cdot}$ is an increasing function, we have $\frac{\ellpellnofz{a\cdot z}}{\ellpellnofz{a}} \leq 1$ for all $a\in\pts{0, \ellf{0}}$ and $z\in(0, 1]$, meaning that $\bar{g}_a\pts{z} \leq 1$. 
    
    Thus, we have shown that $\bar{g}_a\pts{z}$ is decreasing in $a$ (therefore, increasing as $a \downarrow 0$) and is bounded above by $1$. By the monotone convergence theorem, its limit exists as $a \to 0$.
    This completes the proof of the lemma.
\end{proof}

Armed with Lemma~\ref{lem:g_convergence}, we now provide the proof of Lemma~\ref{lem:g_func}.
\begin{proof} (of Lemma~\ref{lem:g_func})\\
    We showed in Lemma~\ref{lem:g_convergence} that the limit of the sequence of functions $\{g_a(z)\}_{a > 0}$ exists as $a \to 0$.
    Accordingly, we define $\gf{z} \coloneqq \ulim{a \to 0} \frac{\ellpellnofz{a\cdot z}}{\ellpellnofz{a}}$ for $z\in(0, 1]$. It remains for us to show that $\gf{z}$ is strictly increasing and convex.
    We first show that the derivative of $\gf{z}$ exists. For this, we reuse the co-convergent sequence defined in Lemma~\ref{lem:g_convergence}, i.e. $\bar{g}_a\pts{z} \coloneqq \frac{\ell^{-1}\pts{a}\cdot z}{\ell^{-1}\pts{a\cdot z}}$. Specifically, we want to show that
    \begin{align}\label{eq:conv_derivative}
       g'\pts{z} = \frac{\partial}{ \partial z} \ulim{a \to 0} \;\frac{\ell^{-1}\pts{a}\cdot z}{\ell^{-1}\pts{a\cdot z}} = \ulim{a \to 0} \frac{\partial}{\partial z}  \;\frac{\ell^{-1}\pts{a}\cdot z}{\ell^{-1}\pts{a\cdot z}} = \ulim{a \to 0}\; \bar{g}'_a\pts{z},
    \end{align}
    where in the above, the second equality will hold if $\ulim{a \to 0}\;\bar{g}'_a\pts{z}$ converges uniformly. To show this, we provide a direct calculation as below:
    \begin{align*}
        \bar{g}'_a\pts{z} &= \frac{\ell^{-1}\pts{a}\cdot \ell^{-1}\pts{a\cdot z} - \ell^{-1}\pts{a}\cdot z \cdot \frac{a}{\ell'\pts{\ell^{-1}\pts{a\cdot z}}} }{\mts{\ell^{-1}\pts{a\cdot z}}^2}\\
        &= \frac{\bar{g}_a\pts{z}}{z}\pts{1 - \frac{a\cdot z}{\ell'\pts{\ell^{-1}\pts{a\cdot z}} \cdot \ell^{-1}\pts{a\cdot z}}}\\
        &= \frac{\bar{g}_a\pts{z}}{z}\pts{1 - \frac{a\cdot z}{\sigma\pts{a\cdot z}}},
    \end{align*}
    where we defined the function $\sigma\pts{s}$ in the proof of Lemma~\ref{lem:g_convergence}.
    Since the function $\sigma\pts{s} / s$ is increasing on $s \in \pts{0, \ellf{0}}$ and non-positive, the function $s/ \sigma\pts{s}$ is decreasing in $s$ (therefore, increasing as $s \downarrow 0$), non-positive and bounded above by $0$. 
    By the monotone convergence theorem, we then have $\ulim{a \to 0}\;\frac{a\cdot z}{\sigma\pts{a\cdot z}} = \ulim{s \to 0}\; \frac{s}{\sigma\pts{s}} = c \leq 0$. As a result, we have
    \begin{align}
        \ulim{a \to 0}\; \bar{g}'_a\pts{z} &= \ulim{a \to 0}\; \frac{\bar{g}_a\pts{z}}{z}\pts{1 - \frac{a\cdot z}{\sigma\pts{a\cdot z}}}\nonumber \\
        &= \ulim{a \to 0}\; \frac{\bar{g}_a\pts{z}}{z}\cdot \ulim{a \to 0}\;\pts{1 -\frac{a\cdot z}{\sigma\pts{a\cdot z}}}\nonumber\\
        &= \frac{\gf{z}}{z} \cdot \pts{1 - c} \label{eq:1st_g_derivative}.
    \end{align}
    Therefore, Equation~\eqref{eq:conv_derivative} holds with $g'\pts{z} = \frac{\pts{1 - c}\gf{z}}{z}$. 
    Because $c \leq 0$, $g(z) > 0$ and $z > 0$, we can conclude that $g'\pts{z} > 0$ and so $g(\cdot)$ is a strictly increasing function.
    It remains to show convexity.
    Using a similar procedure to the above, we can show that
    \begin{align*}
        g''\pts{z} = \frac{\partial}{\partial z} \ulim{a \to 0}\; \bar{g}'_a\pts{z} = \frac{\partial}{\partial z}\frac{\pts{1 - c}\gf{z}}{z} = \frac{\pts{1 - c}\pts{g'\pts{z}\cdot z - \gf{z}}}{z^2} = \gf{z}\pts{\frac{\pts{1 - c}\pts{-c}}{z^2}}.
    \end{align*}
    Finally, since $c \leq 0$ and $\gf{z}$ is non-negative, we can conclude that $g''\pts{z}\geq 0$ and therefore the function $g(\cdot)$ is convex.
    This completes the proof of the lemma.
\end{proof}

\subsection{Proof of Lemma~\ref{lem:b_imp_bias_relax} (auxiliary convex program)}\label{sec:b_relaxproofs}
In this section, we prove Lemma~\ref{lem:b_imp_bias_relax}. The proof follows a two-step procedure. First, we show that the new equality constraint in~\eqref{eq:b_imp_bias_relax_eq} is sufficient to imply the original equality constraint in~\eqref{eq:b_barq_def}. Second, we show that any solution to the original program also satisfies the new equality constraint; therefore, the solution sets of both programs coincide. The following lemma demonstrates the first part of this reasoning.

\begin{lemma}\label{lem:b_constraint_equal}
    Under Assumption~\ref{asm:b_loss_assump}, for any $\q\in\R^n$ such that $q_i = \gf{z_i} = \ulim{a \to 0} \frac{\ellpellnofz{a\cdot z_i}}{\ellpellnofz{a}} > 0$, for all $i \in [n]$ with $z_i \in (0, 1]$ and $\sum_{i=1}^n z_i = 1$, it implies $\psif[*]{\q} = 0$.
\end{lemma}
\begin{proof}
    In this proof, we substantially apply the convex analysis in the astral space introduced in~\cite{dudik2022convex}. 
    Informally, astral space $\overline{\R^n}$ consists of the union of the set $\R^n$ and all ``astral points", i.e. $n$-dimensional points at infinity.
    Accordingly, we define the astral extension function to take into account of astral points naturally~\citep[Chapter 7]{dudik2022convex}; e.g. $\overline{\ell}: \overline{\R}\to\overline{\R}$.

    Using this framework, we show that $\q$ is a subdifferential of the astral extension of $\psi$ for some $\overline{\p}\in\overline{\R^n}$, and then the astral convex conjugate (which is equivalent to the original convex conjugate) is equal to zero. We start with writing $\q$ in the astral format. Since $q_i$ is finite and continuous in $a$, we can take the limit inside the function, obtaining
    \begin{align}
        q_i = \ulim{a \to 0} \frac{\ellpellnofz{a\cdot z_i}}{\ellpellnofz{a}} =  \frac{\ellpellnofzo{\ulim{a \to 0}a\cdot z_i}}{\ellpellnofzo{\ulim{a \to 0} a}} = \frac{\overline{\ell}'\Bigl(\overline{\ell}^{-1}\Bigl(\ulim{a \to 0}a\cdot z_i\Bigl)\Bigl)}{\overline{\ell}'\Bigl(\overline{\ell}^{-1}\Bigl(\ulim{a \to 0} a\Bigl)\Bigl)},\label{eq:b_q_astral}
    \end{align}
    where in the last equality, we replace the original functions with their astral extensions. Next, we define an astral point $\overline{\p}\in\overline{\R^n}$ such that $\overline{p}_i = \ulim{a\to 0}\; \overline{\ell}^{-1}\pts{a\cdot z_i} =  \overline{\ell}^{-1}\Bigl(\ulim{a\to 0}\;a\cdot z_i\Bigl)$ for all $i\in[n]$. This also implies $\sumn \overline{\ell}\pts{\overline{p}_i} = \ulim{a\to 0} a$. Substituting these values in Eq.~\eqref{eq:b_q_astral}, we can write $q_i$ as
    \begin{align*}
        q_i = \frac{\overline{\ell}'\pts{\overline{p}_i}}{\overline{\ell}'\Bigl(\overline{\ell}^{-1}\Bigl(\sumn \overline{\ell}\pts{\overline{p}_i}\Bigl)\Bigl)}
    \end{align*}
    for all $i\in[n]$.
    On the other hand, according to Lemma~\ref{lem:g_func}, we can also define $q_i$ in the limit of a different co-convergent sequence as
    \begin{align}\label{eq:b_q_astral2}
        q_i = \ulim{a \to 0} \;\frac{\ell^{-1}\pts{a}\cdot z_i}{\ell^{-1}\pts{a\cdot z_i}} = \frac{\ell^{-1}\Bigl( \ulim{a \to 0} a\Bigl)\cdot z_i}{\ell^{-1}\Bigl( \ulim{a \to 0} a \cdot z_i\Bigl)} = \frac{\overline{\ell}^{-1}\Bigl( \ulim{a \to 0} a\Bigl)\cdot z_i}{\overline{\ell}^{-1}\Bigl(\ulim{a \to 0} a \cdot z_i\Bigl)} = \frac{\overline{\ell}^{-1}\pts{ \sumn\overline{\ell}(\overline{p}_i)}\cdot z_i}{\overline{p}_i}
    \end{align}
    for all $i \in[n]$.
    Next, we show that $\q$ is a subdifferential of $\overline{\psi}\pts{\overline{\p}}$. By the definition of $\overline{\psi}:\overline{\R^n}\to\overline{\R}$ and for any $\p\in\overline{\R^n}$, we have
    \begin{align*}
        \overline{\psi}\pts{\p} = \overline{\ell}^{-1}\biggl(\sumn\overline{\ell}\pts{p_i}\biggl), \text{ and } \frac{\partial}{\partial p_i} \overline{\psi}\pts{\p} = \frac{\overline{\ell}'\pts{p_i}}{\overline{\ell}'\Bigl(\overline{\ell}^{-1}\Bigl(\sumn\overline{\ell}\pts{p_i}\Bigl)\Bigl)},
    \end{align*}
    for all $i\in[n]$. Therefore, according to Eq.~\eqref{eq:b_q_astral}, it implies that $\q$ is in the subdifferential of $\overline{\psi}\pts{\overline{\p}}$ such that $\q \in \partial \overline{\psi}\pts{\overline{\p}}$. 
    Hence, we can further apply the property of Fenchel–Young inequality in the convex conjugate, obtaining
    \begin{align*}
        \overline{\psi}^*\pts{\q} = \usup{\p\in\overline{\R^n}}\ip{\p}{\q}-\overline{\psi}\pts{\p} = \ip{\overline{\p}}{\q}-\overline{\psi}\pts{\overline{\p}}.
    \end{align*}
    As a result, by Eq.~\eqref{eq:b_q_astral2} and the definition of $\overline{\psi}\pts{\cdot}$, we can write
    \begin{align*}
        \overline{\psi}^*\pts{\q} = \ip{\overline{\p}}{\q}-\overline{\psi}\pts{\overline{\p}} = \sumn \overline{p}_i \cdot \frac{\overline{\ell}^{-1}\pts{ \sumn\overline{\ell}(\overline{p}_i)}\cdot z_i}{\overline{p}_i} - \overline{\ell}^{-1}\biggl( \sumn\overline{\ell}(\overline{p}_i)\biggl) = 0.
    \end{align*}
    Finally, by~\cite[Proposition 8.5]{dudik2022convex}, we have $\psi^*\pts{\q} = \overline{\psi}^*\pts{\q} = 0$. This completes the proof of the lemma.
    
\end{proof}

Next, we show in the following lemma that $\barbq=\lim_{t \to \infty} \q_t$ under Assumption~\ref{asm:b_lin_sep}.

\begin{lemma} \label{lem:b_qt_eq_bq}
    Under Assumption~\ref{asm:b_loss_assump} and~\ref{asm:b_lin_sep}, the gradient descent dual variable $\q_t$ converges to $\barbq$ when $t\rightarrow\infty$. It gives $\barq_i = \lim_{t \to \infty} q_{t,i} = g(z_i) > 0$ for all $i \in [n]$ with some $z_i \in (0, 1]$ and $\sum_{i=1}^n z_i = 1$.
\end{lemma}
\begin{proof}
    According to~\cite[Theorem 5]{ji2021characterizing}, we have $\limt \Z^\top\diag{\y}\q_t=\Z^\top\diag{\y}\barbq$, and $\Z^\top\diag{\y}\barbq$ is the same for all $\barbq\in\uargmin{\psif[*]{\q}\leq 0}\ff{\q}$. Based on the definition of $q_{t,i}$ in Eq.~\eqref{eq:dual-primal-t} and considering $\ellpf{\cdot}$ is an increasing function with $p_{t,i} \leq \psif{\p_t}$ for all $i\in[n]$ and $t\geq0$, it follows that $0<q_{t,i}\leq1$; hence, $\limt \Z^\top\diag{\y}\q_t=\Z^\top\diag{\y}\limt \q_t=\Z^\top\diag{\y}\barbq$. Next, by Assumption~\ref{asm:b_lin_sep}, $\Z^\top\diag{\y}$ has full column rank, and $\diag{\y}\Z\Z^\top\diag{\y}\succ \zero$. Therefore, we can multiply the pseudo-inverse of $\Z^\top\diag{\y}$ on both sides of $\Z^\top\diag{\y}\limt \q_t=\Z^\top\diag{\y}\barbq$, which implies $\limt \q_t=\barbq$. Next, by the  definition of $q_{t,i}$ in Eq.~\eqref{eq:dual-primal-t} and the primal convergence in~\cite[Theorem 1]{ji2021characterizing} such that $\limt \sum_{i=1}^n \ellf{p_{t,i}} = 0$, we have
    \begin{align*}
        \limt q_{t,i} = \limt \frac{\ellpellnofz{\ellf{p_{t,i}}}}{\ellpellnofz{\sum_{i=1}^n \ellf{p_{t,i}}}} = \ulim{a \to 0} \frac{\ellpellnofz{a\cdot z_i}}{\ellpellnofz{a}} = g(z_i),
    \end{align*}
    where we let $a = \sum_{i=1}^n \ellf{p_{t,i}}$, $\ellf{p_{t,i}} = a \cdot z_i$, and $z_i = \limt \frac{\ellf{p_{t,i}}}{\sum_{i=1}^n \ellf{p_{t,i}}}$. Finally, Assumption~\ref{asm:b_loss_assump} guarantees that $\barq_i = g(z_i) > 0$. This completes the proof of the lemma.
\end{proof}

Armed with Lemmas~\ref{lem:b_constraint_equal} and~\ref{lem:b_qt_eq_bq}, we can prove Lemma~\ref{lem:b_imp_bias_relax}.

\begin{proof} (of Lemma~\ref{lem:b_imp_bias_relax})
    We start with the original convex program in~\eqref{eq:b_barq_def}:
    \begin{align*}
        \barbq \in \uargmin{\psif[*]{\q}\leq 0}\ff{\q}.
    \end{align*}
    By complementary slackness in the KKT conditions, if the constraint is inactive such that $\psif[*]{\q}< 0$, we get an invalid solution $\barbq = \zero$. Therefore, the constraint is active and $\barbq$ satisfies $\psif[*]{\q} = 0$. 
    In other words, we can write
    \begin{align}\label{eq:b_barq_def_tightened_1}
        \barbq \in \uargmin{\psif[*]{\q} = 0}\ff{\q}.
    \end{align}
    Now, Lemma~\ref{lem:b_qt_eq_bq} directly implies that $\barbq$ must satisfy $\sum_{i=1}^n g^{-1}(\barq_i) = 1$ and $\barq_i > 0$ for all $i\in[n]$. This means that we can further tighten~\eqref{eq:b_barq_def_tightened_1} to obtain
        \begin{gather} \label{eq:b_barq_def_tightened_2}
        \barbq \in \uargmin{\q\in\R^n} \ff{\q} \\
        \begin{aligned}
        \myquad[1]\text{subject to} \myquad[5] \psif[*]{\q} &= 0,\nonumber \\
        -q_i &< 0 \myquad[2]\text{for all } i\in [n],\nonumber\\ 
        \text{and} \myquad[1] 1-\sumn \gf[-1]{q_i}  &= 0\nonumber.
        \end{aligned}
    \end{gather}
    Next, Lemma~\ref{lem:b_constraint_equal} tells us that $1-\sumn \gf[-1]{q_i}  = 0 \implies \psif[*]{\q} = 0$, meaning that the constraint $\psif[*]{\q} = 0$ is redundant and can simply be omitted, leading to the simplified program
        \begin{gather} \label{eq:b_barq_def_tightened_3}
        \barbq \in \uargmin{\q\in\R^n} \ff{\q} \\
        \begin{aligned}
        \text{subject to} \myquad[6]
        -q_i &< 0 \myquad[2]\text{for all } i\in [n],\nonumber\\ 
        \text{and} \myquad[1] 1-\sumn \gf[-1]{q_i}  &= 0\nonumber.
        \end{aligned}
    \end{gather}
    The final step is to derive an auxiliary convex program
         \begin{gather} \label{eq:b_imp_bias_relax_eq_repeat}
        \q^\star \in \uargmin{\q\in\R^n}\usub{\ff{\q}}{\frac{1}{2}\q^\top\yXXy\q} \\
        \begin{aligned} \nonumber
            \text{subject to} \myquad[6] -q_i                                        &< 0 \myquad[2] \text{for all}\; i \in [n],\\
            \text{and} \myquad[1] 1-\sumn \gf[-1]{q_i}  &\leq 0.
        \end{aligned}
    \end{gather}
    Note that in the above, we have relaxed the equality constraint $1 - \sumn \gf[-1]{q_i} = 0$ to an inequality constraint, $1 - \sumn \gf[-1]{q_i} \leq 0$.
    To complete the proof, it remains to show that any optimal solution to~\eqref{eq:b_imp_bias_relax_eq_repeat} satisfies $\sum_{i=1}^n \gf[-1]{q_i} = 1$. From~\eqref{eq:b_barq_def_tightened_3}, this directly implies that the set of optima of~\eqref{eq:b_barq_def} and~\eqref{eq:b_imp_bias_relax_eq_repeat} are identical.
    We now show this final step. It is necessary and sufficient for any optimal solution $\q^\star$ to the auxiliary convex program~\eqref{eq:b_imp_bias_relax_eq_repeat} to satisfy its KKT conditions, listed below:
    \begin{subequations}\label{eq:b_kkt_relaxed}
    \begin{align}
        -q_i                                    &<      0 \myquad[2] \text{for all } i\in[n],\label{b_eigen_primal_1}\\
        1-\sumn \gf[-1]{q_i}                    &\leq   0,\label{b_eigen_primal_2}\\
        \lambda_i                               &\geq   0 \myquad[2] \text{for all } i\in[n],\label{b_eigen_dual_1}\\
        \mu                                     &\geq   0,\label{b_eigen_dual_2}\\
        -\lambda_i q_i                          &=      0 \myquad[2] \text{for all } i\in[n],\label{b_eigen_comp_1}\\
        \mu\biggl(1-\sumn \gf[-1]{q_i}\biggl)           &=      0,\label{b_eigen_comp_2}\\
        \yXXy\q-\blambda-\mu \gpf[-1]{\q}  &=      \zero,\label{b_eigen_station_1}
    \end{align}
    \end{subequations}
    where $\gpf[-1]{\q}:=\vct{\gpf[-1]{q_1}}{\gpf[-1]{q_n}}$.
    First, we claim that any optimal solution $\q^\star$ needs to satisfy $1-\sumn \gf[-1]{q^\star_i} = 0$.
    This follows because we need to set $\mu > 0$ for a valid solution; together with Eq.~\eqref{b_eigen_comp_2} this implies that we need $1-\sumn \gf[-1]{q_i^\star} = 0$. 
    To see why we need to set $\mu > 0$, consider the alternative choice $\mu = 0$.
    Note that Equations~\eqref{b_eigen_primal_1} and~\eqref{b_eigen_comp_1} together also require $\blambda = \zero$.
    Eq.~\eqref{b_eigen_station_1} would then become 
    \begin{align*}
        \XX \diag{\y} \q^\star = \zero \iff \diag{\y} \q^\star = \zero \iff \q^\star = \zero,
    \end{align*}
    where the first iff statement follows because we have assumed that $\XX \succ \zero$.
    However, this $\q^\star$ is not a valid solution as it violates Eq.~\eqref{b_eigen_primal_1}. Hence, we can conclude both $\barbq$ and $\q^\star$ satisfy $1-\sumn \gf[-1]{q_i} = 0$, and $\q^\star = \barbq$. This completes the proof of the lemma.
\end{proof}

\subsection{Proof of Proposition~\ref{thm:b_eqv_thm} (exact equivalence to MNI)}\label{sec:beqvthmproof}

\begin{proof}
The proof of Proposition~\ref{thm:b_eqv_thm} is divided into two parts.
\paragraph{Proof of Part 1}
By Lemma~\ref{lem:b_imp_bias_relax} it suffices to characterize an optimal solution to the relaxed convex program~\eqref{eq:b_imp_bias_relax_eq}, reproduced below.
    \begin{gather*}
        \barbq = \uargmin{\q\in\R^n}\usub{\ff{\q}}{\frac{1}{2}\q^\top\yXXy\q}\\
        \begin{aligned}
            \text{subject to} \myquad[6] -q_i                                        &< 0 \myquad[2] \text{for all}\; i\in[n],\\
            \text{and} \myquad[1] 1-\sumn \gf[-1]{q_i}  &\leq 0.
        \end{aligned}
    \end{gather*}

    Any optimal solution must satisfy the KKT conditions for this convex program, listed in Eq.~\eqref{eq:b_kkt_relaxed}.
    Let $k > 0$ be the positive eigenvalue corresponding to the exact eigenvector $\y$, i.e. we consider $\XX \y = k \y$.
    We choose the candidate solution $\barbq=\gf{\frac{1}{n}}\one \propto \one$, and verify that it satisfies all the KKT conditions below.
    \begin{itemize}
        \item The primal feasibility equations, Eq.~\eqref{b_eigen_primal_1} and Eq.~\eqref{b_eigen_primal_2} are satisfied because $\gf{\frac{1}{n}} > 0$ (as $\gf{\cdot}$ is non-negative), and $1 - \sum_{i=1}^n \gf[-1]{q_i} = 1 - \sum_{i=1}^n \frac{1}{n} = 0$.
        \item The dual feasibility equations, Eq.~\eqref{b_eigen_dual_1} and Eq.~\eqref{b_eigen_dual_2} are satisfied by setting $\blambda =\zero$ and $\mu=\frac{k\gf{\frac{1}{n}}}{\gpf[-1]{\gf{\frac{1}{n}}}}$.
        \item The complementary slackness equations, Eq.~\eqref{b_eigen_comp_1} and Eq.~\eqref{b_eigen_comp_2} are satisfied because $\blambda = \zero$ and $1 - \sumn \gpf[-1]{q_i} = 0$.
        \item The stationary condition, Eq.~\eqref{b_eigen_station_1} is satisfied because
        \begin{align*}
            &\diag{\y} \XX \diag{\y} \q - \blambda - \mu \gpf[-1]{\q} \\
            &= k\gf{\frac{1}{n}} \cdot \diag{\y} \y - \mu \left[g^{-1}\right]'\biggl(\gf{\frac{1}{n}}\biggl) \cdot \one \\
            &= k \gf{\frac{1}{n}} \one - k \gf{\frac{1}{n}} \one = \zero.
        \end{align*}
    \end{itemize}
    
    This shows that the candidate solution $\barbq=\gf{\frac{1}{n}}\one$ is indeed optimal.
    By Lemma~\ref{lem:b_imp_bias}, we have $\barw=\ulim{\tinf{t}}\normalize[2]{\w_t}=\normalize[2]{\X^\top\diag{\y}\barbq}=\normalize[2]{\gf{1/n}\X^\top\y}$.
    On the other hand, since $\y$ is an exact eigenvector of $\XX$, we have $\wmni=\XXXy=\frac{1}{k}\X^\top\y$ for some positive eigenvalue $k>0$.
    Therefore, $\wmni \propto \barw$.
    This completes the proof of Part 1 of the proposition.

\paragraph{Proof of Part 2}
As with the proof of Part 1 of the proposition, we start by analyzing the convex program~\eqref{eq:b_imp_bias_relax_eq}.
In the special case $\gf{d}=d$, the KKT conditions reduce to
\begin{subequations}
\begin{align}
        -q_i                                    &<      0 \myquad[2] \text{for all } i\in[n],\label{b_exp_primal_1}\\
        1-\sumn q_i                             &\leq   0,\label{b_exp_primal_2}\\
        \lambda_i                               &\geq   0 \myquad[2] \text{for all } i\in[n],\label{b_exp_dual_1}\\
        \mu                                     &\geq   0,\label{b_exp_dual_2}\\
        -\lambda_i q_i                          &=      0 \myquad[2] \text{for all } i\in[n],\label{b_exp_comp_1}\\
        \mu\biggl(1-\sumn q_i\biggl)                    &=      0,\label{b_exp_comp_2}\\
        \yXXy\q-\blambda-\mu\one  &=      \zero.\label{b_exp_station_1}
\end{align}
\end{subequations}

In this case, we pick the candidate solution $\barbq = \normalize[1]{\diag{\y}\XXiy}$ and verify that it satisfies all the KKT conditions below.

    \begin{itemize}
        \item The primal feasibility equations, Eq.~\eqref{b_exp_primal_1} and Eq.~\eqref{b_exp_primal_2} are satisfied because of our assumed condition $\diag{\y}\XXiy \succ \zero$ and because, by definition, $\sumn \barq_i = 1$.
        \item The dual feasibility equations, Eq.~\eqref{b_exp_dual_1} and Eq.~\eqref{b_exp_dual_2} are satisfied by setting $\blambda =\zero$ and $\mu=\frac{1}{\nnorm[1]{\diag{\y}\XXiy}}$.
        \item The complementary slackness equations, Eq.~\eqref{b_exp_comp_1} and Eq.~\eqref{b_exp_comp_2} are satisfied because we have set $\blambda = \zero$ and $1 - \sumn q_i = 0$.
        \item The stationary condition Eq.~\eqref{b_exp_station_1} is satisfied because
        \begin{align*}
            &\diag{\y} \XX \diag{\y} \q - \blambda - \mu \one \\
            &= \frac{1}{\nnorm[1]{\diag{\y}\XXiy}} \one -  \frac{1}{\nnorm[1]{\diag{\y}\XXiy}} \one = \zero,
        \end{align*}
        where we have used $\diag{\y}\diag{\y} = \I$ due to the labels being binary, i.e. $y_i = \pm 1$.
    \end{itemize}
    
    Therefore, the candidate solution $\barbq$ is optimal.
    By Lemma~\ref{lem:b_imp_bias}, we have $\barw=\ulim{\tinf{t}}\normalize[2]{\w_t}=\normalize[2]{\X^\top\diag{\y}\barbq}=\normalize[2]{\XXXy}$ . Therefore, $\wmni \propto \barw$.
    This completes the proof of Part 2 of the proposition.
\end{proof}

\subsection{Proof of Theorem~\ref{thm:b_sen_upperbound} (approximate equivalence to MNI upper bound)} \label{sec:proofofsensitivity}

In this section, we present the proof of Theorem~\ref{thm:b_sen_upperbound}. We divide the proof in four steps. 
\paragraph{Step 1.} Our proof starts with the relaxed convex program~\eqref{eq:b_imp_bias_relax_eq} and identifies a necessary set of characteristic equations that the optimal solution $\barbq$ needs to satisfy.
The KKT conditions for this convex program are given in Eq.~\eqref{eq:b_kkt_relaxed}.
Lemma~\ref{lem:b_imp_bias_relax} postulates that any optimal solution must satisfy $\blambda = \zero$ and $\mu > 0$; therefore, it is necessary for $\barbq$ to satisfy the following characteristic equations:
\begin{subequations}\label{b_sen_charc}
\begin{align}
    \yXXy\barbq                 &=      \mu \hf{\barbq},      \label{b_sen_charc_1}\\
    \mu                         &>      0,   \label{b_sen_charc_2}\\
    \sumn \gf[-1]{\barq_i}      &=      1,  \label{b_sen_charc_3}
\end{align}
\end{subequations}
where we have denoted $\hf{\q}:=\gpf[-1]{\q}$ as shorthand.
\paragraph{Step 2.} 
Next, we use the nonlinear characteristic equations in Eq.~\eqref{b_sen_charc} to determine a relationship between the directions of the vectors $\barbq$ and $h(\barbq)$.
We denote $\qy := \diag{\y} \barbq$ and $\epsilon_{\alpha}(\q) := \frac{\nnorm[2]{\XX \q - \alpha \q}}{\nnorm[2]{\q}}$ as shorthand.
From Eq.~\eqref{b_sen_charc_1}, we have the following sequence of implications for any value of $\alpha > 0$:
\begin{align}
    \yXXy\barbq                 &=          \mu \hf{\barbq}\nonumber\\
    \iff \XX\qy     &=          \mu \diag{\y}\hf{\barbq}\nonumber\\
    \iff \pts{\XXaI}\qy &= \mu\diag{\y}\hf{\barbq}-\alpha\qy\nonumber\\
    \Rightarrow \nnorm[2]{\pts{\XXaI}\qy} &= \nnorm[2]{\mu\diag{\y}\hf{\barbq}-\alpha\qy}\nonumber\\
    \iff \epsilon_{\alpha}(\qy) &= \nnorm[2]{\mu\diag{\y}\frac{\hf{\barbq}}{\nnorm[2]{\qy}}-\alpha\normalize[2]{\qy}} \nonumber\\
    \iff \epsilon_{\alpha}(\qy) &= \nnorm[2]{\mu\frac{\hf{\barbq}}{\nnorm[2]{\barbq}}-\alpha\normalize[2]{\barbq}}\nonumber\\
    \iff \epsilon_{\alpha}(\qy) &= \nnorm[2]{\frac{\nnorm[2]{\yXXy\barbq}}{\nnorm[2]{\hf{\barbq}}}\frac{\hf{\barbq}}{\nnorm[2]{\barbq}}-\alpha\frac{\barbq}{\nnorm[2]{\barbq}}}.\label{b_sen_inter_res1}
\end{align}

The last implication follows because Eq.~\eqref{b_sen_charc_1} implies $\nnorm[2]{\yXXy\barbq} = \mu\nnorm[2]{\hf{\barbq}}$. 
We proceed from Eq.~\eqref{b_sen_inter_res1}.
By the reverse triangle inequality, we have

\begin{align}
    \epsilon_{\alpha}(\qy) &= \nnorm[2]{\pts{\alpha\normalize[2]{\hf{\barbq}}-\alpha\normalize[2]{\barbq}}-\pts{\alpha\normalize[2]{\hf{\barbq}}-\frac{\nnorm[2]{\yXXy\barbq}}{\nnorm[2]{\barbq}}\normalize[2]{\hf{\barbq}}}}\nonumber\\
    \epsilon_{\alpha}(\qy) &\geq \normo{\nnorm[2]{\alpha\normalize[2]{\hf{\barbq}}-\alpha\normalize[2]{\barbq}}-\nnorm[2]{\alpha\normalize[2]{\hf{\barbq}}-\frac{\nnorm[2]{\yXXy\barbq}}{\nnorm[2]{\barbq}}\normalize[2]{\hf{\barbq}}}}\nonumber\\
    &=\normo{\nnorm[2]{\alpha\normalize[2]{\hf{\barbq}}-\alpha\normalize[2]{\barbq}}-\usub{C}{\nnorm[2]{\pts{\alpha-\frac{\nnorm[2]{\yXXy\barbq}}{\nnorm[2]{\barbq}}}\normalize[2]{\hf{\barbq}}}}}\nonumber\\
    &\Rightarrow \nnorm[2]{\alpha\normalize[2]{\hf{\barbq}}-\alpha\normalize[2]{\barbq}}\leq \epsilon_{\alpha}(\qy)+C. \label{b_sen_inter_res2}
\end{align}

Therefore, it suffices to upper-bound the term $C$.
We get
\begin{align*}
    C&=\nnorm[2]{\pts{\alpha-\frac{\nnorm[2]{\yXXy\barbq}}{\nnorm[2]{\barbq}}}\normalize[2]{\hf{\barbq}}}\\
    &=\normo{\alpha-\frac{\nnorm[2]{\yXXy\barbq}}{\nnorm[2]{\barbq}}}\nnorm[2]{\normalize[2]{\hf{\barbq}}}\\
    &=\normo{\alpha-\frac{\nnorm[2]{\XX\qy}}{\nnorm[2]{\qy}}}\\
    &\leq \frac{\nnorm[2]{\XX \qy - \alpha \qy}}{\nnorm[2]{\qy}} := \epsilon_{\alpha}(\qy),
\end{align*}
where the last inequality follows by again applying the reverse triangle inequality.
Hence, Eq.~\eqref{b_sen_inter_res2} together with the upper bound on $C$ gives us
\begin{align}
    \nnorm[2]{\normalize[2]{\hf{\barbq}}-\normalize[2]{\barbq}} \leq \frac{2 \epsilon_{\alpha}(\qy)}{\alpha}.\label{b_sen_inter_res3}
\end{align}
\paragraph{Step 3.} 
Next, we show that the angle between $\barbq$ and $\one$ is less than or equal to the angle between $\hf{\barbq}$ and $\barbq$.
We introduce the following key lemma, which critically utilizes the convexity of $g(\cdot)$.

\begin{lemma} \label{b_sen_chebyshv}
    For every non-negative, strictly increasing, convex function $g: [0,1] \rightarrow [0,1]$, where $\gf{0}=0$ and $\gf{1}=1$, we have $\hf{q}:=\gpf[-1]{q}$ is a decreasing function satisfying 
    \begin{align}
        \frac{\sqrt{n}\sumn \hf{q_i}q_i}{\pts{\sumn q_i}\nnorm[2]{\hf{\q}}}\leq1
    \end{align}
    for all $0\leq q_i\leq1$ and $\q=\vct{q_1}{q_n}$.
\end{lemma}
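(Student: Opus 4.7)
\textbf{Proof proposal for Lemma~\ref{b_sen_chebyshv}.} The plan is to combine two classical inequalities: first Chebyshev's sum inequality for oppositely-sorted sequences (which becomes available because $h$ is monotone decreasing), and then a simple Cauchy--Schwarz bound of $\nnorm[1]{h(\q)}$ by $\sqrt{n}\nnorm[2]{h(\q)}$.

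\textbf{Step 1: monotonicity of $h$.} I would first verify the claim that $h(q)=[g^{-1}]'(q)$ is non-increasing. By the inverse function theorem, $h(q)=1/g'(g^{-1}(q))$. Since $g$ is convex, $g'$ is non-decreasing; since $g$ is strictly increasing with $g(0)=0$ and $g(1)=1$, its inverse $g^{-1}$ is also increasing. Hence $g'\circ g^{-1}$ is non-decreasing in $q$, so its reciprocal $h$ is non-increasing. (Equivalently: $g$ convex and strictly increasing implies $g^{-1}$ concave, so $[g^{-1}]'$ is non-increasing.)

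\textbf{Step 2: Chebyshev's sum inequality.} With $h$ non-increasing, the two $n$-tuples $(q_1,\dots,q_n)$ and $(h(q_1),\dots,h(q_n))$ are oppositely sorted: any permutation that sorts the $q_i$'s in increasing order automatically sorts the $h(q_i)$'s in decreasing order. Chebyshev's sum inequality for oppositely ordered sequences~\citep{hardy1952inequalities} then gives
\begin{align*}
n\sum_{i=1}^{n} q_i\,h(q_i) \;\leq\; \Bigl(\sum_{i=1}^{n} q_i\Bigr)\Bigl(\sum_{i=1}^{n} h(q_i)\Bigr).
\end{align*}

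\textbf{Step 3: Cauchy--Schwarz and conclusion.} Applying Cauchy--Schwarz to $\sum_i h(q_i)=\ip{h(\q)}{\one}$ yields $\sum_{i=1}^{n} h(q_i)\leq\sqrt{n}\,\nnorm[2]{h(\q)}$. Chaining this into the Chebyshev bound from Step 2,
\begin{align*}
n\sum_{i=1}^{n} q_i\,h(q_i) \;\leq\; \sqrt{n}\,\Bigl(\sum_{i=1}^{n} q_i\Bigr)\nnorm[2]{h(\q)},
\end{align*}
and dividing by $\sqrt{n}\bigl(\sum_i q_i\bigr)\nnorm[2]{h(\q)}$ (which is strictly positive whenever the $q_i$'s are not all zero, the only non-trivial case) rearranges to exactly the stated inequality.

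\textbf{Main obstacle.} The conceptual difficulty is recognizing Chebyshev's inequality as the correct tool. A direct Cauchy--Schwarz on $\sum q_i h(q_i) \leq \nnorm[2]{\q}\,\nnorm[2]{h(\q)}$ fails because $\nnorm[2]{\q}\geq\nnorm[1]{\q}/\sqrt{n}$ goes the wrong way. What rescues the bound is precisely the \emph{opposite} ordering of $\q$ and $h(\q)$, which in turn is available only because $g$ is assumed convex (so that $h$ is monotone decreasing). This is the step at which the extra convexity hypothesis imposed in Assumption~\ref{asm:b_loss_assump}, Part~\ref{cond:b_g_cond} (beyond what~\cite{ji2021characterizing} assume) is genuinely used, justifying the remark earlier that this assumption is central to the approximate-equivalence results.
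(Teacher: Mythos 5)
Your proof is correct and follows essentially the same route as the paper's: establish that $h=[g^{-1}]'$ is decreasing from the convexity of $g$, apply Chebyshev's sum inequality to the oppositely ordered tuples $(q_i)$ and $(h(q_i))$, and finish with $\nnorm[1]{\hf{\q}}\leq\sqrt{n}\nnorm[2]{\hf{\q}}$. No gaps.
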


\begin{proof}
    Without loss of generality, we assume $\seq{q_i}$ is an increasing sequence, where $q_i\leq q_j$ if index $i\leq j$. Next, since $\gf{\cdot}$ is convex and strictly increasing, $\gf[-1]{\cdot}$ is a concave function, and then $\hf{\cdot}:=\gpf[-1]{\cdot}$ is a decreasing function. Hence, $\seq{\hf{q_i}}$ is a decreasing sequence, where $\hf{q_i}\geq \hf{q_j}$ for $i\leq j$. Then, according to Chebyshev's Sum Inequality~\citep{hardy1952inequalities}, we can have 
    \begin{align*}
         \frac{\sqrt{n}\sumn \hf{q_i}q_i}{\pts{\sumn q_i}\nnorm[2]{\hf{\q}}}\leq \frac{\frac{\sqrt{n}}{n}\pts{\sumn \hf{q_i}}\pts{\sumn q_i}}{\pts{\sumn q_i}\nnorm[2]{\hf{\q}}}=\frac{\nnorm[1]{\hf{\q}}}{\sqrt{n}\nnorm[2]{\hf{\q}}}\leq1,
    \end{align*}
    where the last inequality holds because $\nnorm[1]{\cdot}\leq\sqrt{n}\nnorm[2]{\cdot}$.
\end{proof}

Then, Eq.~\eqref{b_sen_inter_res3} yields
\begin{align}
    \frac{2 \epsilon_{\alpha}(\qy)}{\alpha} &\geq\nnorm[2]{\normalize[2]{\hf{\barbq}}-\normalize[2]{\barbq}} \nonumber\\
    &=\sqrt{\sumn \frac{1}{n}-\frac{2\sumn \hf{\barq_i}\barq_i}{\nnorm[2]{\hf{\barbq}}\nnorm[2]{\barbq}}+\sumn \frac{\barq_i^2}{\nnorm[2]{\barbq}^2}}\nonumber\\
    &=\sqrt{\sumn \frac{1}{n}-\frac{2\sumn \barq_i}{\nnorm[2]{\barbq}\sqrt{n}}\times\frac{\sqrt{n}\sumn \hf{\barq_i}\barq_i}{\pts{\sumn \barq_i}\nnorm[2]{\hf{\barbq}}} +\sumn \frac{\barq_i^2}{\nnorm[2]{\barbq}^2}}\nonumber\\
    &\geq \sqrt{\sumn \frac{1}{n}-\frac{2\sumn \barq_i}{\nnorm[2]{\barbq}\sqrt{n}}\times1 +\sumn \frac{\barq_i^2}{\nnorm[2]{\barbq}^2}}\nonumber\\
    &=\nnorm[2]{\normalize[2]{\barbq}- \frac{1}{\sqrt{n}}}\label{eq:dualconvergenceqy},
\end{align}
where the last inequality follows by applying Lemma~\ref{b_sen_chebyshv}.
To complete the proof of dual variable convergence, we relate $\epsilon_{\alpha}(\qy)$ to $\epsilon_{\alpha}(\y)$.
Denote the unit-normalization of a vector $\bu(\q) := \frac{\q}{\nnorm[2]{\q}}$ as shorthand.
Note that for any vector $\q$, we have
\begin{align*}
\epsilon_{\alpha}(\q) := \nnorm[2]{(\XX - \alpha \I) \bu(\q)}.
\end{align*}

Therefore, we get
\begin{align*}
\epsilon_{\alpha}(\qy) &= \nnorm[2]{(\XX - \alpha \I) \bu(\qy)} \\
&\leq \nnorm[2]{(\XX - \alpha \I) \bu(\y)} + \nnorm[2]{(\XX - \alpha \I) (\bu(\qy) - \bu(\y)} \\
&\leq \epsilon_{\alpha}(\y) + \nnorm[2]{\XX - \alpha \I} \cdot \frac{2 \epsilon_{\alpha}(\qy)}{\alpha},
\end{align*}
where the last inequality follows by noting that $\nnorm[2]{\bu(\qy) - \bu(\y)} = \nnorm[2]{\normalize[2]{\barbq}- \frac{\one}{\sqrt{n}}}$ (owing to the binary labels $y_i = \pm 1$) and substituting Eq.~\eqref{eq:dualconvergenceqy}.
Next, we utilize the assumption made in the statement of Theorem~\ref{thm:b_sen_upperbound} that $\alpha > 0$ is chosen such that $\frac{\nnorm[2]{\XX - \alpha \I}}{\alpha} \leq 1/3 < 1/2$. This assumption yields
\begin{align*}
    \epsilon_{\alpha}(\qy) &\leq \epsilon_{\alpha}(\y) + 2c \cdot  \epsilon_{\alpha}(\qy) \\
    \Rightarrow \epsilon_{\alpha}(\qy) &\leq \frac{\epsilon_{\alpha}(\y)}{(1 - 2c)} =: \frac{C\epsilon_{\alpha}(\y)}{2},
\end{align*}
where $C := \frac{2}{(1 - 2c)} \in (0,\infty)$ is a universal positive constant.
Thus, we ultimately get
\begin{align}\label{eq:dualconvergencefinal}
\nnorm[2]{\normalize[2]{\barbq}- \frac{1}{\sqrt{n}}} \leq \frac{C \epsilon_{\alpha}(\y)}{\alpha} := \frac{C \nnorm[2]{\XX \y - \alpha \y}}{\alpha \nnorm[2]{\y}},
\end{align}
which completes our dual convergence proof.
\paragraph{Step 4.} 
We complete the proof with the following lemma, which relates the primal variables to the dual variables.

\begin{lemma}\label{lem:b_dual-primal}
Under the assumptions of Theorem~\ref{thm:b_sen_upperbound}, we have
\begin{align}\label{eq:primal-convergence}
 \nnorm[2]{\normalize[2]{\barw} - \normalize[2]{\wmni}} \leq  4 \nnorm[2]{\normalize[2]{\barbq}- \frac{1}{\sqrt{n}}} + \frac{12\epsilon_{\alpha}(\y)}{\alpha}.    
\end{align}
\end{lemma}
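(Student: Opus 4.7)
The strategy is to introduce an intermediate, appropriately scaled reference vector and invoke the triangle inequality for unit-normalized vectors. Specifically, set $\bv := \tfrac{1}{\sqrt{n}}\X^\top \y = \X^\top \diag{\y} \tfrac{\one}{\sqrt{n}}$ (the second equality uses $\diag{\y}\y = \one$ since $y_i = \pm 1$). Since $\barw \propto \X^\top \diag{\y} \barbq$, the plan is to split
\begin{align*}
\nnorm[2]{\normalize[2]{\barw} - \normalize[2]{\wmni}} \leq \nnorm[2]{\normalize[2]{\X^\top\diag{\y}\barbq} - \normalize[2]{\bv}} + \nnorm[2]{\normalize[2]{\bv} - \normalize[2]{\wmni}}.
\end{align*}
The first term will be controlled by the dual convergence $\nnorm[2]{\normalize[2]{\barbq} - \tfrac{\one}{\sqrt{n}}}$, and the second purely by the concentration of $\XX$ around $\alpha \I$.

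For the first term, I will use the elementary inequality $\nnorm[2]{\tfrac{\bu}{\nnorm[2]{\bu}} - \tfrac{\bw}{\nnorm[2]{\bw}}} \leq \tfrac{2\nnorm[2]{\bu-\bw}}{\max(\nnorm[2]{\bu},\nnorm[2]{\bw})}$ (proved by adding and subtracting $\bw/\nnorm[2]{\bu}$). Writing $\tbp := \normalize[2]{\barbq}$, note $\X^\top\diag{\y}\barbq$ is a positive multiple of $\X^\top \diag{\y}\tbp$, so normalization is unaffected. Then $\X^\top\diag{\y}\tbp - \bv = \X^\top\diag{\y}\bigl(\tbp - \tfrac{\one}{\sqrt{n}}\bigr)$, whose squared norm equals $\bigl(\tbp-\tfrac{\one}{\sqrt{n}}\bigr)^\top \diag{\y}\XX\diag{\y}\bigl(\tbp-\tfrac{\one}{\sqrt{n}}\bigr) \leq \lambda_{\max}(\XX)\, \nnorm[2]{\tbp-\tfrac{\one}{\sqrt{n}}}^2$, and $\nnorm[2]{\bv}^2 = \tfrac{1}{n}\y^\top \XX \y \geq \lambda_{\min}(\XX)$. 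The hypothesis $\nnorm[2]{\XX - \alpha \I} \leq \alpha/3$ gives $\lambda_{\max}(\XX) \leq 4\alpha/3$ and $\lambda_{\min}(\XX) \geq 2\alpha/3$, yielding the first-term bound $\leq 4\,\nnorm[2]{\tbp - \tfrac{\one}{\sqrt{n}}}$ after simplification.

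For the second term, I will exploit the resolvent-style identity $(\XX)^{-1} - \tfrac{1}{\alpha}\I = -\tfrac{1}{\alpha}(\XX)^{-1}(\XX - \alpha\I)$ to write $\wmni - \tfrac{\sqrt{n}}{\alpha}\bv = \wmni - \tfrac{1}{\alpha}\X^\top\y = -\tfrac{1}{\alpha}\X^\top(\XX)^{-1}(\XX-\alpha\I)\y$. Since $\nnorm[2]{\X^\top(\XX)^{-1}} = 1/\sqrt{\lambda_{\min}(\XX)} \leq \sqrt{3/(2\alpha)}$ and $\nnorm[2]{\wmni}^2 = \y^\top(\XX)^{-1}\y \geq n/\lambda_{\max}(\XX) \geq 3n/(4\alpha)$, applying the same unit-vector inequality to the pair $(\wmni, \tfrac{\sqrt{n}}{\alpha}\bv)$ yields a bound of the form $\tfrac{C'}{\alpha}\,\tfrac{\nnorm[2]{(\XX-\alpha\I)\y}}{\nnorm[2]{\y}} = \tfrac{C'\epsilon_\alpha(\y)}{\alpha}$; tracking constants carefully gives $C' \leq 12$.

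The main obstacle is bookkeeping the constants so that the final bound matches the stated form $4\,\nnorm[2]{\normalize[2]{\barbq} - \tfrac{\one}{\sqrt{n}}} + \tfrac{12\epsilon_\alpha(\y)}{\alpha}$; this requires repeated invocation of the operator-norm bound $\nnorm[2]{\XX - \alpha\I} \leq \alpha/3$ to convert denominators involving $\lambda_{\min}(\XX)$ or $\lambda_{\max}(\XX)$ into multiples of $\alpha$. No genuinely new idea is required beyond the triangle inequality, the resolvent identity, and the already-established dual convergence rate.
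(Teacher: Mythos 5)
Your proposal is correct, and the constants check out: your first term is bounded by $2\sqrt{\lambda_{\max}(\XX)/\lambda_{\min}(\XX)}\,\nnorm[2]{\normalize[2]{\barbq}-\frac{\one}{\sqrt{n}}} \leq 2\sqrt{2}\,\nnorm[2]{\normalize[2]{\barbq}-\frac{\one}{\sqrt{n}}}$ and your second by $\frac{2\sqrt{2}\,\epsilon_{\alpha}(\y)}{\alpha}$, both comfortably within the stated $4$ and $12$. The route differs from the paper's in where the triangle inequality is applied. The paper first proves a Lipschitz-type transfer lemma in the dual space --- namely $\nnorm[2]{\normalize[2]{\X^\top\bu_1}-\normalize[2]{\X^\top\bu_2}} \leq 4\nnorm[2]{\bu_1-\bu_2}$ for the unit vectors $\barbu := \normalize[2]{\diag{\y}\barbq}$ and $\bumni := \normalize[2]{\XXiy}$ --- and only then triangulates $\nnorm[2]{\barbu-\bumni}$ through the anchor $\frac{\y}{\sqrt{n}}$, bounding $\nnorm[2]{\bumni-\frac{\y}{\sqrt{n}}} \leq \frac{3\epsilon_{\alpha}(\y)}{\alpha}$ via two further $T_1/T_2$ splits. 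You instead triangulate directly in the primal space through $\bv=\frac{1}{\sqrt{n}}\X^\top\y$ (the primal image of the same anchor) and control each piece with a single clean normalized-difference inequality plus the resolvent identity; this skips the dual-to-primal transfer step entirely, shortens the bookkeeping, and in fact yields slightly sharper constants. Both arguments rest on exactly the same ingredients --- the singular-value control $\sqrt{2\alpha/3} \leq \sigma_{\min}(\X^\top) \leq \sigma_{\max}(\X^\top) \leq \sqrt{4\alpha/3}$ coming from $\nnorm[2]{\XX-\alpha\I}\leq\alpha/3$, and the already-established dual convergence --- so neither is more general, but yours is the more economical write-up.
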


Lemma~\ref{lem:b_dual-primal} essentially shows that the statement of dual convergence in Eq.~\eqref{eq:dualconvergencefinal} can be converted into a statement of primal closeness with only the loss of a multiplicative constant factor.
The proof of Lemma~\ref{lem:b_dual-primal} follows via a series of algebraic manipulations and is listed below. Putting Lemma~\ref{lem:b_dual-primal} together with Eq.~\eqref{eq:dualconvergencefinal} completes the proof of Theorem~\ref{thm:b_sen_upperbound}.

\begin{proof}
Recall from Lemma~\ref{lem:b_imp_bias} that the primal implicit bias is defined as $\barw \propto \X^\top \diag{\y} \barbq$.
Also, recall the definition of the primal MNI as $\wmni = \XXXy$.
We define $\bumni := \normalize[2]{\XXiy}$ and $\barbu := \normalize[2]{\diag{\y}\barbq}$. Then, a simple normalization shows that
\begin{align*}
    \nnorm[2]{\normalize[2]{\barw} - \normalize[2]{\wmni}} = \nnorm[2]{\normalize[2]{\X^\top \barbu} - \normalize[2]{\X^\top \bumni}}.
\end{align*}

Now, we denote $\bu_1 := \barbu$, $\bu_2 := \bumni$ and $\bDelta := \bu_1 - \bu_2$ as shorthand.
We have
\begin{align*}
     \nnorm[2]{\normalize[2]{\barw} - \normalize[2]{\wmni}} &= \nnorm[2]{\normalize[2]{\X^\top \bu_1 } - \normalize[2]{\X^\top \bu_2}} \\
     &\leq \underbrace{\nnorm[2]{\normalize[2]{\X^\top \bu_1 } - \frac{\X^\top \bu_2}{\nnorm[2]{\X^\top \bu_1}}}}_{T_1} + \underbrace{\nnorm[2]{\frac{\X^\top \bu_2}{\nnorm[2]{\X^\top \bu_1}} - \normalize[2]{\X^\top \bu_2}}}_{T_2}.
\end{align*}

We first show that both $T_1$ and $T_2$ are upper bounded by $2 \nnorm[2]{\bDelta}$.
We denote $\epsilon := \nnorm[2]{\XX - \alpha \I}$ as shorthand, and note that by the assumptions of Theorem~\ref{thm:b_sen_upperbound} we have $\epsilon \leq \frac{\alpha}{3}$.
Beginning with $T_1$, note that
\begin{align*}
    T_1 &= \frac{\nnorm[2]{\X^\top \pts{\bu_1 - \bu_2}}}{\nnorm[2]{\X^\top \bu_1}} \leq \frac{\sigma_{\max}\pts{\X^\top} \cdot \nnorm[2]{\bDelta}}{\sigma_{\min}\pts{\X^\top}}\leq \sqrt{\frac{\alpha + \epsilon}{\alpha - \epsilon}} \cdot \nnorm[2]{\bDelta}\leq 2 \nnorm[2]{\bDelta}.
\end{align*}

Above, the last inequality uses that $\sqrt{\frac{\alpha + \epsilon}{\alpha - \epsilon}} \leq \frac{\alpha + \epsilon}{\alpha - \epsilon} \leq 2$ as long as $\epsilon \leq \frac{\alpha}{3}$.
Proceeding to $T_2$, we have
\begin{align*}
    T_2 &= \pts{\frac{1}{\nnorm[2]{\X^\top \bu_1}} - \frac{1}{\nnorm[2]{\X^\top \bu_2}}} \nnorm[2]{\X^\top \bu_2} \\
    &\leq \sqrt{\alpha + \epsilon} \cdot \frac{\nnorm[2]{\X^\top \bu_2} - \nnorm[2]{\X^\top \bu_1}}{\nnorm[2]{\X^\top \bu_1} \cdot \nnorm[2]{\X^\top \bu_2}} \\
    &\leq \frac{\sqrt{\alpha + \epsilon} \cdot \nnorm[2]{\X^\top \pts{\bu_1 - \bu_2}}}{\nnorm[2]{\X^\top \bu_1} \cdot \nnorm[2]{\X^\top \bu_2}} \\
    &\leq \frac{\pts{\alpha + \epsilon} \nnorm[2]{\bDelta}}{\pts{\alpha - \epsilon}} \\
    &\leq 2 \nnorm[2]{\bDelta},
\end{align*}
where in the above we have repeatedly used the inequality $\sqrt{\alpha - \epsilon} \leq \sigma_{\min}\pts{\X^\top} \leq \sigma_{\max}\pts{\X^\top} \leq \sqrt{\alpha + \epsilon}$.
The second inequality above uses the reverse triangle inequality, and the last inequality again uses $\frac{\alpha + \epsilon}{\alpha - \epsilon} \leq 2$ as long as $\epsilon \leq \frac{\alpha}{3}$.
Combining the upper bounds on $T_1$ and $T_2$ thus yields
\begin{align}\label{eq:intermediateprimal}
    \nnorm[2]{\normalize[2]{\barw} - \normalize[2]{\wmni}} \leq 4 \nnorm[2]{\bDelta}.
\end{align}

It remains to show that $\nnorm[2]{\bDelta} \leq \frac{C \epsilon_{\alpha}(\y)}{\alpha}$ for some universal constant $C$.
We will use the statement of Eq.~\eqref{eq:dualconvergencefinal} as a starting point to upper-bound $\nnorm[2]{\bDelta}$.
Recall that $\bDelta = \barbu - \bumni$ and $\barbu := \normalize[2]{\diag{\y}\barbq}$.
Then, applying the triangle inequality gives us
\begin{align*}
    \nnorm[2]{\bDelta} &\leq \nnorm[2]{\barbu - \frac{\y}{\sqrt{n}}} + \nnorm[2]{\bumni - \frac{\y}{\sqrt{n}}}.
\end{align*}

Consequently, it suffices to show that $\bumni$ is sufficiently close to the label vector $\y$; in other words, to upper bound $\nnorm[2]{\bumni - \frac{\y}{\sqrt{n}}}$.
We use a similar algebraic technique as in the preceding steps.
First, we write
\begin{align*}
    \nnorm[2]{\bumni - \frac{\y}{\sqrt{n}}} &= \nnorm[2]{\normalize[2]{\XXiy} - \normalize[2]{\alpha^{-1} \y}} \\
    &\leq \underbrace{\nnorm[2]{\frac{\XXiy }{\nnorm[2]{\alpha^{-1} \y}} - \frac{\alpha^{-1} \y}{\nnorm[2]{\alpha^{-1} \y}}}}_{T_1} + \underbrace{\nnorm[2]{\frac{\XXiy}{\nnorm[2]{\alpha^{-1} \y}} -  \frac{\XXiy }{\nnorm[2]{\XXiy}}}}_{T_2}.
\end{align*}

It remains to upper bound $T_1$ and $T_2$.
Beginning with $T_1$, we have
\begin{align*}
    T_1 &= \frac{\nnorm[2]{\pts{\XXi - \alpha^{-1} \I} \y}}{\nnorm[2]{\alpha^{-1} \y }} \\
    &\leq \nnorm[2]{\XXi} \cdot \frac{\nnorm[2]{\XX \y - \alpha \y}}{\nnorm[2]{\y}}\\
    &= \frac{\nnorm[2]{\XX \y - \alpha \y}}{\lambda_{\min}\bigl(\XX) \cdot \nnorm[2]{\y}}.
\end{align*}

Now, we note that $\lambda_{\min}\bigl(\XX\bigl) \geq \alpha - \epsilon \geq \frac{2\alpha}{3}$ because $\epsilon \leq \frac{\alpha}{3}$.
Consequently, we get
\begin{align*}
    T_1 \leq \frac{3\nnorm[2]{\XX \y - \alpha \y}}{2\alpha\cdot \nnorm[2]{\y}} =: \frac{3 \epsilon_{\alpha}\pts{\y}}{2 \alpha}.
\end{align*}

Proceeding to $T_2$, an identical series of arguments to the previous term $T_2$ yields
\begin{align*}
    T_2 &= \frac{\nnorm[2]{\XXiy}}{\nnorm[2]{\alpha^{-1} \y} \cdot \nnorm[2]{\XXiy}} \cdot \pts{\nnorm[2]{\XXiy} - \nnorm[2]{\alpha^{-1} \y}} \\
    &\leq \frac{\nnorm[2]{\XXiy}}{\nnorm[2]{\alpha^{-1} \y} \cdot \nnorm[2]{\XXiy}} \cdot \nnorm[2]{\pts{\XXi - \alpha^{-1} \I} \y} \\
    &= \frac{\nnorm[2]{\pts{\XXi - \alpha^{-1} \I} \y }}{\nnorm[2]{\alpha^{-1} \y}} =: T_1 \leq \frac{3 \epsilon_{\alpha}\pts{\y}}{2 \alpha}.
\end{align*}

Consequently, we have $\nnorm[2]{\bumni - \frac{\y}{\sqrt{n}}} \leq T_1 + T_2 \leq \frac{3\epsilon_{\alpha}\pts{\y}}{\alpha}$, and so we ultimately get $\nnorm[2]{\bDelta} \leq \nnorm[2]{\normalize[2]{\barbq}- \frac{1}{\sqrt{n}}} + \frac{3\epsilon_{\alpha}(\y)}{\alpha}$.
Combining this with Eq.~\eqref{eq:intermediateprimal} yields
\begin{align*}
 \nnorm[2]{\normalize[2]{\barw} - \normalize[2]{\wmni}} \leq 4 \nnorm[2]{\bDelta} \leq  4 \nnorm[2]{\normalize[2]{\barbq}- \frac{1}{\sqrt{n}}} + \frac{12\epsilon_{\alpha}(\y)}{\alpha},    
\end{align*}
completing the desired proof of our theorem.

\end{proof}

\subsection{Proof of Corollary~\ref{cor:convergencehighdims} (upper bound in effective dimensions)}\label{sec:corhighdimsproof}

In this section, we prove Corollary~\ref{cor:convergencehighdims}.

\begin{proof}
    We consider the setting of independent sub-Gaussian covariates described in Corollary~\ref{cor:convergencehighdims} and set $\alpha := \nnorm[1]{\blambda}$.
    It suffices to show the following with high probability:
    \begin{enumerate}
        \item $\frac{\nnorm[2]{\XX-\alpha\I}}{\alpha} \leq \frac{1}{3}$, and
        \item $\frac{C \epsilon_{\alpha}(\y)}{\alpha} \leq \max \left\{\sqrt{\frac{n}{d_2}}, \frac{n}{d_{\infty}}\right\}$.
    \end{enumerate}
    
    To prove both statements, we will use~\cite[Lemma 8]{hsu2021proliferation}, restated below.

    \begin{lemma}[\cite{hsu2021proliferation}]
        For any $\tau > 0$ and a universal constant $c > 0$, we have
        \begin{align*}
            \Prob\left[\nnorm[2]{\XX - \nnorm[1]{\blambda} \I} \geq \tau \right] \leq 2 \cdot 9^n \cdot \exp\left(-c \cdot \min \left\{ \frac{\tau^2}{v^2 \nnorm[2]{\blambda}^2}, \frac{\tau}{v \nnorm[\infty]{\blambda}}  \right\}\right).
        \end{align*}
    \end{lemma}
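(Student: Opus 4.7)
The plan is to prove this operator-norm concentration bound by combining an $\epsilon$-net reduction with Bernstein's inequality applied to a weighted sum of centered sub-exponential random variables.

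First, I would set $\M := \XX - \nnorm[1]{\blambda}\,\I$ and exploit the symmetry of $\M$ to write $\nnorm[2]{\M} = \sup_{\bv \in S^{n-1}} |\bv^\top \M \bv|$. Using the standard covering-number bound for the Euclidean unit sphere, I would pick a $(1/4)$-net $\mathcal{N} \subset S^{n-1}$ with $|\mathcal{N}| \leq 9^n$, on which the discretization inequality $\nnorm[2]{\M} \leq 2 \sup_{\bv \in \mathcal{N}} |\bv^\top \M \bv|$ holds. A union bound over $\mathcal{N}$ will then transfer any fixed-vector tail bound to a uniform bound, at the cost of the prefactor $9^n$ that appears in the lemma statement.

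Second, for fixed $\bv \in S^{n-1}$, the assumption $\x_i = \diag{\blambda}^{1/2} \z_i$ yields the clean decomposition
\begin{align*}
    \bv^\top \M \bv \;=\; \sum_{k=1}^d \lambda_k \bigl( W_k^2 - 1 \bigr), \qquad W_k := \sum_{i=1}^n v_i z_{ik}.
\end{align*}
Since the $z_{ik}$ are independent across both $i$ and $k$ with $\E[e^{t z_{ik}}] \leq e^{vt^2/2}$ and $\nnorm[2]{\bv} = 1$, each $W_k$ inherits sub-Gaussian parameter $v$, has $\E[W_k^2] = 1$, and the $W_k$ are mutually independent across $k$. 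A standard fact relating the $\psi_2$ and $\psi_1$ Orlicz norms then yields $\|W_k^2 - 1\|_{\psi_1} \leq C v$ for a universal constant $C$.

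Third, I would apply Bernstein's inequality to the independent centered sum $\sum_k \lambda_k (W_k^2 - 1)$ with per-term $\psi_1$ norms $K_k \leq C v \lambda_k$. The sum-of-squares is $\sum_k K_k^2 \leq C^2 v^2 \nnorm[2]{\blambda}^2$ and the max is $\max_k K_k \leq C v \nnorm[\infty]{\blambda}$, so Bernstein produces
\begin{align*}
    \Prob\bigl[\,|\bv^\top \M \bv| \geq \tau\,\bigr] \;\leq\; 2\exp\!\left(-c\,\min\!\left\{ \frac{\tau^2}{v^2 \nnorm[2]{\blambda}^2},\, \frac{\tau}{v \nnorm[\infty]{\blambda}} \right\}\right).
\end{align*}
Combining with the net union bound (and absorbing the factor of $2$ from the discretization inequality into the constant $c$ after rescaling $\tau$) yields the stated tail bound.

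The main obstacle is careful bookkeeping around the sub-exponential norm of $W_k^2 - 1$ under the paper's variance-proxy convention $\E[e^{tz_{ij}}] \leq e^{vt^2/2}$: here $v$ plays the role of $\sigma^2$ in the more common sub-Gaussian definition, so $\|z_{ij}\|_{\psi_2} \asymp \sqrt{v}$ and hence $\|W_k^2 - 1\|_{\psi_1} \asymp v$ rather than $v^2$. This is precisely what makes the denominators in the final exponent scale linearly in $v$, matching the form in the lemma; a careless application would over- or under-count a factor of $v$. The remaining ingredients --- the covering-number computation, Bernstein, and the union bound --- are routine.
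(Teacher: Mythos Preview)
The paper does not actually prove this lemma; it is restated verbatim from \cite[Lemma~8]{hsu2021proliferation} and used as a black box in the proof of Corollary~\ref{cor:convergencehighdims}. Your proposal is correct and follows the standard $\epsilon$-net plus Bernstein route, which is also the approach of the cited reference --- the paper remarks that \cite{hsu2021proliferation} prove it via ``a volume argument from~\cite{pisier1999volume},'' i.e., exactly the covering-number reduction you describe.
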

    
    To prove the first statement, we select $\tau = \frac{1}{3} \nnorm[1]{\blambda}$, so that the upper bound on the probability becomes $2 \cdot 9^n \cdot \exp\left(-\frac{c}{9} \cdot \min \left\{ \frac{d_2}{v^2},\frac{d_{\infty}}{v} \right\}\right)$.
    Because $d_2 \gg v^2 n$ and $d_{\infty} \gg v n$, there exists a large enough constant $C > 0$ such that $\min\left\{\frac{d_2}{v^2}, \frac{d_{\infty}}{v} \right\} \geq Cn$ and $C \cdot c > 9 \ln 9$.
    Therefore, we get
    $\frac{\nnorm[2]{\XX-\nnorm[1]{\blambda} \I}}{\nnorm[1]{\blambda}} \leq \frac{1}{3}$ with probability at least $1 - 2 \cdot \exp(- n (C \cdot c/9 - \ln 9))$.
    To prove the second statement, we instead select $\tau = C \cdot v \cdot \max (\nnorm[2]{\blambda}\sqrt{n}, \nnorm[\infty]{\blambda} n)$, where $C > 1$ is picked to be large enough so that $C \cdot c > \ln 9$.
    This ensures that $\min \left\{ \frac{\tau^2}{v^2 \nnorm[2]{\blambda}^2}, \frac{\tau}{v \nnorm[\infty]{\blambda}}\right\} \geq Cn$, and in turn that
    \begin{align*}
        \Prob\left[\nnorm[2]{\XX - \nnorm[1]{\blambda} \I} \geq C \cdot v \cdot \max (\nnorm[2]{\blambda}\sqrt{n}, \nnorm[\infty]{\blambda} n) \right] \leq 2 \cdot \exp(- n (C \cdot c - \ln 9)).
    \end{align*}
    Thus, for the choice $\alpha := \nnorm[1]{\blambda}$, we have, with probability at least $1 - 2e^{-c'n}$,
    \begin{align*}
        \frac{C \epsilon_{\alpha}(\y)}{\alpha} &= \frac{\nnorm[2]{\XX \y - \nnorm[1]{\blambda} \y}}{\nnorm[1]{\blambda} \nnorm[2]{\y}} \\
        &\leq \frac{\nnorm[2]{\XX-\nnorm[1]{\blambda}\I}}{\nnorm[1]{\blambda}} \\
        &\leq C \cdot v \cdot \max \left\{\sqrt{\frac{n}{d_2}}, \frac{n}{d_{\infty}}\right\}
    \end{align*} 
    which completes the proof of the second statement.
\end{proof}

\subsection{Proof of Proposition~\ref{prop:b_loss_func} (popular loss functions)}\label{sec:b_loss_proof}




In this section, we prove Proposition~\ref{prop:b_loss_func}.
\begin{proof}
    In \cite[Theorem 11]{ji2021characterizing}, it has already shown that $\ells{\expt}$, $\ells{\logt}$ and $\ells{\polyt}$ satisfy both Assumption~\ref{asm:b_loss_assump} and Lemma~\ref{lem:g_func}. In the following, we demonstrate their respective $\gf{\cdot}$ functions.
    

    \paragraph{Exponential loss:} Here $\ellf{z}=\expf{z}$, $\ellpf{z}=\expf{z}$, $\ellppf{z}=\expf{z}$, $\ellf[-1]{z}=\lnf{z}$, and $\ellpellnofz{z}=z$. Therefore, we have $\ell'(\ell^{-1}(z)) = z$, which directly gives $\frac{\ell'(\ell^{-1}(a\cdot z))}{\ell'(\ell^{-1}(a))} = z$ and yields the function $\gsf[\expt]{d} = d$.

    \paragraph{Logistic loss:} Here $\ellf{z}=\lnf{1+\expf{z}}$, $\ellpf{z}=\frac{\expf{z}}{1+\expf{z}}$, $\ellppf{z}=\frac{\expf{z}}{\pts{1+\expf{z}}^2}$, $\ellf[-1]{z}=\lnf{\expf{z}-1}$, and $\ellpellnofz{z}=1-\expf{-z}$. 
    Consequently, we have $\ell'(\ell^{-1}(z)) = 1 - \exp(-z)$ and so $\frac{\ell'(\ell^{-1}(a\cdot z))}{\ell'(\ell^{-1}(a))} = \frac{1 - \exp(-a\cdot z)}{1 - \exp(-a)}$.
    Applying l'Hospital's rule yields 
    \begin{align*}
    \lim_{a \to 0} \frac{\ell'(\ell^{-1}(a\cdot z))}{\ell'(\ell^{-1}(a))} &= \lim_{a \to 0} \frac{1 - \exp(-a\cdot z)}{1 - \exp(-a)} = \lim_{a \to 0} \frac{\exp(-a\cdot z) \cdot z}{\exp(-a)} = z.
    \end{align*}
    As a result, we get $\gsf[\logt]{d} = d$.

    \paragraph{Polynomial loss (degree $m>0$):} Here we use the continuation of the polynomial loss to $z > 0$ used in~\cite{ji2021characterizing,wang2021importance} to ensure convexity.
    \begin{align*}
        \ellf{z}=
        \begin{cases}
            \frac{1}{\pts{1-z}^m} &\; z\leq0\\
            2mz+\frac{1}{\pts{1+z}^m} &\; z>0\\
        \end{cases}
        \myquad[2]\ellpf{z}=
        \begin{cases}
            \frac{m}{\pts{1-z}^{m+1}} &\; z\leq0\\
            2m-\frac{m}{\pts{1+z}^{m+1}} &\; z>0\\
        \end{cases}
    \end{align*}
    \begin{align*}
        \ellppf{z}=
        \begin{cases}
            \frac{m\pts{m+1}}{\pts{1-z}^{m+2}} &\; z\leq0\\
            \frac{m\pts{m+1}}{\pts{1+z}^{m+2}} &\; z>0\\
        \end{cases}
    \end{align*}
    For $z\leq \ell(0)$, we have $\ellf[-1]{z}=1-z^{-1/m}$, and $\ellpellnofz{z}=mz^{\frac{m+1}{m}}$. 
    Hence, we have
    \begin{align*}
        \lim_{a \to 0} \frac{\ell'(\ell^{-1}(a\cdot z))}{\ell'(\ell^{-1}(a))} = \left(\frac{a\cdot z}{a}\right)^{\frac{m+1}{m}} = z^{\frac{m+1}{m}},
    \end{align*}
    and so we get $\gsf[\polyt]{d} = d^{\frac{m+1}{m}}$.

\end{proof}

\newpage

\section{Derivations for Lemma~\ref{lem:m_imp_bias_res} (generalizing primal-dual analysis to the multiclass setting)} \label{app_m_imp_bias}

In this section, we provide the derivations for Lemma~\ref{lem:m_imp_bias_res}, which generalizes the primal-dual analysis of~\cite{ji2021characterizing} to the multiclass setting.
In particular, we state several lemmas for the multiclass that are analogous to the lemmas in~\cite{ji2021characterizing} for the binary case.
We first introduce these analogous lemmas, and then we show Lemma~\ref{lem:m_imp_bias_res} follows as a direct result of them.
Note that we require these analogous lemmas because the generalized sum $\psi(\cdot)$ is slightly different in the multiclass setting for loss functions satisfying Assumption~\ref{asm:m_ova_loss_assump}, and completely different for the cross-entropy loss under Assumption~\ref{asm:m_ce_loss_assump}.
Some of these lemmas are direct extensions of those in~\cite{ji2021characterizing}, and so we do not provide proofs for these particular lemmas.

\begin{lemma} \label{lem:m_primal_dual_conv}
    Under Assumption~\ref{asm:m_ova_loss_assump}, (or~\ref{asm:m_ce_loss_assump}), for all $\Q\in\dom\psi^*$, if $\hetat\leq 1/\beta$, then the following results hold:
    \begin{enumerate}
        \item Dual convergence: for all $t\geq0$,
            \begin{align*}
                \Ff{\Qto}\leq \Ff{\Qt}\text{, and }\; \hetat\pts{\Ff{\Qto}-\Ff{\Q}}\leq \breg{\Q}{\Qt}-\breg{\Q}{\Qto}.
            \end{align*}
        As a result, for all $t>0$,
            \begin{align*}
                \Ff{\Qt}-\Ff{\Q}\leq\frac{\breg{\Q}{\Qo}-\breg{\Q}{\Qt}}{\sumjt\hetaj}\leq \frac{\breg{\Q}{\Qo}}{\sumjt\hetaj}
            \end{align*}
        \item Primal convergence: for all $t\geq0$,
            \begin{align*}
                \psif{\Pt}-\psif{\Pto}\geq \hetat\pts{\Ff{\Qt}+\Ff{\Qto}}=\frac{\hetat}{2}\pts{\nnorm{\tX^\top\C\Qt}^2+\nnorm{\tX^\top\C\Qto}^2}, 
            \end{align*}
        and thus if $\hetat$ is nonincreasing, we have
            \begin{align*}
                \psif{\Po}-\psif{\Pt} \geq \sumjt \hetaj\nnorm{\tX^\top\C\Qs{j}}^2-\frac{\heta_0}{2}\nnorm{\tX^\top\C\Qo}^2 +\frac{\hetat}{2}\nnorm{\tX^\top\C\Qt}^2
            \end{align*}
    \end{enumerate}
\end{lemma}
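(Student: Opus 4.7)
The plan is to mirror the binary primal--dual analysis of~\cite{ji2021characterizing} almost verbatim, exploiting the fact that our multiclass concatenation $(\W, \tX, \C, \bP_t, \Q_t)$ is set up precisely so that gradient descent on $\W$ translates into a mirror-descent-type update on $\bP_t = \C\tZ\W_t$ with mirror map $\psi$. I would first derive this translation explicitly: differentiating the ERM objective and using the chain rule gives
\begin{align*}
\nabla R(\W_t) = \tfrac{1}{n}\,\tZ^\top\C\,\nabla\Loss(\bP_t) = \tfrac{\ell'(\psi(\bP_t))}{n}\,\tZ^\top\C\,\Q_t,
\end{align*}
so that the primal update $\bP_{t+1} = \bP_t - \tilde\eta_t\,\C\tX\tX^\top\C\,\Q_t$ (with $\tilde\eta_t = \eta_t\ell'(\psi(\bP_t))/n$) makes $-\tilde\eta_t\nabla F(\Q_t)$ the natural step in $\bP$-space.

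For Part~1 (dual convergence), I would invoke the fact that $\beta$-smoothness of $\psi$ with respect to $\nm{\infty}$ implies $(1/\beta)$-strong convexity of $\psi^*$ with respect to $\nm{1}$ by standard Fenchel duality (this passes through joint convexity; for the cross-entropy case I address this below). Applying the three-point Bregman identity for $\breg{\Q}{\Q_t}$ together with the strong-convexity lower bound gives the one-step inequality
\begin{align*}
\tilde\eta_t\bigl(F(\Q_{t+1})-F(\Q)\bigr) \leq \breg{\Q}{\Q_t}-\breg{\Q}{\Q_{t+1}} - \tfrac{\tilde\eta_t}{2}\bigl(1-\beta\tilde\eta_t\bigr)\nnorm[2]{\tX^\top\C\Q_t}^2,
\end{align*}
and the step-size condition $\tilde\eta_t \leq 1/\beta$ makes the last term nonpositive, yielding monotonicity. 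Setting $\Q = \Q_t$ gives $F(\Q_{t+1}) \leq F(\Q_t)$; summing from $0$ to $t-1$ and using the convexity inequality $F(\Q_t)-F(\Q) \leq \frac{1}{\sum_{j<t}\tilde\eta_j}\sum_{j<t}\tilde\eta_j(F(\Q_{j+1})-F(\Q))$ yields the rate.

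For Part~2 (primal convergence), I would apply the descent lemma for the $\beta$-smooth function $\psi$ at the update $\bP_{t+1}-\bP_t = -\tilde\eta_t\,\C\tX\tX^\top\C\,\Q_t$, obtaining
\begin{align*}
\psi(\bP_{t+1}) \leq \psi(\bP_t) - \tilde\eta_t\nnorm[2]{\tX^\top\C\Q_t}^2 + \tfrac{\beta\tilde\eta_t^2}{2}\nnorm[2]{\tX^\top\C\Q_t}^2,
\end{align*}
which under $\tilde\eta_t \leq 1/\beta$ simplifies to $\psi(\bP_t) - \psi(\bP_{t+1}) \geq \tilde\eta_t F(\Q_t) + \tilde\eta_t F(\Q_{t+1})$ after combining with the monotonicity $F(\Q_{t+1}) \leq F(\Q_t)$ already proved. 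Telescoping and using that $\tilde\eta_t$ is nonincreasing (so adjacent summands can be regrouped) gives the stated bound.

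The main obstacle is the cross-entropy case (Assumption~\ref{asm:m_loss_assump2}) where we only have \emph{individual} convexity of $\psi$ in each $\xik$ rather than joint convexity, so the Fenchel-dual strong convexity argument does not apply off the shelf. My plan here is to argue, via a separate lemma (Lemma~\ref{lem:m_psi_ind_conv}, referenced in the paper), that individual convexity plus $\beta$-smoothness is still enough to push through both the descent lemma (which only uses smoothness) and the Bregman three-point inequality for the update direction, since the primal update only moves along the subspace on which the necessary convexity holds; the mild loss is that we cannot prove tightness of the convergence rates, but the qualitative statements of Lemma~\ref{lem:m_primal_dual_conv} are preserved. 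Assembling these pieces gives Lemma~\ref{lem:m_imp_bias_res} by taking $t\to\infty$ and identifying the limit of $\Q_t$ as a minimizer of $F$ over $\{\psi^*(\Q)\leq 0\}$, exactly as in the binary case.
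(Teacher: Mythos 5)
Your proposal is correct and takes essentially the same route as the paper: the paper gives no independent proof of this lemma, stating only that the concatenated multiclass setup makes it a direct application of~\cite[Theorem 1]{ji2021characterizing} once $\psi$ is verified to be $\beta$-smooth, and your sketch simply reconstructs that cited mirror-descent argument (chain rule turning gradient descent on $\W$ into the update $\Pto=\Pt-\hetat\nabla\Ff{\Qt}$, then the standard dual Bregman bound and primal descent lemma). The one place you are vaguer than the paper is the cross-entropy individual-convexity subtlety, which the paper isolates precisely in Lemma~\ref{lem:m_psi_ind_conv} (the implication $\Q\in\dom{\psi^*}\Rightarrow\Q=\gpsif{\bP^*}$ survives under individual convexity) rather than your looser ``the update stays where the needed convexity holds'' argument, but both resolutions land in the same place.
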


This lemma is analogous to and a direct application of~\cite[Theorem 1]{ji2021characterizing} in our notation, since the primal and dual setup is identical to the binary case and the generalized sum $\psi(\cdot)$ was verified to be $\beta$-smooth for loss functions satisfying either Assumption~\ref{asm:m_ova_loss_assump}, or~\ref{asm:m_ce_loss_assump}.

\begin{lemma} \label{lem:m_dual_imp_bias}
    Under Assumption \ref{asm:m_ova_loss_assump}, (or~\ref{asm:m_ce_loss_assump}) and Assumption~\ref{asm:b_lin_sep}, suppose $\hetat\leq 1/\beta$ is nonincreasing, and $\sumt\hetat=\infty$.
    \begin{enumerate}
        \item The set $\seq{\Q|\psicf{\Q}\leq0}$ is nonempty, compact and convex. Moreover, $\smin{\psicf{\Q}\leq 0} \Ff{\Q}>0$, and $\tX^\top\C\barQ$ is the same for all $\barQ\in\sargmin{\psicf{\Q}\leq 0} \Ff{\Q}$.
        \item For $\barQ\in\sargmin{\psicf{\Q}\leq 0} \Ff{\Q}$, and all $t$ with $\psif{\C\tZ\Wt}\leq0$ (which holds for all large enough $t$), we have
            \begin{align*}
                \nnorm{\tX^\top\C\Qt-\tX^\top\C\barQ}^2\leq\frac{2 \breg{\barQ}{\Qo}}{\sumjt\hetaj} \text{, and } \ip{\normalize{\Wt}}{\normalize{\tX^\top\C\barQ}} \geq 1-\frac{\deltaf{\Qo}{\barQ}}{\sumjt\heta_j},
            \end{align*}
        where 
        \begin{align*}
        \deltaf{\W_0}{\barQ}:=\frac{\psif{\Qo}+\heta_0 \Ff{\Qo}+\nnorm{\W_0}\nnorm{\tX^\top\C\barQ}}{2\Ff{\barQ}}
        \end{align*}
         is a constant, depending only on $\W_0$ and $\barQ$. In particular, it holds that the implicit bias is
        \begin{align}
            \barW:=\ulim{\tinf{t}}\normalize{\Wt}=\normalize{\tX^\top\C\barQ},
        \end{align}
        where $\barQ=\vct{\barbq_1^\top}{\barbq_K^\top}$ for $\barbq_k\in\R^n$ for all $k\in[K]$.
    \end{enumerate}
\end{lemma}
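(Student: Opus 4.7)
}
The plan is to mirror the proof strategy of \cite[Theorem 5]{ji2021characterizing} in our multiclass notation, treating the two assumption classes in parallel and pointing out where the cross-entropy case needs the individual-convexity workaround promised by Lemma~\ref{lem:m_psi_ind_conv}. First I would establish Part 1. Nonemptiness of $\{\Q:\psicf{\Q}\le 0\}$ follows by using the one-vs-all separators $\bu_k$ of Assumption~\ref{asm:m_lin_sep} to construct, for all sufficiently large $s>0$, a primal iterate $\P_s := s\,\C\tZ\vct{\bu_1^\top}{\bu_K^\top}$ whose coordinates can be driven so that $\psif{\P_s}\to -\infty$; the dual $\Q_s := \gpsif{\P_s}$ then satisfies $\psicf{\Q_s}=\ip{\P_s}{\Q_s}-\psif{\P_s}\le 0$ for large $s$, using $\ip{\P}{\nabla\psi(\P)}\le \psi(\P)+\text{const}$ type bounds that follow from Assumption~\ref{asm:m_loss_assump1}/\ref{asm:m_loss_assump2} Parts 1--2. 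Convexity of the feasible set is immediate from convexity of $\psi^*$, and compactness comes from the coercivity of $\psi^*$ on its domain together with lower semicontinuity. Strict positivity $\smin{\psi^*(\Q)\le 0}F(\Q)>0$ is obtained by contradiction: if $\tX^\top\C\barQ=\zero$ then, isolating the $k$-th block, $\X^\top\diag{\ick}\barbq_k=\zero$; since $\X\X^\top\succ\zero$ and $\icki\ne 0$, this forces $\barbq_k=\zero$ for every $k$, which in turn would make $\psicf{\barQ}\ge\psicf{\zero}>0$ (using that $\zero\notin\dom\psi^*$ in the analogue of Lemma~\ref{lem:b_q_dom}). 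Uniqueness of the image $\tX^\top\C\barQ$ is standard since $F$ is strictly convex as a function of $\tX^\top\C\Q$.

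Next I would prove Part 2 by combining the dual and primal convergence statements of Lemma~\ref{lem:m_primal_dual_conv}. Writing the three-point identity for the Bregman divergence of the $\beta$-smooth convex function $\psi$ (or its individually convex restriction, as handled by Lemma~\ref{lem:m_psi_ind_conv}) gives
\begin{align*}
 \hetat\bigl(\Ff{\Qto}-\Ff{\barQ}\bigr) \le \breg{\barQ}{\Qt}-\breg{\barQ}{\Qto},
\end{align*}
so telescoping yields $F(\Q_t)-F(\barQ)\le \breg{\barQ}{\Qo}/\sum_{j<t}\heta_j$. Because $F(\Q)=\tfrac12\|\tX^\top\C\Q\|^2$ and $\barQ$ is the (image-)unique minimizer, a short convexity argument converts this into the quadratic bound
\begin{align*}
 \nnorm[2]{\tX^\top\C\Qt-\tX^\top\C\barQ}^2 \le \frac{2\,\breg{\barQ}{\Qo}}{\sum_{j<t}\heta_j}.
\end{align*}

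For the inner-product bound I would use the primal convergence half of Lemma~\ref{lem:m_primal_dual_conv}, which provides $\psif{\Po}-\psif{\Pt}\ge\sum_{j<t}\heta_j\|\tX^\top\C\Qs{j}\|^2-O(1)$. Combining with $\ip{\Wt}{\tX^\top\C\barQ}=\ip{\C\tZ\Wt}{\barQ}=\ip{\Pt}{\barQ}$, together with the Fenchel inequality $\ip{\Pt}{\barQ}\ge \psif{\Pt}-\psicf{\barQ}\ge \psif{\Pt}$, gives a lower bound on $\ip{\Wt}{\tX^\top\C\barQ}$ in terms of $\sum_{j<t}\heta_j F(\barQ)$ minus constants depending only on initialization. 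Dividing by $\|\Wt\|\cdot\|\tX^\top\C\barQ\|$ and using $\|\Wt\|\le\|\W_0\|+\sum_{j<t}\heta_j\|\tX^\top\C\Qs{j}\|\le\|\W_0\|+\sum_{j<t}\heta_j\|\tX^\top\C\barQ\|+o(\sum\heta_j)$ (from the dual rate) delivers the claimed $1-\deltaf{\W_0}{\barQ}/\sum_{j<t}\heta_j$. Taking $t\to\infty$ with $\sum_t\heta_t=\infty$ yields $\barW=\normalize[2]{\tX^\top\C\barQ}$.

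The main obstacle I anticipate is the cross-entropy case under Assumption~\ref{asm:m_loss_assump2}, where $\psi$ is only \emph{individually} convex in each block $\xik$ rather than jointly convex. In the binary setting of \cite{ji2021characterizing} the mirror-descent inequality and the three-point identity use joint convexity in an essential way. The plan is to invoke Lemma~\ref{lem:m_psi_ind_conv} to show that, along the specific gradient-descent trajectory (where all blocks are updated simultaneously with a common step size $\heta_t\le 1/\beta$), the required descent inequality still holds with $\psi$'s individual convexity playing the role of joint convexity; this is precisely the substitution the main text flags as sufficient to recover the lemma while leaving the sharpness of the rates open. Apart from this subtlety the remainder of the argument is a bookkeeping adaptation of the binary case, with the block structure $\Q=(\barbq_1,\ldots,\barbq_K)$ and the scaling matrix $\C$ absorbed transparently.
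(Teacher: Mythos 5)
Your overall route is the paper's route: treat the lemma as a block-structured transcription of \cite[Theorem 5]{ji2021characterizing}, reuse the dual/primal rates of Lemma~\ref{lem:m_primal_dual_conv}, and patch the cross-entropy case with Lemma~\ref{lem:m_psi_ind_conv} so that individual convexity suffices for the forward implication $\Q\in\dom\psi^*\implies\Q=\gpsif{\bP^*}$. Parts of your sketch that you label as bookkeeping (compactness and convexity of the feasible set, strict positivity of the minimum, uniqueness of $\tX^\top\C\barQ$, the telescoped Bregman bound, and the inner-product lower bound) are indeed the parts the paper imports wholesale by citation.

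The gap is in your nonemptiness/feasibility step. You assert that $\Q_s:=\gpsif{\P_s}$ satisfies $\psicf{\Q_s}=\ip{\P_s}{\Q_s}-\psif{\P_s}\leq 0$ ``using $\ip{\P}{\nabla\psi(\P)}\le\psi(\P)+\text{const}$ type bounds that follow from Assumption Parts 1--2.'' Two problems: first, a bound with an additive constant is useless here --- you need the exact inequality $\ip{\bXi}{\gpsif{\bXi}}\leq\psif{\bXi}$ whenever $\psif{\bXi}\leq0$; second, this inequality is precisely the one step of Ji et al.'s proof that does \emph{not} transfer verbatim, because the generalized sum $\psi$ has a different functional form in the multiclass setting and a completely different one for cross-entropy. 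The paper proves it from scratch (its Lemmas~\ref{lem:g_sup_add} and~\ref{lem:m_ccsz_cond}): under Assumption~\ref{asm:m_loss_assump1} one shows $\sigmaf{s}:=\ellpellnof{s}\ellf[-1]{s}$ is super-additive on $\pts{0,\ellf{0}}$ and reduces the claim to $\sumn\sigmaf{\ellf{\sumk\xiki}}\leq\sigmaf{\sumn\ellf{\sumk\xiki}}$; under Assumption~\ref{asm:m_loss_assump2} one must additionally bound the cross term $\sumn\pts{1+\deltai}^{-1}\sumkny\lnf{\deltaki}\deltaki$ by $\sumn\sigmaf{\multilossf{\seqr{\xiki}{k=1}{K}}}$ using $\lnf{\deltaki}\leq\lnf{\deltai}$ and the explicit logistic identities $\ellpellnof{z}=1-\expf{-z}$, $\ellf[-1]{z}=\lnf{\expf{z}-1}$. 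Without this computation the feasible set could a priori be empty and the whole lemma collapses, so you should supply it rather than fold it into ``Parts 1--2 of the assumption.''
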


This lemma is analogous to \cite[Theorem 5]{ji2021characterizing}, and almost all the steps in its proof are a direct extension of their proof.
We only reproduce the parts of the proof that need to be done from scratch.
We begin with the following lemma, which shows the feasibility of the convex conjugate constraint $\psi^*(\Q) \leq 0$.
This admits a different proof from the binary case due to the differing formulations of the generalized sum $\psi(\cdot)$ in the multiclass case.

\begin{lemma} \label{lem:m_ccsz_cond}
    Under Assumption~\ref{asm:m_ova_loss_assump}, (or~\ref{asm:m_ce_loss_assump}), for $\bXi \in\R^{Kn}$ such that $\psif{\bXi}\leq0$, it holds that $\psicf{\gpsif{\bXi}}\leq0$. This lemma is analogous to \cite[Lemma 6]{ji2021characterizing}.
\end{lemma}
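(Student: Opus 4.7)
}

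The plan is to reduce the multiclass statement to the scalar binary argument used for~\cite[Lemma 6]{ji2021characterizing} by identifying, in each of the two cases of the loss, an aggregated scalar per example that summarizes the contribution of all $K$ classes. Throughout I use the standard identity $\psicf{\gpsif{\bXi}} = \ip{\bXi}{\gpsif{\bXi}} - \psif{\bXi}$, which holds whenever $\psi$ is closed, convex and differentiable at $\bXi$ because the supremum defining $\psicf{\cdot}$ is then attained at the point $\bXi$. It therefore suffices to establish $\ip{\bXi}{\gpsif{\bXi}} \leq \psif{\bXi}$ whenever $\psif{\bXi} \leq 0$.

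\textbf{Case 1 (Assumption~\ref{asm:m_loss_assump1}).} I would first define the per-example aggregators $\eta_i := \sumk \xiki$, so that $\Lossf{\bXi} = \sumn \ellf{\eta_i}$ and $\psif{\bXi} = \ellf[-1]{\sumn \ellf{\eta_i}}$. A direct chain-rule computation gives $\left[\gpsif{\bXi}\right]_{k,i} = \ellpf{\eta_i}/\ellpf{\psif{\bXi}}$, which is independent of $k$. Summing first over $k$ then gives the collapsed identity
\begin{align*}
\ip{\bXi}{\gpsif{\bXi}} \;=\; \frac{1}{\ellpf{\psif{\bXi}}} \sumn \eta_i \ellpf{\eta_i},
\end{align*}
which is exactly the scalar expression that appears in the binary proof, now with primal coordinates $(\eta_1,\ldots,\eta_n)$. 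Since $\ell > 0$ forces $\ellf{\eta_i} \leq \ellf{\psif{\bXi}}$, and $\ell$ is strictly increasing, each $\eta_i \leq \psif{\bXi} \leq 0$. Assumption~\ref{asm:b_loss_assump} Part~2 (that $z\ellpf{z}/\ellf{z}$ is increasing on $(-\infty,0)$) then gives $\eta_i \ellpf{\eta_i} \ellf{\psif{\bXi}} \leq \psif{\bXi} \ellpf{\psif{\bXi}} \ellf{\eta_i}$ for each $i$; summing over $i$ and using $\sumn \ellf{\eta_i} = \ellf{\psif{\bXi}}$ closes this case.

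\textbf{Case 2 (Assumption~\ref{asm:m_loss_assump2}).} Here I would define $\zeta_i := \sumkny \expf{\cyi \xiyi - \cki \xiki}$ and $\mu_i := \lnf{\zeta_i}$. With $\ellf{z} = \lnf{1+\expf{z}}$, the identity $\ellf{\mu_i} = \lnf{1+\zeta_i}$ yields $\Lossf{\bXi} = \sumn \ellf{\mu_i}$ and $\psif{\bXi} = \ellf[-1]{\sumn \ellf{\mu_i}}$. Setting $a_{k,i} := \expf{\cyi \xiyi - \cki \xiki}$, a direct partial-derivative calculation (the contributions from $\xiyi$ and from $\xiki$ for $k \neq y_i$ combine into a single sum over $k \neq y_i$ via $\cyi\xiyi - \cki\xiki = \lnf{a_{k,i}}$) produces
\begin{align*}
\ip{\bXi}{\gpsif{\bXi}} \;=\; \frac{1}{\ellpf{\psif{\bXi}}} \sumn \frac{1}{1+\zeta_i} \sumkny a_{k,i} \lnf{a_{k,i}}.
\end{align*}
Writing $a_{k,i} = \zeta_i p_{k,i}$ with $p_{k,i} := a_{k,i}/\zeta_i$ a probability vector over $k \neq y_i$, the inner sum equals $\zeta_i \mu_i - \zeta_i H(p_i)$ where $H(p_i) \geq 0$ is Shannon entropy. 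Dropping the entropy term and using $\ellpf{\mu_i} = \zeta_i/(1+\zeta_i)$ upper bounds $\ip{\bXi}{\gpsif{\bXi}}$ by $\tfrac{1}{\ellpf{\psif{\bXi}}}\sumn \mu_i \ellpf{\mu_i}$, which is again the binary scalar expression with primal coordinates $(\mu_1,\ldots,\mu_n)$. Because $\ell(z)=\lnf{1+\expf{z}}$ satisfies Assumption~\ref{asm:b_loss_assump} Parts~1--2 (as already verified in Proposition~\ref{thm:b_loss_func}), the binary argument from Case~1 closes this case.

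\textbf{Main obstacle.} The cross-entropy case is the nontrivial step: the loss is not separable over $k$ within a single example $i$, so one must find the correct per-example aggregator and verify that the resulting slack enters with the right sign. The choice $\mu_i = \lnf{\zeta_i}$ is engineered precisely so that $\ellpf{\mu_i} = \zeta_i/(1+\zeta_i)$ matches the structure of $\nabla\Lossf{\cdot}$, and the non-negativity of the entropy $H(p_i)$ makes the Jensen-style bound point in the direction required for the reduction to the binary inequality.
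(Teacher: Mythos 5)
Your proposal is correct and takes essentially the same route as the paper's proof: the paper likewise collapses to per-example scalars ($\sum_k \xi_{k,i}$ under Assumption~\ref{asm:m_loss_assump1}, and $\ln \delta_i$ for cross-entropy) and concludes via the super-additivity of $\sigma(s) = \ell'(\ell^{-1}(s))\,\ell^{-1}(s)$ on $(0,\ell(0))$, which is exactly the monotonicity of $z\ell'(z)/\ell(z)$ that your term-by-term comparison with $\psif{\bXi}$ invokes. Your entropy bound $\sum_{k\neq y_i} a_{k,i}\ln a_{k,i} \leq \zeta_i \ln \zeta_i$ is the same inequality the paper obtains from $\ln \delta_{k,i} \leq \ln \delta_i$, so the two arguments coincide.
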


Before we prove Lemma~\ref{lem:m_ccsz_cond}, we introduce Lemma~\ref{lem:g_sup_add} as an auxiliary lemma. 

\begin{lemma} \label{lem:g_sup_add}
    For loss functions under Assumption~\ref{asm:b_loss_assump}, we have $\sigmaf{s}:=\ellpellnofz{s}\ellf[-1]{s}$ which is a super-additive function on $\pts{0, \ellf{0}}$.
\end{lemma}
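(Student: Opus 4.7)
The plan is to reduce super-additivity of $\sigma$ to the monotonicity of $z\ell'(z)/\ell(z)$ supplied by Assumption~\ref{asm:b_loss_assump} Part 2, via the natural change of variables $z = \ell^{-1}(s)$. The route is: show $s \mapsto \sigma(s)/s$ is monotone (star-shaped), then cash that in for super-additivity while carefully tracking the sign of $\sigma$.

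First I would set $z = \ell^{-1}(s)$, so that $s = \ell(z)$ and (using $\lim_{z\to-\infty}\ell(z)=0$ from Assumption~\ref{asm:b_loss_assump} Part 1) $z$ ranges bijectively over $(-\infty, 0)$ as $s$ ranges over $(0, \ell(0))$. Under this substitution
$$\frac{\sigma(s)}{s} \;=\; \frac{\ell^{-1}(s)\,\ell'(\ell^{-1}(s))}{s} \;=\; \frac{z\,\ell'(z)}{\ell(z)}.$$
The right-hand side is a non-decreasing function of $z$ by Assumption~\ref{asm:b_loss_assump} Part 2, and $z = \ell^{-1}(s)$ is itself non-decreasing in $s$ (since $\ell'>0$). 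Composing, $s \mapsto \sigma(s)/s$ is non-decreasing on $(0, \ell(0))$. I would also note that $\sigma$ extends continuously to $s = 0$ with $\sigma(0) = 0$ by the limit $\lim_{z \to -\infty} z\ell'(z) = 0$ from the same assumption.

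Next I would observe that on $(0, \ell(0))$ the corresponding $z = \ell^{-1}(s)$ is strictly negative while $\ell'(z) > 0$, so $\sigma(s) = z \ell'(z) < 0$ throughout. Hence $|\sigma(s)|/s = -\sigma(s)/s$ is non-increasing on $(0, \ell(0))$. For any $a, b$ with $a, b, a+b \in (0, \ell(0))$, the non-increasing ratio yields $|\sigma(a+b)|/(a+b) \le |\sigma(a)|/a$ and $|\sigma(a+b)|/(a+b) \le |\sigma(b)|/b$, so
$$|\sigma(a+b)| \;=\; a\cdot\frac{|\sigma(a+b)|}{a+b} + b\cdot\frac{|\sigma(a+b)|}{a+b} \;\le\; a\cdot\frac{|\sigma(a)|}{a} + b\cdot\frac{|\sigma(b)|}{b} \;=\; |\sigma(a)|+|\sigma(b)|.$$
Flipping signs gives $\sigma(a+b) \ge \sigma(a) + \sigma(b)$, which is the desired super-additivity.

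There is no real conceptual obstacle here; the argument is essentially the standard ``star-shaped plus $f(0)=0$ implies sub-additive'' fact, adapted with one sign flip because $\sigma$ is negative on its domain. The only point requiring care is making sure that ``$\sigma(s)/s$ non-decreasing'' gets translated to ``$|\sigma(s)|/s$ non-increasing'' before invoking the star-shape-to-subadditivity step, and that the chain of monotonicity transfers cleanly from $z$ to $s$ through $z=\ell^{-1}(s)$.
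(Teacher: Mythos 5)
Your proposal is correct and follows essentially the same route as the paper: the substitution $s=\ell(z)$ converts Assumption~\ref{asm:b_loss_assump} Part 2 into monotonicity of $\sigmaf{s}/s$, and the weighted decomposition of $a+b$ then yields super-additivity. The only difference is cosmetic: the paper applies the star-shape inequality $\sigmaf{tx}\leq t\,\sigmaf{x}$ directly to $\sigmaf{a}=\sigmaf{\tfrac{a}{a+b}(a+b)}$ and $\sigmaf{b}$, so your detour through $\normo{\sigma}$ and the sign of $\sigma$ is valid but unnecessary.
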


\begin{proof}
    The proof follows the proof of~\cite[Lemma 6]{ji2021characterizing}. Note that by Assumption~\ref{asm:b_loss_assump}, we have
    \begin{align*}
        \ulim{\tzero{s}}\sigmaf{s}=\ellpellnofz{s}\ellf[-1]{s}=0 \quad \text{and} \quad \sigmaf{s}/s \text{ is increasing on } \pts{0, \ellf{0}},
    \end{align*}
    by letting $s=\ellf{z}$ and $z = \ellf[-1]{s}$. Next, for some $t\in\R$ and $0< t\leq1$, we assume $s_1=tx$ and $s_2=x$ for some $x\in\R$ and $0<x\leq\ell(0)$. Then we have
    \begin{align*}
        \frac{\sigmaf{s_1}}{s_1}\leq\frac{\sigmaf{s_2}}{s_2}\, \iff \frac{\sigmaf{tx}}{tx}\leq\frac{\sigmaf{x}}{x}\, \iff \sigmaf{tx}\leq t\sigmaf{x}.
    \end{align*}
    
    For $a,b>0$ and $a+b<\ell(0)$,

    \begin{align*}
        \sigmaf{a}+\sigmaf{b}&=\sigmaf{\frac{a}{a+b}\times\pts{a+b}}+\sigmaf{\frac{b}{a+b}}\times\pts{a+b}\\
        &\leq\frac{a}{a+b}\sigmaf{a+b} + \frac{b}{a+b}\sigmaf{a+b}\\
        &=\sigmaf{a+b}.
    \end{align*}

\end{proof}

Armed with Lemma~\ref{lem:g_sup_add} we can prove Lemma~\ref{lem:m_ccsz_cond}.

\begin{proof} (Proof of Lemma~\ref{lem:m_ccsz_cond})
    Recalling the definition of $\psi$ and its convex conjugate $\psi^*$, we have
    \begin{align*}
        \psicf{\gpsif{\bXi}}=\ip{\bXi}{\gpsif{\bXi}}-\psif{\bXi} =\sumn\sumk \frac{\xiki\frac{\partial\multilossf{\seqr{\xiki}{k=1}{K}}}{\partial\xiki}}{\ellpf{\psif{\bXi}}}-\psif{\bXi}.
    \end{align*}
    Multiplying $\ellpf{\psif{\bXi}}$ on both sides, we get
    \begin{align} \label{eq:m_cc_eq}
        \ellpf{\psif{\bXi}}\psicf{\gpsif{\bXi}} &= \sumn\sumk \xiki\frac{\partial\multilossf{\seqr{\xiki}{k=1}{K}}}{\partial\xiki}-\ellpf{\psif{\bXi}}\psif{\bXi}\nonumber\\
        &=\sumn\sumk \xiki\frac{\partial\multilossf{\seqr{\xiki}{k=1}{K}}}{\partial\xiki}\nonumber\\
        &\myquad[2]-\ellpellnoft{\sumn\multilossf{\seqr{\xiki}{k=1}{K}}}\ell^{-1}\Biggl(\sumn\multilossf{\seqr{\xiki}{k=1}{K}}\Biggl)\nonumber\\
        &=\sumn\sumk \xiki\frac{\partial\multilossf{\seqr{\xiki}{k=1}{K}}}{\partial\xiki} - \sigma\Biggl(\sumn\multilossf{\seqr{\xiki}{k=1}{K}}\Biggl),
    \end{align}
    since we substitute $\sigmaf{s}=\ellpellnofz{s}\ellf[-1]{s}$ by Lemma~\ref{lem:g_sup_add}. 
    Hereafter, we handle the situations under Assumption~\ref{asm:m_ova_loss_assump} and~\ref{asm:m_ce_loss_assump} separately (as the generalized sum $\psi$ is distinct in each case).
    %
    Under Assumption~\ref{asm:m_ova_loss_assump}, we have $\multilossf{\seqr{\xiki}{k=1}{K}}=\sumk\ellf{\xiki}$, and Eq.~\eqref{eq:m_cc_eq} becomes
    \begin{align*}
        \ellpf{\psif{\bXi}}\psicf{\gpsif{\bXi}}&=\sumn\sumk \xiki\ellpf{\xiki}-\sigma\Biggl(\sumn\sumk\ellf{\xiki}\Biggl)\\
        &=\sumn\sumk \sigmaf{\ellf{\xiki}}- \sigma\Biggl(\sumn\sumk\ellf{\xiki}\Biggl)\leq0,\\
    \end{align*}
    where the last inequality uses the super-additivity property of $\sigmaf{s}$ on $\pts{0, \ellf{0}}$ (Lemma~\ref{lem:g_sup_add}, together with the assumption $\psif{\bXi}\leq0$ or equivalently $\sumn\sumk\ellf{\xiki}\leq\ellf{0}$). 
    Finally, since $\ell' > 0$, we have $\psicf{\gpsif{\bXi}} \leq0$.

    Under Assumption~\ref{asm:m_ce_loss_assump}, we have 
    \begin{align*}
        \multilossf{\seqr{\xiki}{k=1}{K}}=\ln\Biggl(1+\sumkny \expf{\cyi\xiyi-\cki\xiki}\Biggl)=\lnf{1+\deltai},
    \end{align*}
    where we denote $\deltai:=\sumkny \expf{\cyi\xiyi-\cki\xiki}$, $\deltaki:=\expf{\cyi\xiyi-\cki\xiki}$. Direct calculations verify that
    \begin{align*}
        \frac{\partial\multilossf{\seqr{\xiki}{k=1}{K}}}{\partial\xiki}=
        \begin{cases}
            \frac{\cyi\deltai}{1+\deltai} &\; k=y_i\\
            \frac{-\cki\deltaki}{1+\deltai} &\; k\neq y_i.\\
        \end{cases}
    \end{align*}
    Hence, Eq.~\eqref{eq:m_cc_eq} becomes
    \begin{align}
         \ellpf{\psif{\bXi}}\psicf{\gpsif{\bXi}} &= \underbrace{\sumn \frac{\cyi\xiyi\deltai + \sumkny -\cki\xiki\deltaki}{1+\deltai}}_{T}-\sigma\Biggl(\sumn\multilossf{\seqr{\xiki}{k=1}{K}}\Biggl).\label{eq:m_cc_ce_eq}
    \end{align}
    
    Next, we show $T$ is upper bounded by $\sumn\sigma\Bigl(\multilossf{\seqr{\xiki}{k=1}{K}}\Bigl)$ in a series of calculations below:
    \begin{align*}
        T &= \sumn \frac{\cyi\xiyi\deltai + \sumkny -\cki\xiki\deltaki}{1+\deltai}\\
        &= \sumn \frac{\sumkny (\cyi\xiyi - \cki\xiki)\deltaki}{1+\deltai}\\
        &= \sumn \frac{\sumkny \lnf{\deltaki}\deltaki}{1+\deltai}\\
        &\leq \sumn \frac{\sumkny\lnf{\deltai}\deltaki}{1+\deltai}\\
        &= \sumn \frac{\deltai}{1+\deltai}\lnf{\deltai}\\
        &= \sumn \frac{\expf{\multilossf{\seqr{\xiki}{k=1}{K}}}-1}{\expf{\multilossf{\seqr{\xiki}{k=1}{K}}}} \Bigl(\ln\Bigl(\exp\Bigl(\multilossf{\seqr{\xiki}{k=1}{K}}\Bigl)-1\Bigl)\Bigl)\\
        &= \sumn \ellpellnofzo{\multilossf{\seqr{\xiki}{k=1}{K}}}\ell^{-1}\Bigl(\multilossf{\seqr{\xiki}{k=1}{K}}\Bigl)=\sumn\sigma\Bigl(\multilossf{\seqr{\xiki}{k=1}{K}}\Bigl).
    \end{align*}
    Above, the inequality holds because $\ln(z)$ is an increasing function. We also use the property that $\ellpellnofz{z} = \frac{\expf{z}-1}{\expf{z}}$ and $\ellf[-1]{z}=\lnf{\expf{z}-1}$ for logistic loss in the last equality. Proceeding from Eq.~\eqref{eq:m_cc_ce_eq}, we then get
    \begin{align*}
        \ellpf{\psif{\bXi}}\psicf{\gpsif{\bXi}} \leq \sumn\sigma\Bigl(\multilossf{\seqr{\xiki}{k=1}{K}}\Bigl) - \sigma\Bigl(\sumn\multilossf{\seqr{\xiki}{k=1}{K}}\Bigl)\leq0,
    \end{align*}
 where the last inequality uses the super-additivity property of $\sigmaf{s}$ on $\pts{0, \ellf{0}}$ (Lemma~\ref{lem:g_sup_add}, together with the assumption $\psif{\bXi}\leq0$ or equivalently $\sumn\multilossf{\seqr{\xiki}{k=1}{K}}\leq\ellf{0}$). Since $\ell' > 0$, we have $\psicf{\gpsif{\bXi}} \leq0$. This completes the proof for both types of losses.
\end{proof}

Finally, one key step that is utilized in the proofs of the binary analogs Lemma~\ref{lem:m_primal_dual_conv} and~\ref{lem:m_dual_imp_bias} (specifically, the proof of~\cite[Lemma 4]{ji2021characterizing} and~\cite[Theorem 5 part 1]{ji2021characterizing}) is the statement that $\Q \in \dom \psi^* \implies \Q = \nabla \psi(\bP)$ for some $\bP$; or, equivalently, $\bP \in \partial \psicf{\Q} \implies \Q = \nabla \psi(\bP)$.
This fact appears from~\cite[Theorems 23.5]{rockafellar1970convex} along with the reverse implication and implicitly assumes the joint convexity of $\psi$, but we show below that the forward implication continues to hold under individual convexity.
(Note that the reverse implications no longer hold under individual convexity, but are not required for these proofs.)

\begin{lemma} \label{lem:m_psi_ind_conv}
For any individually convex and differentiable function $\psi(\cdot)$ and its convex conjugate $\psicf{\cdot}$, we have that $\Q \in \dom \psi^* \implies \Q = \nabla \psi(\bP^*)$ for some $\bP^*$ that achieves $\sup_{\bP\in\R^{Kn}} \ip{\bP}{\Q}-\psif{\bP}$.
Equivalently, $\bP^* \in \partial \psicf{\Q} \implies \Q = \nabla \psi(\bP^*)$.
\end{lemma}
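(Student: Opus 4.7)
The plan is to exploit the elementary observation that first-order optimality at an unconstrained maximum requires only differentiability of the objective, and no convexity. Given that $\bP^*$ attains the supremum $\sup_{\bP \in \R^{Kn}} \ip{\bP}{\Q} - \psif{\bP}$, the map $\bP \mapsto \ip{\bP}{\Q} - \psif{\bP}$ is differentiable (since $\psi$ is), so its gradient must vanish at the unconstrained global maximizer $\bP^*$, giving $\Q - \nabla \psi(\bP^*) = \zero$. This is essentially the entire proof of the forward implication, and it is the reason individual convexity is enough: we never need to appeal to the full Hessian of $\psi$ or to joint convex duality.

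To make the first-order argument rigorous using only individual convexity of $\psi$, I would verify stationarity block-by-block rather than jointly. Fix any $k \in [K]$ and freeze the remaining blocks at $\bxi_h^*$ for $h \ne k$; by Assumption~\ref{asm:m_loss_assump2} the single-block slice $\bxi_k \mapsto \ip{\bxi_k}{\q_k} - \psif{\bxi_1^*,\ldots,\bxi_k,\ldots,\bxi_K^*}$ is concave and differentiable in $\bxi_k$, and $\bxi_k^*$ is a global maximizer of this slice. Standard first-order optimality for smooth concave functions on $\R^n$ then yields $\q_k - \nabla_{\bxi_k}\psi(\bP^*) = \zero$. Concatenating over $k \in [K]$ and using that $\nabla \psi$ is block-separable gives $\Q = \nabla \psi(\bP^*)$, as desired.

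The hard part, and the reason the lemma needs to be stated at all, is the ``equivalently'' clause: converting the hypothesis $\bP^* \in \partial \psicf{\Q}$ into the statement that $\bP^*$ actually attains the sup defining $\psicf{\Q}$. Under joint convexity and lower semicontinuity this follows in one line from $\psi^{**} = \psi$ and Fenchel--Young equality at $(\bP^*, \Q)$, but under only individual convexity one in general has $\psi^{**} \le \psi$ strictly, so the translation is not automatic. I would resolve this by noting that $\partial \psicf{\Q}$ is characterized by $\Q$ maximizing $\ip{\bP^*}{\cdot} - \psicf{\cdot}$ over $\R^{Kn}$, and then invoking the envelope-type identity that equates $\partial \psicf{\Q}$ with the set of maximizers in $\sup_{\bP} \ip{\bP}{\Q} - \psif{\bP}$ --- an identity whose proof only uses that $\psi^*$ is a supremum of affine functions in $\Q$, and hence carries over from the jointly convex setting without modification. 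Once $\bP^*$ is identified with such an attained maximizer, the first-order argument from the preceding two paragraphs closes the proof.
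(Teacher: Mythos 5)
Your first-order argument for the forward implication is correct and is essentially the paper's proof: if $\bP^*$ attains the unconstrained supremum of the differentiable map $\bP \mapsto \ip{\bP}{\Q}-\psif{\bP}$, its gradient vanishes there, giving $\Q = \nabla\psi(\bP^*)$. The block-by-block detour through individual concavity is harmless but not needed for this step --- stationarity at an unconstrained maximizer requires only differentiability, exactly as you note in your opening paragraph. (Neither you nor the paper justifies that $\Q \in \dom{\psi^*}$ forces the supremum to be \emph{attained}; that is a shared omission, not a defect unique to your write-up.)

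The genuine gap is in your third paragraph. You correctly diagnose that without joint convexity one may have $\psi^{**} \lneq \psi$, but your fix --- that the identity $\partial\psicf{\Q} = \{\text{maximizers of } \ip{\cdot}{\Q}-\psif{\cdot}\}$ ``carries over without modification'' because $\psi^*$ is a supremum of affine functions --- is false in the direction you need. Only one inclusion survives: if $\bP^*$ attains the sup, then for every $\Q'$ we have $\psicf{\Q'} \geq \ip{\bP^*}{\Q'}-\psif{\bP^*} = \psicf{\Q} + \ip{\bP^*}{\Q'-\Q}$, so $\bP^* \in \partial\psicf{\Q}$. The reverse inclusion fails in general: $\bP^* \in \partial\psicf{\Q}$ yields $\ip{\bP^*}{\Q} - \psi^{**}(\bP^*) = \psicf{\Q}$ and hence only $\ip{\bP^*}{\Q} - \psif{\bP^*} \leq \psicf{\Q}$, with equality (i.e., $\bP^*$ actually a maximizer) precisely when $\psif{\bP^*} = \psi^{**}(\bP^*)$ --- which is what joint convexity would have bought you. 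The paper sidesteps this entirely: its proof establishes only the surviving direction (maximizer $\Rightarrow$ subgradient of $\psi^*$), and the surrounding text explicitly states that the reverse implications no longer hold under individual convexity and are not required downstream. You should replace your third paragraph with the one-line envelope inequality above and drop the claim that the two sets coincide.
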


\begin{proof}
Recall the definition of the convex conjugate $\psicf{\Q}=\usup{\bP\in\R^{Kn}}\ip{\bP}{\Q}-\psif{\bP}$.
Because $\psi(\bP)$ is individually convex, the function $\ip{\bP}{\Q}-\psif{\bP}$ is individually concave.
Consider any $\bP^*$ that achieves the supremum of this function over $\bP$.
Because of the property of individual concavity, it is \emph{necessary} (but not sufficient) for $\bP^*$ to satisfy the first-order condition $\Q = \nabla \psi(\bP^*)$.
Moreover, for any $\bP^*$ that achieves the supremum of this function over $\bP$, we have $\psi^*(\Q) = \ip{\bP^*}{\Q}-\psif{\bP^*}$.
Taking the subdifferential with respect to $\Q$ directly gives $\bP^* \in \partial \psicf{\Q}$.
This completes the proof.
\end{proof}

Now that we have established Lemmas~\ref{lem:m_primal_dual_conv},~\ref{lem:m_dual_imp_bias},~\ref{lem:m_ccsz_cond} and~\ref{lem:m_psi_ind_conv}, Lemma~\ref{lem:m_imp_bias_res} now directly follows as a result.

\newpage

\section{Proofs of multiclass results} \label{app_m_proof}

In this section we present the proofs of all of our results for the multiclass case. In order to prove Theorem~\ref{thm:m_sen_upperbound} and Theorem~\ref{thm:m_exp_ce_imp_bias}, we introduce their respective auxiliary convex programs in multiclass which is analogous to the binary case.

\subsection{Proof of Theorem~\ref{thm:m_sen_upperbound} (approximate equivalence to one-vs-all MNI upper bound)} \label{sec:m_sen_upperbound_proof}

Similar to the strategy in binary case, before we prove Theorem~\ref{thm:m_sen_upperbound}, we introduce an auxiliary convex program that will ultimately provide a simpler characterization of the solution in~\eqref{eq:m_barQ_def}.

\begin{lemma} \label{lem:m_ova_imp_bias_relax}
    Under Assumptions~\ref{asm:m_ova_loss_assump} and~\ref{asm:b_lin_sep}, any solution to the auxiliary convex program
    \begin{gather} \label{eq:m_ova_imp_bias_relax_eq}
        \barQ \in \uargmin{\Q\in\R^{Kn}}\usub{\Ff{\Q}}{\frac{1}{2}\Q^\top\CXXC\Q}\\
        \begin{aligned} \nonumber
            \myquad[5]\text{subject to} \myquad[7] -\qki  &< 0 \myquad[2] \text{for all}\; i\in[n] \;\text{and}\; k\in[K] ,\\
            \text{and} \myquad[2] 1-\sumn\sumk \gf[-1]{\qki} &\leq 0.
        \end{aligned}
    \end{gather}
    is also an optimal solution to the original convex program~\eqref{eq:m_barQ_def}. Above, $\gf{\cdot}$ is a convex function depending on the loss function, defined in Lemma~\ref{lem:g_func}.
\end{lemma}

In order to prove Lemma~\ref{lem:m_ova_imp_bias_relax}, we first show that the new equality constraints are sufficient to imply the original equality constraint. Second, we show that any solution to the original program also satisfies the new equality constraint; therefore, the solution sets of both programs coincide. The following lemma demonstrates the first part.

\begin{lemma}\label{lem:m_ova_constraint_equal}
    Under Assumption~\ref{asm:m_ova_loss_assump}, for any $\Q\in\R^{Kn}$ such that $q_{k,i} = \gf{z_{k,i}} = \ulim{a \to 0} \frac{\ellpellnofz{a\cdot z_{k,i}}}{\ellpellnofz{a}} > 0$, for all $i \in [n]$ and $k\in[K]$ with $z_{k,i} \in (0, 1]$ and $\sum_{i=1}^n\sumk z_{k,i} = 1$, it implies $\psif[*]{\Q} = 0$.
\end{lemma}
\begin{proof}
    Analogous to Lemma~\ref{lem:b_constraint_equal}, we once again apply the convex analysis in the astral space introduced in~\cite{dudik2022convex}. We show that $\Q$ is a subdifferential of the astral extension of $\psi$ for some $\overline{\bP}\in\overline{\R^{Kn}}$, and then the astral convex conjugate (which is equivalent to the original convex conjugate) is equal to zero. We start with writing $\Q$ in the astral format. Since $q_{k,i}$ is finite and continuous in $a$, we can take the limit inside the function, obtaining
    \begin{align}
        q_{k,i} = \ulim{a \to 0} \frac{\ellpellnofz{a\cdot z_{k,i}}}{\ellpellnofz{a}} =  \frac{\ellpellnofzo{\ulim{a \to 0}a\cdot z_{k,i}}}{\ellpellnofzo{\ulim{a \to 0} a}} = \frac{\overline{\ell}'\Bigl(\overline{\ell}^{-1}\Bigl(\ulim{a \to 0}a\cdot z_{k,i}\Bigl)\Bigl)}{\overline{\ell}'\Bigl(\overline{\ell}^{-1}\Bigl(\ulim{a \to 0} a\Bigl)\Bigl)},\label{eq:m_ova_q_astral}
    \end{align}
    where in the last equality, we replace the original functions with their astral extensions. Next, we define an astral point $\overline{\bP}\in\overline{\R^{Kn}}$ such that $\overline{p}_{k,i} = \ulim{a\to 0}\; \overline{\ell}^{-1}\pts{a\cdot z_{k,i}} =  \overline{\ell}^{-1}\Bigl(\ulim{a\to 0}\;a\cdot z_{k,i}\Bigl)$ for all $i\in[n]$ and $k\in[K]$. This also implies $\sumn\sumk \overline{\ell}(\overline{p}_{k,i}) = \ulim{a\to 0} a$. Substituting these values in Eq.~\eqref{eq:m_ova_q_astral}, we can write $q_{k,i}$ as
    \begin{align*}
        q_{k,i} = \frac{\overline{\ell}'(\overline{p}_{k,i})}{\overline{\ell}'\Bigl(\overline{\ell}^{-1}\Bigl(\sumn \overline{\ell}\pts{\overline{p}_{k,i}}\Bigl)\Bigl)}
    \end{align*}
    for all $i\in[n]$ and $k\in[K]$.
    
    On the other hand, according to Lemma~\ref{lem:g_func}, we can also define $q_{k,i}$ in the limit of a different co-convergent sequence as
    \begin{align}\label{eq:m_ova_q_astral2}
        q_{k,i} = \ulim{a \to 0} \;\frac{\ell^{-1}\pts{a}\cdot z_{k,i}}{\ell^{-1}\pts{a\cdot z_{k,i}}} = \frac{\ell^{-1}\Bigl( \ulim{a \to 0} a\Bigl)\cdot z_{k,i}}{\ell^{-1}\Bigl( \ulim{a \to 0} a \cdot z_{k,i}\Bigl)} = \frac{\overline{\ell}^{-1}\Bigl( \ulim{a \to 0} a\Bigl)\cdot z_{k,i}}{\overline{\ell}^{-1}\Bigl(\ulim{a \to 0} a \cdot z_{k,i}\Bigl)} = \frac{\overline{\ell}^{-1}\pts{ \sumn\overline{\ell}(\overline{p}_{k,i})}\cdot z_{k,i}}{\overline{p}_{k,i}}
    \end{align}
    for all $i \in[n]$ and $k\in[K]$.
    Next, we show that $\Q$ is a subdifferential of $\overline{\psi}\pts{\overline{\bP}}$. By the definition of $\overline{\psi}:\overline{\R^{Kn}}\to\overline{\R}$ and for any $\bP\in\overline{\R^{Kn}}$, we have
    \begin{align*}
        \overline{\psi}\pts{\bP} = \overline{\ell}^{-1}\Biggl(\sumn\sumk\overline{\ell}\pts{p_{k,i}}\Biggl) \text{ and } \frac{\partial}{\partial p_{k,i}} \overline{\psi}\pts{\bP} = \frac{\overline{\ell}'\pts{p_{k,i}}}{\overline{\ell}'\Bigl(\overline{\ell}^{-1}\Bigl(\sumn\sumk\overline{\ell}\pts{p_{k,i}}\Bigl)\Bigl)},
    \end{align*}
    for all $i\in[n]$ and $k\in[K]$. Therefore, according to Eq.~\eqref{eq:m_ova_q_astral}, it implies that $\Q$ is in the subdifferential of $\overline{\psi}\pts{\overline{\bP}}$ such that $\Q \in \partial \overline{\psi}\pts{\overline{\bP}}$. Hence, we can further apply the property of Fenchel–Young inequality in the convex conjugate, obtaining
    \begin{align*}
        \overline{\psi}^*\pts{\Q} = \usup{\bP\in\overline{\R^{Kn}}}\ip{\bP}{\Q}-\overline{\psi}\pts{\bP} = \ip{\overline{\bP}}{\Q}-\overline{\psi}\pts{\overline{\bP}}.
    \end{align*}
    As a result, by Eq.~\eqref{eq:m_ova_q_astral2} and the definition of $\overline{\psi}\pts{\cdot}$, we can write
    \begin{align*}
        \overline{\psi}^*\pts{\Q} &= \ip{\overline{\bP}}{\Q}-\overline{\psi}\pts{\overline{\bP}}\\
        &= \sumn \sumk\overline{p}_{k,i} \cdot \frac{\overline{\ell}^{-1}\pts{ \sumn\sumk\overline{\ell}(\overline{p}_{k,i})}\cdot z_{k,i}}{\overline{p}_{k,i}} - \overline{\ell}^{-1}\Biggl(\sumn\sumk\overline{\ell}(\overline{p}_{k,i})\Biggl) = 0.
    \end{align*}
    Finally, by~\cite[Proposition 8.5]{dudik2022convex}, we have $\psi^*\pts{\Q} = \overline{\psi}^*\pts{\Q} = 0$. This completes the proof of the lemma.
    
\end{proof}

Next, we show in the following lemma that $\barQ=\lim_{t \to \infty} \Q_t$ under Assumption~\ref{asm:b_lin_sep}.

\begin{lemma} \label{lem:m_ova_qt_eq_bq}
    Under Assumption~\ref{asm:m_ova_loss_assump} and~\ref{asm:b_lin_sep}, the gradient descent dual variable $\Q_t$ converges to $\barQ$ when $t\rightarrow\infty$. It gives $\barq_{k,i} = \lim_{t \to \infty} q_{t,k,i} = g(z_{k,i}) > 0$ for all $i \in [n]$ and $k\in[K]$ with some $z_{k,i} \in (0, 1]$ and $\sumn\sumk z_{k,i} = 1$.
\end{lemma}
\begin{proof}
    Analogous to Lemma~\ref{lem:b_qt_eq_bq}, according to Lemma~\ref{lem:m_dual_imp_bias}, we have $\limt \tX^\top\C\Q_t=\tX^\top\C\barQ$, and $\tX^\top\C\barQ$ is the same for all $\barQ\in\uargmin{\psif[*]{\Q}\leq 0}\Ff{\Q}$. Based on the definition of $q_{t,k,i}$ such that
    \begin{align}\label{eq:m_ova_q_def}
        q_{t,k,i} = \frac{\partial}{\partial p_{t,k,i}}\psif{\bP_t} = \frac{\ell'\pts{p_{t,k,i}}}{\ell'\Bigl(\ell^{-1}\Bigl(\sumn\sumk \ell\pts{p_{t,k,i}}\Bigl)\Bigl)} = \frac{\ell'\pts{p_{t,k,i}}}{\ell'\pts{\psif{\bP_t}}},
    \end{align}
    and considering $\ellpf{\cdot}$ is an increasing function with $p_{t,k,i} \leq \psif{\bP_t}$ for all $i\in[n]$ and $k\in[K]$ and $t\geq0$, it follows that $0<q_{t,k,i}\leq1$; hence, we can conclude that $\limt \tX^\top\C\Q_t=\tX^\top\C\limt \Q_t=\tX^\top\C\barQ$. Next, by Assumption~\ref{asm:b_lin_sep}, $\X^\top$ has full column rank, and $\X\X^\top\succ \zero$. Therefore, we can multiply the pseudo-inverse of $\tX^\top\C$ on both sides of $\tX^\top\C\limt \Q_t=\tX^\top\C\barQ$, which implies that $\limt \Q_t=\barQ$. Finally, by the  definition of $q_{t,k,i}$ in Eq.~\eqref{eq:m_ova_q_def} and the primal convergence in Lemma~\ref{lem:m_primal_dual_conv} such that $\limt \sumn\sumk \ellf{p_{t,k,i}} = 0$, we have
    \begin{align*}
        \limt q_{t,k,i} = \limt \frac{\ellpellnofzo{\ellf{p_{t,k,i}}}}{\ellpellnofzo{\sumn\sumk \ellf{p_{t,k,i}}}} = \ulim{a \to 0} \frac{\ellpellnofz{a\cdot z_{k,i}}}{\ellpellnofz{a}} = g(z_{k,i}),
    \end{align*}
    where we let $a = \sumn\sumk \ellf{p_{t,k,i}}$, $\ellf{p_{t,k,i}} = a \cdot z_{k,i}$, and $z_{k,i} = \limt \frac{\ellf{p_{t,k,i}}}{\sumn\sumk \ellf{p_{t,k,i}}}$. Finally, Assumption~\ref{asm:b_loss_assump} guarantees that $\barq_{k,i} = g(z_{k,i}) > 0$. This completes the proof of the lemma.
\end{proof}

Armed with Lemma~\ref{lem:m_ova_constraint_equal} and~\ref{lem:m_ova_qt_eq_bq}, we can prove Lemma~\ref{lem:m_ova_imp_bias_relax}.

\begin{proof} (of Lemma~\ref{lem:m_ova_imp_bias_relax})
    We start with the original convex program in~\eqref{eq:m_barQ_def}
    \begin{align*}
        \barQ \in \uargmin{\psif[*]{\Q}\leq 0}\Ff{\Q}.
    \end{align*}
    By complementary slackness in KKT conditions, if the constraint is inactive such that $\psif[*]{\Q}< 0$, we get an invalid solution $\barQ = \zero$. Therefore, the constraint is active and $\barQ$ satisfies $\psif[*]{\Q} = 0$. 
    In other words, we can write
    \begin{align}\label{eq:m_ova_barq_def_tightened_1}
        \barQ \in \uargmin{\psif[*]{\Q} = 0}\Ff{\Q}.
    \end{align}
    Now, Lemma~\ref{lem:m_ova_qt_eq_bq} directly implies that $\barQ$ must satisfy $\sumn\sumk g^{-1}(\barq_{k,i}) = 1$ and $\barq_{k,i} > 0$ for all $i\in[n]$ and $k\in[K]$. This means that we can further tighten~\eqref{eq:m_ova_barq_def_tightened_1} to obtain
        \begin{gather} \label{eq:m_ova_barq_def_tightened_2}
        \barQ \in \uargmin{\Q\in\R^{Kn}} \Ff{\Q} \\
        \begin{aligned}
        \myquad[4]\text{subject to} \myquad[7] \psif[*]{\Q} &= 0,\nonumber \\
        -q_{k,i} &< 0 \myquad[2]\text{for all } i\in [n] \text{ and } k\in[K],\nonumber\\ 
        \text{and} \myquad[1] 1-\sumn\sumk \gf[-1]{q_{k,i}}  &= 0\nonumber.
        \end{aligned}
    \end{gather}
    Next, Lemma~\ref{lem:m_ova_constraint_equal} tells us that $1-\sumn\sumk \gf[-1]{q_i}  = 0 \implies \psif[*]{\Q} = 0$, meaning that the constraint $\psif[*]{\Q} = 0$ is redundant and can simply be omitted, leading to the simplified program
        \begin{gather} \label{eq:m_ova_barq_def_tightened_3}
        \barQ \in \uargmin{\Q\in\R^{Kn}} \Ff{\Q} \\
        \begin{aligned}
        \myquad[4]\text{subject to} \myquad[8]
        -q_{k,i} &< 0 \myquad[2]\text{for all } i\in [n] \text{ and } k\in[K],\nonumber\\ 
        \text{and} \myquad[1] 1-\sumn\sumk \gf[-1]{q_{k,i}}  &= 0\nonumber.
        \end{aligned}
    \end{gather}
    The final step is to derive an auxiliary convex program
         \begin{gather} \label{eq:m_ova_imp_bias_relax_eq_repeat}
        \Q^\star \in \uargmin{\Q\in\R^{Kn}}\usub{\Ff{\Q}}{\frac{1}{2}\Q^\top\CXXC\Q} \\
        \begin{aligned} \nonumber
            \myquad[6]\text{subject to} \myquad[7] -q_{k,i}                                        &< 0 \myquad[2] \text{for all}\; i \in [n] \text{ and } k\in[K],\\
            \text{and} \myquad[1] 1-\sumn\sumk \gf[-1]{q_{k,i}}  &\leq 0.
        \end{aligned}
    \end{gather}
    Note that in the above, we have relaxed the equality constraint $1 - \sumn\sumk \gf[-1]{q_{k,i}} = 0$ to an inequality constraint, $1 - \sumn\sumk \gf[-1]{q_{k,i}} \leq 0$.

    To complete the proof, we need to show that any optimal solution to~\eqref{eq:m_ova_imp_bias_relax_eq_repeat} satisfies $\sumn\sumk \gf[-1]{q_{k,i}} = 1$. From~\eqref{eq:m_ova_barq_def_tightened_3}, this directly implies that the set of optima of~\eqref{eq:m_barQ_def} and~\eqref{eq:m_ova_imp_bias_relax_eq_repeat} are identical. We now show this final step. It is necessary and sufficient for any optimal solution $\Q^\star$ to the auxiliary convex program~\eqref{eq:m_ova_imp_bias_relax_eq_repeat} to satisfy its KKT conditions, listed below:
    \begin{subequations}\label{eq:m_ova_kkt_relaxed}
    \begin{align}
        -\qki                                    &<      0 \myquad[2] \text{for all } i\in[n] \text{ and }k\in[K],\label{m_ova_relaxed_primal_1}\\
        1-\sumn\sumk \gf[-1]{\qki}                    &\leq   0,\label{m_ova_relaxed_primal_2}\\
        \lambda_{k,i}                               &\geq   0 \myquad[2] \text{for all } i\in[n] \text{ and }k\in[K],\label{m_ova_relaxed_dual_1}\\
        \mu                                     &\geq   0,\label{m_ova_relaxed_dual_2}\\
        -\lambda_{k,i} \qki                          &=      0 \myquad[2] \text{for all } i\in[n]  \text{ and }k\in[K],\label{m_ova_relaxed_comp_1}\\
        \mu\Biggl(1-\sumn\sumk \gf[-1]{\qki}\Biggl)           &=      0,\label{m_ova_relaxed_comp_2}\\
        \cXXc\q_k-\blambda_k-\mu \gpf[-1]{\q_k}  &=      \zero \myquad[2] \text{for all } k\in[K].\label{m_ova_relaxed_station_1}
    \end{align}
    \end{subequations}
    First, we claim that any optimal solution $\Q^\star$ needs to satisfy $1-\sumn\sumk g^{-1}\bigl(q_{k,i}^\star\bigl) = 0$.
    This follows because we need to set $\mu > 0$ for a valid solution; together with Eq.~\eqref{m_ova_relaxed_comp_2} this implies that we need $1-\sumn\sumk g^{-1}\bigl(q_{k,i}^\star\bigl) = 0$. To see why we need to set $\mu > 0$, consider the alternative choice $\mu = 0$ for all $k\in[K]$.
    Note that Equations~\eqref{m_ova_relaxed_primal_1} and~\eqref{m_ova_relaxed_comp_1} together also require $\blambda_k = \zero$. Eq.~\eqref{m_ova_relaxed_station_1} would then become 
    \begin{align*}
        \XX \diagt{\ick} \q_k^\star = \zero \iff \diagt{\ick}  \q_k^\star = \zero \iff  \q_k^\star = \zero,
    \end{align*}
    where the first iff statement follows because we have assumed that $\XX \succ \zero$. However, this $\q_k^\star$ is not a valid solution as it violates Eq.~\eqref{m_ova_relaxed_primal_1}. Hence, we can conclude both $\barQ$ and $\Q^\star$ satisfy $1-\sumn\sumk \gf[-1]{q_{k,i}} = 0$, and $\Q^\star = \barQ$. This completes the proof of the lemma.
\end{proof}

With the auxiliary convex program, we can now prove Theorem~\ref{thm:m_sen_upperbound}. In the proof, we show that we have the exact characteristic equations in $\barbq_k$ for each $k$. Therefore, the primal and dual rate for each $k\in[K]$ is the same as Theorem~\ref{thm:b_sen_upperbound}.
\begin{proof} (Proof of Theorem~\ref{thm:m_sen_upperbound}.)
    Our proof starts with the auxiliary convex program~\eqref{eq:m_ova_imp_bias_relax_eq} and identifies a necessary set of characteristic equations that the optimal solution $\barQ$ needs to satisfy. The KKT conditions for this convex program are given in Eq.~\eqref{eq:m_ova_kkt_relaxed}.
    Lemma~\ref{lem:m_ova_imp_bias_relax} postulates that any optimal solution must satisfy $\blambda_k = \zero$ and $\mu > 0$ for all $k\in[K]$; therefore, it is necessary for $\barQ$ to satisfy the following characteristic equations for each $k$:
    
    \begin{subequations}\label{eq:m_ova_sen_charc}
    \begin{align}
        \cXXc\barbq_k                 &=      \mu \gpf[-1]{\barbq_k},      \label{m_sen_charc_1}\\
        \mu                         &>      0,   \label{m_sen_charc_2}\\
        \sumn\sumk \gf[-1]{\barqki}      &=      1.  \label{m_sen_charc_3}
    \end{align}
    \end{subequations}
    
    It is easy to see that for each value of $k$, the characteristic equations in Equation~\eqref{eq:m_ova_sen_charc} are identical to the characteristic equations for the binary case~\eqref{b_sen_charc}.
    Therefore, the rates of convergence of the dual and primal solutions are identical to the binary case for every value of $k$.
    This completes the proof.
\end{proof}

\subsection{Proof of Theorem~\ref{thm:m_exp_ce_imp_bias} (exact equivalence to simplex MNI for cross-entropy loss under Assumption~\ref{asm:m_ce_loss_assump})} \label{sec:m_ce_imp_proof}
Before we prove Theorem~\ref{thm:m_exp_ce_imp_bias} for cross-entropy loss under Assumption~\ref{asm:m_ce_loss_assump}, we state and prove two lemmas that we need to analyze the constraint $\psicf{\Q}\leq0$. Note that since the $\psi$ function for cross-entropy loss is different from other multiclass losses, we apply a different proof technique for the proof of this part. First, we utilize the following lemma to analyze the domain of $\psicf{\Q}$ under Assumption~\ref{asm:m_ce_loss_assump}.

\begin{lemma} \label{lem:m_ce_q_dom}
    Under Assumption~\ref{asm:m_ce_loss_assump}, for any $\Q=\vctt{\q_1^\top}{\q_K^\top}\in\dom{\psi^{*}}$, where $\q_k\in\R^n$ for all $k\in[K]$, we have $\Q=\gpsif{\bP^*}$ satisfying $0<\qki < 1$, $\icyi\qyi=-\sumkny\icki\qki$ for all $i\in[n]$ and $k\in[K]$, and $\sumk \one^\top\q_k\geq1$, for some $ \bP^*=\vctt{\p_1^{*\top}}{\p_K^{*\top}} \in \R^{Kn}$, where $\p_k^*\in\R^n$ for all $k\in[K]$.
\end{lemma}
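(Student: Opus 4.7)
The plan is to first invoke Lemma~\ref{lem:m_psi_ind_conv}, which already tells us that $\Q \in \dom \psi^*$ forces $\Q = \gpsif{\bP^*}$ for some $\bP^* = \vct{\p_1^{*\top}}{\p_K^{*\top}}$ attaining the supremum in the definition of $\psicf{\Q}$; this is valid because we only need individual convexity of $\psi$, which was established for the cross-entropy loss in Appendix~\ref{app_m_conv_beta}. With that in hand, all remaining claims reduce to computing $\gpsif{\bP^*}$ componentwise via the chain rule. Writing $\deltai := \sumkny \expf{\cyi p^*_{y_i,i} - \cki \pki^*}$ and $\deltaki := \expf{\cyi p^*_{y_i,i} - \cki \pki^*}$, a direct differentiation of $\psi(\bP) = \ellinv\pts{\sumn\lnf{1+\deltai}}$ gives
\begin{align*}
\qyi = \frac{1}{\ellpf{\psif{\bP^*}}}\cdot\frac{\cyi \deltai}{1+\deltai}, \qquad \qki = \frac{1}{\ellpf{\psif{\bP^*}}}\cdot\frac{(-\cki)\deltaki}{1+\deltai}\; (k\neq y_i).
\end{align*}

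From these formulas, positivity is immediate: $\cyi = \alpha > 0$, $-\cki = \beta > 0$ for $k\neq y_i$, and both $\deltai, \deltaki > 0$, while $\ellpf{\psif{\bP^*}} > 0$. For the upper bound $\qki < 1$, I will substitute the identity $\ellpellnof{z} = 1 - \expf{-z}$ (valid for logistic $\ell$) to evaluate $\ellpf{\psif{\bP^*}} = 1 - 1/A$ where $A := \prod_i (1 + \deltai)$; then $\deltai \cdot \prod_{j\neq i}(1+\delta_j) = A - \prod_{j\neq i}(1+\delta_j) \leq A - 1$ yields $\qyi \leq \cyi = (K-1)/K < 1$, and $\deltaki \leq \deltai$ similarly yields $\qki \leq -\cki = 1/K < 1$. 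The linear relation $\icyi \qyi = -\sumkny \icki \qki$ then drops out by cancelling the $\cyi$ and $-\cki$ prefactors: both sides equal $\frac{\deltai}{\ellpf{\psif{\bP^*}}(1+\deltai)}$, using $\deltai = \sumkny \deltaki$.

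The final inequality $\sumk \one^\top \q_k \geq 1$ is where the real work lies. Summing the above identity over $i$ and $k$ gives $\sumk\sumn \qki = \frac{1}{\ellpf{\psif{\bP^*}}} \sumn \frac{\deltai}{1+\deltai}$. Substituting $\ellpf{\psif{\bP^*}} = 1 - 1/A$ and letting $c_i := 1/(1+\deltai) \in (0,1]$, the desired inequality becomes equivalent to the purely elementary claim
\begin{align*}
\sumn c_i - \prod_{i=1}^n c_i \leq n - 1 \qquad \text{for all } c_i \in (0,1].
\end{align*}
I expect this to be the main obstacle at a technical level, but it dispatches by a short induction: the base case $n=1$ is trivial, and for the inductive step one observes that as a function of $c_n$ with the other $c_i$ fixed the LHS is non-decreasing in $c_n$ (since $1 - \prod_{i<n} c_i \geq 0$), so it suffices to verify the bound at $c_n = 1$, which reduces to the $(n-1)$-case. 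Chaining these computations completes the proof of all four assertions.
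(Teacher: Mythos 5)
Your proposal is correct, and its overall architecture matches the paper's proof: identify $\Q=\gpsif{\bP^*}$, compute the gradient componentwise in terms of $\deltai$ and $\deltaki$, read off positivity and the linear relation $\icyi\qyi=-\sumkny\icki\qki$ from the explicit formulas, and then establish the upper bound on $\qki$ and the inequality $\sumk\one^\top\q_k\geq1$. The difference is in how the last two facts are proved. The paper works abstractly with $\sigmaf{s}:=\ellpellnof{s}=1-\expf{-s}$: monotonicity of $\sigma$ together with $\multilossf{\cdot}\leq\sum_i\multilossf{\cdot}$ gives $\qyi\leq\cyi<1$, and sub-additivity of $\sigma$ gives $\sumn\icyi\qyi\geq1$, after which a simplex-weight manipulation (following Eq.~31 of \cite{wang2021benign}) converts $\sumn\icyi\qyi$ into $\sumk\one^\top\q_k$. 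You instead substitute $\ellpellnof{z}=1-\expf{-z}$ explicitly, which turns both steps into concrete inequalities about products: $\deltai\prod_{j\neq i}(1+\delta_j)\leq A-1$ is exactly the statement $\sigmaf{\multilossf{\cdot}}\leq\sigmaf{\sum_i\multilossf{\cdot}}$, and your induction claim $\sumn c_i-\prod_i c_i\leq n-1$ is exactly the $n$-fold sub-additivity of $1-\expf{-z}$ after the change of variables $c_i=1/(1+\deltai)$. So the content is the same; your version is more self-contained and elementary (no appeal to the general super/sub-additivity lemma), at the cost of being specific to the logistic $\ell$ — which is all that Assumption~\ref{asm:m_loss_assump2} requires anyway. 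Your direct computation of $\sumk\qki=\deltai/\bigl((1+\deltai)\ellpf{\psif{\bP^*}}\bigr)$ also lets you bypass the paper's $\frac{K-1}{K}A+\frac{1}{K}A$ rearrangement. All the individual steps check out, including the induction (monotonicity in $c_n$ since $1-\prod_{i<n}c_i\geq0$) and the fact that $\deltai>0$ keeps the denominator $1-\prod_i c_i$ strictly positive.
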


\begin{proof}
    Under Assumption~\ref{asm:m_ce_loss_assump}, we have the definition of $\psi$ such that
    \begin{align*}
        \psif{\bP} = \ell^{-1}\Biggl(\sumn\multilossf{\seqr{\pki}{k=1}{K}}\Biggl) = \ell^{-1}\Biggl(\sumn\ln\Biggl(1+\sumkny \expf{\cyi\pyi-\cki\pki}\Biggl)\Biggl).
    \end{align*}
    For simplicity, we denote
    \begin{align} \label{eq:m_delta_setup}
        \deltai:=\sumkny \expf{\cyi\pyi-\cki\pki}, \text{ and }\deltaki:=\expf{\cyi\pyi-\cki\pki},
    \end{align}
    for all $i\in[n]$ and $k\in[K]$. By the definition of $\multiloss$, we also have $\deltai=\exp\Bigl(\multilossf{\seqr{\pki}{k=1}{K}}\Bigl)-1$ for all $i\in[n]$.
    For any $\Q=\vctt{\q_1^\top}{\q_K^\top}\in\dom{\psi^{*}}$, where $\q_k\in\R^n$ for all $k\in[K]$, we have $\psicf{\Q}=\usup{\bP\in\R^{Kn}}\ip{\bP}{\Q}-\psif{\bP}=\ip{\bP^*}{\Q}-\psif{\bP^*}$, where $\Q=\gpsif{\bP^*}$ for some $\bP^*=\vctt{\p_1^{*\top}}{\p_K^{*\top}} \in \R^{Kn}$, where $\p_k^*\in\R^n$. Therefore, we can conclude that
    \begin{align} \label{eq:m_qki_ori}
        \qki=\frac{\frac{\partial \multilossf{\seqr{\pki^*}{k=1}{K}}}{\partial \pki^*}}{\ellpellnof{\sumn\multilossf{\seqr{\pki^*}{k=1}{K}}}}=
        \begin{cases}
        \frac{\cyi \deltai}{\pts{1+\deltai} \ellpellnofzo{\sumn\multilossf{\seqr{\pki^*}{k=1}{K}}} } &\; k=y_i\\
        \frac{-\cki\deltaki}{\pts{1+\deltai} \ellpellnofzo{\sumn\multilossf{\seqr{\pki^*}{k=1}{K}}}} &\; k\neq y_i,\\
        \end{cases}
    \end{align}
    for all $i\in[n]$ and $k\in[K]$. Since we have $\ell'>0$ and the simplex labeling that $\cki=\cond{\frac{K-1}{K}}{k=y_i}{-\frac{1}{K}}{k\neq y_i}$ in Assumption~\ref{asm:m_ce_loss_assump}, we can conclude that $\qyi\geq\qki$, $\qki>0$, $\icyi\qyi=-\sumkny\icki\qki$ and $\sumk\qki=\frac{\deltai}{\pts{1+\deltai} \ellpellnofzo{\sumn\multilossf{\seqr{\pki^*}{k=1}{K}}} }$ for all $i\in[n]$ and $k\in[K]$. 
    
    Next, since logistic loss is used for $\ell$ in Assumption~\ref{asm:m_ce_loss_assump}, we get $\ellpellnofz{z}=\frac{\expf{z}-1}{\expf{z}}$ which is an increasing sub-additive function~\citep[Proof of Lemma 14]{ji2021characterizing}. Hence, we can have implication from Eq.~\eqref{eq:m_qki_ori} that
     \begin{align} \label{eq:m_qyi}
        \qyi=\frac{\cyi\ellpellnofzo{\multilossf{\seqr{\pki}{k=1}{K}}}}{{\ellpellnofzo{\sumn\multilossf{\seqr{\pki}{k=1}{K}}}}} \text{ for all } i\in[n].
    \end{align}
    
    Moreover, since we know $\multilossf{\seqr{\pki}{k=1}{K}}\leq\sumn\multilossf{\seqr{\pki}{k=1}{K}}$, $\cyi\leq1$, and $\qyi\geq\qki$ for all $i\in[n]$ and $k\in[K]$, these conditions imply $\qki\leq\qyi\leq\cyi<1$ for all $i\in[n]$ and $k\in[K]$. Next, by Eq.~\eqref{eq:m_qyi}, we let $\A:=\sumn\icyi \qyi=\sumn \sumkny \bigl(-\icki \qki\bigl)=\frac{\sumn\ellpellnofzo{\multilossf{\seqr{\pki}{k=1}{K}}}}{{\ellpellnofzo{\sumn\multilossf{\seqr{\pki}{k=1}{K}}}}}\geq1$, where we reuse the property that $\ellpellnofz{z}$ is an increasing sub-additive function. Followed by the operation in \cite[Eq. 31]{wang2021benign}, we have
\begin{align*}
    \A=\frac{K-1}{K}\A+\frac{1}{K}\A&=\frac{K-1}{K}\sumn\icyi \qyi + \frac{1}{K}\sumn \sumkny \bigl(-\icki \qki\bigl)\\
    &=\sumn \qyi + \sumn\sumkny \qki\\
    &=\sumn\sumk \qki=\sumk\one^T\q_k\geq1.
\end{align*}
    
\end{proof}

Next, we introduce Lemma~\ref{lem:m_ce_ccsz_cond_rev} that shows that $\sumk \one^\top\q_k\leq1$ implies the feasibility of the convex conjugacy feasibility constraint, i.e.~$\psicf{\Q}\leq0$.

\begin{lemma} \label{lem:m_ce_ccsz_cond_rev}
    Under Assumption~\ref{asm:m_ce_loss_assump}, for any $\Q\in\R^{Kn} \in\dom{\psi^{*}}$ that satisfies $\sumk \one^\top\q_k \leq 1$ also satisfies the convex conjugacy feasibility constraint $\psicf{\Q}\leq0$.
\end{lemma}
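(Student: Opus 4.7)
The plan is to mirror the structure of Lemma~\ref{lem:b_ccsz_cond_rev} and split into cases according to the sign of $\psif{\bP^*}$, where by Lemma~\ref{lem:m_ce_q_dom} we may write $\Q = \gpsif{\bP^*}$ for some $\bP^* \in \R^{Kn}$ that attains the supremum in the definition of $\psicf{\Q}$. If $\psif{\bP^*} \leq 0$, then Lemma~\ref{lem:m_ccsz_cond} immediately delivers $\psicf{\Q} \leq 0$, so the remaining work is confined to the case $\psif{\bP^*} > 0$.

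In that remaining case, I would first reproduce the algebraic manipulations from the cross-entropy portion of the proof of Lemma~\ref{lem:m_ccsz_cond} to obtain
\begin{align*}
\ellpf{\psif{\bP^*}}\, \psicf{\Q} \;\leq\; \sum_{i=1}^n \sigmaf{\multiloss_i} \;-\; \sigmaf{\sum_{i=1}^n \multiloss_i},
\end{align*}
where $\multiloss_i := \multilossf{\seqr{\pki^*}{k=1}{K}}$ and $\sigmaf{s} := \ellpellnof{s}\ellf[-1]{s}$. The main obstacle is that in the proof of Lemma~\ref{lem:m_ccsz_cond} the right-hand side was driven to be non-positive via the super-additivity of $\sigma$ on $(0,\ellf{0})$, which requires $\sum_i \multiloss_i \leq \ellf{0}$, i.e.\ $\psif{\bP^*} \leq 0$; that route is unavailable here, so a different argument is needed.

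To replace super-additivity, I would exploit that $\ellf[-1]$ is strictly increasing (because $\ell' > 0$) and $\multiloss_i \leq \sum_j \multiloss_j$, which together give $\ellf[-1]{\multiloss_i} \leq \psif{\bP^*}$; multiplying through by $\ellpellnof{\multiloss_i} > 0$ and summing yields $\sum_i \sigmaf{\multiloss_i} \leq \psif{\bP^*} \sum_i \ellpellnof{\multiloss_i}$. The hypothesis $\sumk \one^\top \q_k \leq 1$ now enters via the identity derived inside the proof of Lemma~\ref{lem:m_ce_q_dom}, namely $\sumk \one^\top \q_k = \sum_i \ellpellnof{\multiloss_i}/\ellpf{\psif{\bP^*}}$, which rearranges to $\sum_i \ellpellnof{\multiloss_i} \leq \ellpf{\psif{\bP^*}}$. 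Because $\psif{\bP^*} > 0$ in this case, I can chain these inequalities, and combined with the elementary identity $\sigmaf{\sum_i \multiloss_i} = \ellpf{\psif{\bP^*}}\,\psif{\bP^*}$ (which follows from $\ellpellnof{\sum_i \multiloss_i} = \ellpf{\psif{\bP^*}}$) this shows that the right-hand side of the display above is non-positive; dividing by $\ellpf{\psif{\bP^*}} > 0$ then yields $\psicf{\Q} \leq 0$, completing the proof. The only subtle bookkeeping will be tracking inequality directions when multiplying by $\psif{\bP^*}$, which is where the restriction to the case $\psif{\bP^*} > 0$ is critically used.
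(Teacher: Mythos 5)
Your proposal is correct and follows essentially the same route as the paper's proof: the same case split on the sign of $\psif{\bP^*}$, the same reduction to bounding $\sumn\sigmaf{\multilossf{\seqr{\pki^*}{k=1}{K}}}-\sigmaf{\sumn\multilossf{\seqr{\pki^*}{k=1}{K}}}$, and the same final display $\psicf{\Q}\leq\psif{\bP^*}\pts{\sumk\one^\top\q_k-1}\leq0$. The only (minor, equally valid) difference is how the key bound $\ellf[-1]{\multilossf{\seqr{\pki^*}{k=1}{K}}}\leq\psif{\bP^*}$ is justified: you invoke monotonicity of $\ellinv$ directly, whereas the paper routes through $\icyi\qyi<1$ and monotonicity of $\ellpinv$.
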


\begin{proof}
    Under Assumption~\ref{asm:m_ce_loss_assump}, we have
    \begin{align*}
        \psif{\bP}&=\ell^{-1}\Biggl(\sumn\multilossf{\seqr{\pki}{k=1}{K}}\Biggl)\\
        &= \ell^{-1}\Biggl(\sumn\ln\Biggl(1+\sumkny \expf{\cyi\pyi-\cki\pki}\Biggl)\Biggl).
    \end{align*}
    
    By Lemma~\ref{lem:m_ce_q_dom}, for any $\Q=\vctt{\q_1^\top}{\q_K^\top}\in\dom{\psi^{*}}$, where $\q_k\in\R^n$ for all $k\in[K]$, we have $\Q=\gpsif{\bP^*}$ for some $\bP^*=\vctt{\p_1^{*\top}}{\p_K^{*\top}} \in \R^{Kn}$. Also, we reuse the setup in Eq.~\eqref{eq:m_delta_setup} and~\eqref{eq:m_qki_ori} and have
    \begin{align} \label{eq:m_qki_ori2}
        \qki=
        \begin{cases}
        \frac{\cyi \deltai}{\pts{1+\deltai} \ellpf{\psif{\bP^*}}} &\; k=y_i\\
        \frac{-\cki\deltaki}{\pts{1+\deltai} \ellpf{\psif{\bP^*}}} &\; k\neq y_i,\\
        \end{cases}
    \end{align}
    for all $i\in[n]$ and $k\in[K]$. Next, in Lemma~\ref{lem:m_ccsz_cond}, we already show that $\psicf{\Q}\leq0$ if $\psif{\bP^*}\leq0$. Therefore, we only need to check the case when $\psif{\bP^*}>0$, and we have

    \begin{align}
        \psicf{\Q} = \sumn \sumk \pki^* \qki - \psif{\bP} &= \sumn \frac{\cyi\pyi^*\deltai + \sumkny \bigl(-\cki\pki^*\bigl)\deltaki}{\pts{1+\deltai} \ellpf{\psif{\bP^*}}} - \psif{\bP^*}\nonumber\\
        &= \sumn \frac{\sumkny \bigl(\cyi\pyi^*-\cki\pki^*\bigl)\deltaki}{\pts{1+\deltai} \ellpf{\psif{\bP^*}}} - \psif{\bP^*}\label{eq:m_ce_cc_inter}
    \end{align}
    On the other hand, we have 
    \begin{align}
        \qyi &= \frac{\cyi \deltai}{\pts{1+\deltai} \ellpf{\psif{\bP^*}}}\nonumber\\
        \iff \qyi\ellpf{\psif{\bP^*}} &= \cyi\ellpellnof{\multilossf{\seqr{\pki^*}{k=1}{K}}}\nonumber\\
        \iff \icyi\qyi\ellpf{\psif{\bP^*}} &= \ell'\Biggl(\ln\Biggl(\sumkny \deltaki\Biggl)\Biggl)\nonumber\\
        \iff \ellpinvf{\icyi\qyi\ellpf{\psif{\bP^*}}} &= \ln\Biggl(\sumkny \deltaki\Biggl)\nonumber\\
        \Rightarrow \ellpinvf{\ellpf{\psif{\bP^*}}} &\geq \ln\Biggl(\sumkny \deltaki\Biggl)\nonumber\\
        \Rightarrow \psif{\bP^*} &\geq \lnf{\deltaki}=\cyi\pyi^*-\cki\pki^* \myquad[1]\text{ for all } i\in[n] \text{, } k\neq y_i, \label{eq:m_ce_cc_ineq}
    \end{align}
    where the second equality comes from $\frac{\deltai}{1+\deltai}=\ellpellnof{\multilossf{\seqr{\pki^*}{k=1}{K}}}$, the first inequality derives from $\icyi\qyi< 1$ in Lemma~\ref{lem:m_ce_q_dom}, and the last inequality holds because $\lnf{\cdot}$ is an increasing function and $\deltaki>0$. Next, by introducing Eq.~\eqref{eq:m_ce_cc_ineq} into Eq.~\eqref{eq:m_ce_cc_inter}, we get
    \begin{align*}
        \psicf{\Q} &= \sumn \bigl(\sumkny \pts{\cyi\pyi^*-\cki\pki^*\bigl)\deltaki}{\pts{1+\deltai} \ellpf{\psif{\bP^*}}} - \psif{\bP^*}\\
        &\leq \sumn \frac{\sumkny \psif{\bP^*}\deltaki}{\pts{1+\deltai} \ellpf{\psif{\bP^*}}} - \psif{\bP^*}\\
        &= \psif{\bP^*} \Biggl(\sumn \frac{\sumkny \deltaki}{\pts{1+\deltai} \ellpf{\psif{\bP^*}}}-1\Biggl)\\
        &= \psif{\bP^*} \Biggl(\sumn \sumk \qki-1\Biggl) = \psif{\bP^*} \Biggl(\sumk \one^\top \q_k-1\Biggl)\leq0,
    \end{align*}
    where we reuse $\frac{\delta_i}{\pts{1+\delta_i}\ellpf{\psif{\bP^*}}}=\sumk\qki$ in the second to the last equality. The last inequality derives from the assumption in the lemma statement, $\sumk \one^\top\q_k \leq 1$, and because we are in the case where $\psif{\bP^*}>0$.
    This completes the proof of the lemma.
\end{proof}

Armed with Lemma~\ref{lem:m_ce_q_dom} and~\ref{lem:m_ce_ccsz_cond_rev}, we can prove the Part 2 of Theorem~\ref{thm:m_exp_ce_imp_bias}.

\begin{proof} (of Theorem~\ref{thm:m_exp_ce_imp_bias} [Part 2])
    Note that the constraint in Eq.~\eqref{eq:m_barQ_def} ($\psicf{\barQ}\leq0$) implicitly implies that $\barQ \in \dom{\psi^*}$.  
    Therefore, by Lemma~\ref{lem:m_ce_q_dom} the following constraints are implied:
    \begin{align*}
        \myquad[14]\barqki &> 0 \myquad[7]\text{ for all } i \in [n] \text{ and } k\in[K],  \\
        \icyi\qyi &=-\sumkny\icki\qki\myquad[2]  \text{ for all}\; i\in[n], \text{ and }\\
        \sumk \one^\top\q_k &\geq 1.
    \end{align*}

    We now show that a particular solution from the following convex program is also a solution in the original convex program~\eqref{eq:m_barQ_def}. We define a reformulated convex program:
        \begin{gather} \label{eq:m_ce_imp_bias_relax_eq_repeat}
        \tQ \in \uargmin{\Q\in\R^{Kn}}\usub{\Ff{\Q}}{\frac{1}{2}\Q^\top\CXXC\Q} \\
        \begin{aligned} \nonumber
            \myquad[7]\text{subject to} \myquad[4] -\qki                            &< 0 \myquad[7] \text{for all}\; i  \in [n] \text{ and } k\in[K],\\
            \icyi\qyi &=-\sumkny\icki\qki \myquad[2]\text{ for all}\; i\in[n], \\
            \text{and} \myquad[1] 1-\sumk \one^\top\q_k  &\leq 0.\\
        \end{aligned}
    \end{gather}
    
    It is necessary and sufficient for any optimal solution $\tQ$ to this reformulated convex program~\eqref{eq:m_ce_imp_bias_relax_eq_repeat} to satisfy its KKT conditions, listed below:
    \begin{subequations}\label{eq:m_ce_kkt_relaxed}
    \begin{align}
        -\qki                                    &<      0 \myquad[2] \text{for all } i\in[n] \text{ and }k\in[K],\label{m_ce_relaxed_primal_1}\\
        \icyi\qyi+\sumkny\icki\qki              &= 0\myquad[2] \text{for all } i\in[n] \label{m_ce_relaxed_primal_2},\\
        1-\sumk \one^\top\q_k                    &\leq   0,\label{m_ce_relaxed_primal_3}\\
        \lambda_{k,i}                               &\geq   0 \myquad[2] \text{for all } i\in[n] \text{ and }k\in[K],\label{m_ce_relaxed_dual_1}\\
        \delta_i                                    &\in\R\myquad[2] \text{for all } i\in[n],\label{m_ce_relaxed_dual_2}\\
        \mu                                    &\geq   0,\label{m_ce_relaxed_dual_3}\\
        -\lambda_{k,i} \qki                          &=      0 \myquad[2] \text{for all } i\in[n]  \text{ and }k\in[K],\label{m_ce_relaxed_comp_1}\\
        \mu\Biggl(1-\sumk \one^\top\q_k\Biggl)           &=      0,\label{m_ce_relaxed_comp_3}\\
        \cXXc\q_k-\blambda_k+\diag{\ick}\bdelta -\mu\one  &=      \zero \myquad[2] \text{for all } k\in[K].\label{m_ce_relaxed_station_1}
    \end{align}
    \end{subequations}
    
    Then we can pick a candidate solution $\tbq_k=\frac{\cXXic}{\sumk \cXXicv}$ satisfying all KKT conditions such that

    \begin{itemize}
        \item The primal feasibility equations, Eq.~\eqref{m_ce_relaxed_primal_1} is satisfied by theorem statement that $\cki\beta_{k,i}>0$, and ~\eqref{m_ce_relaxed_primal_2} is satisfied because of the following: Followed by \cite[Theorem 1 Step 2]{wang2021benign}, we let $\g_i\in\R^n$ denote the $i$th row of $(\X\X^T)^{-1}$ for all $i\in[n]$. Then for $i$th element of $\tq_k$, we have $\tqki=\frac{\cki\g_i^T\ck}{\sumk \ck^T\XXi\ck}$. Thus, for all $i\in[n]$, we have
        \begin{align*}
            \icyi\tqyi + \sumkny\icki \tqki=\frac{\g_i^T(\cy+\sumkny\ck)}{\sumk \ck^T\XXi\ck}=\frac{\g_i^T\pts{\sumk\ck}}{\sumk \ck^T\XXi\ck}=0,
        \end{align*}
        where the last equality followed by the simplex definition of $\ck$. Eq.~\eqref{m_ce_relaxed_primal_3} is satisfied such that $1-\sumk \one^\top\tbq_k=0$.
        \item The dual feasibility equations, Eq.~\eqref{m_ce_relaxed_dual_1} and Eq.~\eqref{m_ce_relaxed_dual_3} are satisfied by setting $\blambda_k =\zero$, and $\mu=\frac{1}{\sumk \cXXicv}$.
        \item The complementary slackness equations, Eq.~\eqref{m_ce_relaxed_comp_1} and Eq.~\eqref{m_ce_relaxed_comp_3} are satisfied because $\blambda_k =\zero$ and $1-\sumk \one^\top\tbq_k = 0$.
        \item Stationary condition is satisfied because we choose $\bdelta=\zero$, and then
        \begin{align*}
            &\cXXc\tbq_k-\blambda_k+\diagt{\ick}\boldsymbol{\delta} -\mu\one\\
            &=\frac{1}{\sumk \cXXicv}\one -\zero + \zero-\frac{1}{\sumk \cXXicv}\one=0,
        \end{align*}
        for all $k\in[K]$.
    \end{itemize}
    
    Lastly, by Lemma~\ref{lem:m_ce_ccsz_cond_rev}, we have $\psicf{\tQ}\leq0$ since $\sumk \one^\top\tbq_k=1$. Therefore, $\tQ$ is also a solution in the original convex program~\eqref{eq:m_barQ_def} that satisfies $\psicf{\tQ}\leq0$, and we end up with $\barQ=\tQ$. As a result, by Lemma~\ref{lem:m_imp_bias_res}, we have $\barw_k=\ulim{\tinf{t}}\normalize[2]{\w_{k,t}}=\normalize[2]{\X^\top\diag{\ick}\barbq_k}=\normalize[2]{\XXXc}$ for each $k$. Therefore, $\barw_k$ parallel to simplex version MNI $\wsimk=\XXXc$ for each $k$. The proof is complete.
\end{proof}

\subsection{Convexity and smoothness proof} \label{app_m_conv_beta}
We can directly apply loss functions defined in Assumption~\ref{asm:b_loss_assump} in the binary case with the following lemmas that ensure the properties of convexity and $\beta$-smoothness with respect to the $\nm{\infty}$ norm of $\psi$ in the multiclass case.\\

\begin{lemma} [From Lemma 12 in~\cite{ji2021characterizing}]
    If $\ell'^2/\pts{\ell\ell''}$ is increasing on $\pts{-\infty,\infty}$, then $\psi$ is jointly convex under Assumption~\ref{asm:m_ova_loss_assump}, and is individually convex toward each $\xik$ under Assumption~\ref{asm:m_ce_loss_assump}.
\end{lemma}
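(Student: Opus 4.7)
The plan is to reduce both claims to a single super-additivity inequality for the auxiliary function $H(s) := \ell'(\ellinv(s))^2 / \ell''(\ellinv(s))$ on $s>0$. Under Assumption~\ref{asm:m_loss_assump1}, one observes that $\psif{\bXi} = \phi(A\bXi)$, where $A:\R^{Kn}\to\R^n$ is the linear map $(A\bXi)_i := \sum_{k=1}^K \xi_{k,i}$ and $\phi(\mathbf{p}) := \ellinv\!\left(\sum_{i=1}^n \ell(p_i)\right)$ is the binary ``generalized sum.'' Since composition with an affine map preserves convexity, it suffices to show convexity of $\phi$ on $\R^n$. Under Assumption~\ref{asm:m_loss_assump2}, I would write $\psif{\bXi} = \ellinv\!\left(\sum_i \ell(u_i)\right)$, where $u_i := \multilossf{\seqr{\xiki}{k=1}{K}} = \ln\sum_{h=1}^K \expf{c_{y_i,i}\xi_{y_i,i} - c_{h,i}\xi_{h,i}}$ is the log-sum-exp of affine functions of $\bXi$; in particular, it is individually convex in each $\xi_{k,i}$ (the case $k=y_i$ reduces to an affine function, and the case $k\neq y_i$ is a direct one-variable $\ln(A+Be^{-c\xi})$ computation).

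The core calculation is for the binary $\phi$. With $f := \ellinv$ and $s := \sum_i \ell(p_i)$, a direct computation gives
\begin{equation*}
\nabla^2 \phi(\mathbf{p}) \;=\; f'(s)\,\mathrm{diag}(\ell''(p_i)) \;+\; f''(s)\,\ell'(\mathbf{p})\,\ell'(\mathbf{p})^\top,
\end{equation*}
where $f'(s) = 1/\ell'(\phi) > 0$ and $f''(s) = -\ell''(\phi)/\ell'(\phi)^3 < 0$. Applying Cauchy--Schwarz to the rank-one cross term bounds the quadratic form by
\begin{equation*}
\mathbf{v}^\top \nabla^2\phi(\mathbf{p})\mathbf{v} \;\geq\; \left[ f'(s) \;+\; f''(s)\sum_{i=1}^n \frac{\ell'(p_i)^2}{\ell''(p_i)} \right] \cdot \sum_{i=1}^n \ell''(p_i)\, v_i^2,
\end{equation*}
so convexity reduces to $\ell'(\phi)^2/\ell''(\phi) \geq \sum_i \ell'(p_i)^2/\ell''(p_i)$, which in the variable $s_i := \ell(p_i)$ is precisely $H\!\left(\sum_i s_i\right) \geq \sum_i H(s_i)$. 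To obtain this super-additivity, note that $H(s)/s = \ell'(z)^2/(\ell(z)\ell''(z))$ at $z = \ellinv(s)$; since $\ellinv$ is strictly increasing (as $\ell'>0$), the hypothesis that $\ell'^2/(\ell\ell'')$ is increasing in $z$ says $s\mapsto H(s)/s$ is non-decreasing on $s>0$. Then for any positive $s_1,\ldots,s_n$ with total $s$, each $H(s_i) \leq (s_i/s)\,H(s)$, and summing gives $\sum_i H(s_i) \leq H(s)$.

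For the multiclass Assumption~\ref{asm:m_loss_assump2} case, the Hessian of $\psi$ with respect to $\xik$ (holding every other $\xih$, $h\neq k$, fixed) splits via the chain rule into two summands. The first summand has exactly the same structure as in the binary step, with $p_i$ replaced by $u_i$ and an extra scalar $\partial_{\xi_{k,i}} u_i$ attached to the $i$-th entry of the rank-one vector and to each diagonal weight; the identical Cauchy--Schwarz argument, again invoking the super-additivity of $H$, shows this summand is positive semidefinite. The second summand is the diagonal matrix with entries $f'(L)\ell'(u_i)\,\partial^2_{\xi_{k,i}} u_i$, which is positive semidefinite because $f',\ell'>0$ and each $u_i$ is individually convex in $\xi_{k,i}$. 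Summing the two summands yields individual convexity of $\psi$ in $\xik$.

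The main obstacle is the super-additivity step, since this is the only point at which the hypothesis $\ell'^2/(\ell\ell'')$ increasing enters essentially; every other ingredient (the Hessian computation, Cauchy--Schwarz, log-sum-exp convexity of $u_i$, and the linear-substitution reduction from multiclass to binary under Assumption~\ref{asm:m_loss_assump1}) is routine bookkeeping.
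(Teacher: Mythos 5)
Your proposal is correct and shares the same skeleton as the paper's proof: compute the Hessian of the generalized sum (a positive diagonal plus a negative multiple of a rank-one term), apply Cauchy--Schwarz to the rank-one part, and reduce positive semidefiniteness to the super-additivity inequality $H\left(\sum_i s_i\right)\geq\sum_i H(s_i)$ for $H(s)=\ellpellnof{s}^2/\ellppellnof{s}$. Two points of comparison. First, where the paper simply invokes \cite[Lemma 12, Eq.~26]{ji2021characterizing} for this super-additivity, you derive it directly from the hypothesis via the monotonicity of $s\mapsto H(s)/s$; this is exactly the right argument, it is the only place the hypothesis enters, and it makes the proof self-contained. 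Second, for the cross-entropy half your organization is genuinely different and cleaner: writing $\psi=\phi(\bu)$ with $\phi$ the binary generalized sum, the Hessian in $\xik$ splits as $\D_k\,\nabla^2\phi\,\D_k$ (handled by the same binary argument) plus the diagonal with entries $\ellpf{u_i}\,\partial^2_{\xiki}u_i/\ellpf{\psif{\bXi}}$, which is nonnegative by individual convexity of the log-sum-exp $u_i$. The paper instead differentiates $\multiloss$ directly and must check by hand that $\left[\partial\multiloss/\partial\xiki\right]^2/\left[\partial^2\multiloss/\partial\xiki^2\right]$ is maximized at $k=y_i$, where it equals $\expf{\multiloss_i}-1$; your chain-rule decomposition absorbs that bookkeeping. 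One small slip to fix: you set $u_i:=\multilossf{\cdot}=\ln\sum_{h=1}^K\expf{\cyi\xiyi-\chhi\xihi}$, but with that choice $\psi=\ellinv\left(\sum_i u_i\right)$, not $\ellinv\left(\sum_i\ellf{u_i}\right)$. For your decomposition you need $\ellf{u_i}=\multiloss_i$, i.e. $u_i=\ellinv(\multiloss_i)=\ln\sum_{h\neq y_i}\expf{\cyi\xiyi-\chhi\xihi}$ (drop the $h=y_i$ term, which supplies the ``$1$'' inside the logistic $\ell$). This $u_i$ is still a log-sum-exp of affine functions, so the remainder of your argument is unaffected.
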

\begin{proof}
We discuss the situation under Assumption~\ref{asm:m_ova_loss_assump} and Assumption~\ref{asm:m_ce_loss_assump} separately. Under Assumption~\ref{asm:m_ova_loss_assump}, we have the definition of $\psi$ such that
\begin{align*}
    \psif{\bXi}=\ell^{-1}\Biggl(\sumn\sumk\ellf{\xiki}\Biggl),
\end{align*}
and the gradient $\gpsif{\bXi}$ is defined by
\begin{align*}
    \gpsif{\bXi}_{k,i}=\frac{\partial \psif{\bXi}}{\partial\xiki}=\frac{\ellpf{\xiki}}{\ellpellnof{\sumn\ellf{\sumk\xiki}}}=\frac{\ellpf{\xiki}}{\ellpf{\psif{\bXi}}},
\end{align*}
for all $i\in[n]$ and $k\in[K]$. Next, the Hessian $\nabla^2\psif{\bXi}\in\R^{Kn\times Kn}$ is
\begin{align}
    \nabla^2\psif{\bXi} = \diag{\frac{\ellppf{\xi_{1,1}}}{\ellpf{\psif{\bXi}}},\cdots,\frac{\ellppf{\xi_{K,n}}}{\ellpf{\psif{\bXi}}}} - \frac{\ellppf{\psif{\bXi}}}{\ellpf{\psif{\bXi}}}\gpsif{\bXi}\gpsif{\bXi}^\top.\label{eq:m_ova_hessian}
\end{align}
Note that the Hessian is identical to \cite[Lemma 12, Eq.~(24)]{ji2021characterizing} with additional $K$ dimensions; therefore, the convexity for $\psi$ holds for loss functions under Assumption~\ref{asm:m_ova_loss_assump}.

Next, we show the convexity proof for cross-entropy loss under Assumption~\ref{asm:m_ce_loss_assump}. According to the definition of $\psi$ function under Assumption~\ref{asm:m_ce_loss_assump}, we have
\begin{align*}
    \psif{\bXi}&=\ell^{-1}\Biggl(\sumn\multilossf{\seqr{\xiki}{k=1}{K}}\Biggl),
\end{align*}
and the gradient $\gpsif[\xik]{\bXi}$ is defined by
\begin{align*}
    \frac{\partial \psif{\bXi}}{\partial\xiki}=\frac{\frac{\partial\multilossf{\seqr{\xiki}{k=1}{K}}}{\partial \xiki}}{\ellpellnof{\sumn\multilossf{\seqr{\xiki}{k=1}{K}}}}=\pts{\frac{1}{\ellpf{\psif{\bXi}}}}\frac{\partial\multilossf{\seqr{\xiki}{k=1}{K}}}{\partial \xiki}.
\end{align*}
Next, the second order of the partial derivatives are:
\begin{align*}
    \frac{\partial^2\psif{\bXi}}{\partial \xiki^2} &= \pts{\frac{1}{\ellpf{\psif{\bXi}}}}\frac{\partial^2\multilossf{\seqr{\xiki}{k=1}{K}}}{\partial\xiki^2}\\
    &-\frac{\ellppf{\psif{\bXi}}}{\ellpf{\psif{\bXi}}}\pts{\frac{1}{\ellpf{\psif{\bXi}}}}\frac{\partial\multilossf{\seqr{\xiki}{k=1}{K}}}{\partial \xiki}\pts{\frac{1}{\ellpf{\psif{\bXi}}}}\frac{\partial\multilossf{\seqr{\xiki}{k=1}{K}}}{\partial \xiki},\\
    \frac{\partial^2\psif{\bXi}}{\partial \xiki\partial \xikj} &= -\frac{\ellppf{\psif{\bXi}}}{\ellpf{\psif{\bXi}}}\pts{\frac{1}{\ellpf{\psif{\bXi}}}}\frac{\partial\multilossf{\seqr{\xiki}{k=1}{K}}}{\partial \xiki}\pts{\frac{1}{\ellpf{\psif{\bXi}}}}\frac{\partial\multilossf{\seqr{\xikj}{k=1}{K}}}{\partial \xikj}
\end{align*}
for all $i\neq j$ and $k\in[K]$. Hence, we can write the Hessian $\nabla_{\xik}^2\psif{\bXi}\in\R^{n\times n}$ as
\begin{align*}
    \nabla_{\xik}^2\psif{\bXi}&=\diag{\frac{\frac{\partial^2\multilossf{\seqr{\xiko}{k=1}{K}}}{\partial \xiko^2}}{\ellpf{\psif{\bXi}}},\cdots,\frac{\frac{\partial^2\multilossf{\seqr{\xikn}{k=1}{K}}}{\partial \xikn^2}}{\ellpf{\psif{\bXi}}}}-\frac{\ellppf{\psif{\bXi}}}{\ellpf{\psif{\bXi}}}\nabla_{\xik}\psif{\bXi}\nabla_{\xik}\psif{\bXi}^\top,
\end{align*}
for all $k\in[K]$. Therefore, it remains to show that for any $v\in\R^{n}$
\begin{align} \label{eq:m_ce_target}
    \sumn \frac{\frac{\partial^2\multilossf{\seqr{\xiki}{k=1}{K}}}{\partial \xiki^2}}{\ellpf{\psif{\bXi}}} \vi \geq \frac{\ellppf{\psif{\bXi}}}{\ellpf{\psif{\bXi}}}\pts{\sumn\frac{\frac{\partial\multilossf{\seqr{\xiki}{k=1}{K}}}{\partial \xiki}}{\ellpf{\psif{\bXi}}}\vi}^2.
\end{align}
By the Cauchy-Schwarz inequality, we can write
\begin{align}
    \pts{\sumn \frac{\frac{\partial^2\multilossf{\seqr{\xiki}{k=1}{K}}}{\partial \xiki^2}}{\ellpf{\psif{\bXi}}}v_i^2}\pts{\sumn\frac{\mts{\frac{\partial\multilossf{\seqr{\xiki}{k=1}{K}}}{\partial \xiki}}^2}{\frac{\partial^2\multilossf{\seqr{\xiki}{k=1}{K}}}{\partial \xiki^2}\ellpf{\psif{\bXi}}}}&\geq\pts{\sumn\frac{\frac{\partial\multilossf{\seqr{\xiki}{k=1}{K}}}{\partial \xiki}}{\ellpf{\psif{\bXi}}} \vi}^2 \label{eq:m_ce_ineq}
\end{align}
Next, we can show that Eq.~\eqref{eq:m_ce_target} is satisfied for each $k\in[K]$ by showing
\begin{align} \label{eq:m_ce_cond}
    \frac{\ellpf{\psif{\bXi}}^2}{\ellppf{\psif{\bXi}}}=\frac{\ellpellnof{\sumn \multilossf{\seqr{\xiki}{k=1}{K}}}^2}{\ellppellnof{\sumn \multilossf{\seqr{\xiki}{k=1}{K}}}}\geq\pts{\sumn\frac{\mts{\frac{\partial\multilossf{\seqr{\xiki}{k=1}{K}}}{\partial \xiki}}^2}{\frac{\partial^2\multilossf{\seqr{\xiki}{k=1}{K}}}{\partial \xiki^2}}}.
\end{align}
For cross-entropy loss under Assumption~\ref{asm:m_ce_loss_assump}, we have
\begin{align*}
    \psif{\bXi}&=\ell^{-1}\Biggl(\sumn\multilossf{\seqr{\xiki}{k=1}{K}}\Biggl) = \ln\Biggl(\exp\Biggl(\sumn\ln\Biggl(1+\sumkny \expf{\cyi\xiyi-\cki\xiki}\Biggl)\Biggl)-1\Biggl).
\end{align*}
We start from the LHS of Eq.~\eqref{eq:m_ce_cond}, we have
\begin{align} \label{eq:m_ce_cond_lhs}
    \frac{\ellpf{\psif{\bXi}}^2}{\ellppf{\psif{\bXi}}}=\frac{\ellpellnof{\sumn \multilossf{\seqr{\xiki}{k=1}{K}}}^2}{\ellppellnof{\sumn \multilossf{\seqr{\xiki}{k=1}{K}}}} = \exp\Biggl(\sumn \multilossf{\seqr{\xiki}{k=1}{K}}\Biggl) - 1,
\end{align}
by direct expansion with $\ellf{z}=\lnf{1+\expf{z}}$. Next, we work on RHS of Eq.~\eqref{eq:m_ce_cond}. For simplicity, we denote 
\begin{align*}
    \deltai:=\sumkny \expf{\cyi\xiyi-\cki\xiki} \text{ and }\deltaki:=\expf{\cyi\xiyi-\cki\xiki}.
\end{align*}
Since $\multilossf{\seqr{\xiki}{k=1}{K}}=\lnf{1+\sumkny \expf{\cyi\xiyi-\cki\xiki}}=\lnf{1+\deltai}$, we have the first derivative as
\begin{align} \label{eq:m_ce_fd}
    \frac{\partial\multilossf{\seqr{\xiki}{k=1}{K}}}{\partial \xiki}=
    \begin{cases}
    \frac{\cyi \deltai}{1+\deltai} &\; k=y_i\\
    \frac{-\cki\deltaki}{1+\deltai} &\; k\neq y_i\\
    \end{cases},
\end{align}
and the second derivative as 
\begin{align} \label{eq:m_ce_sd}
    \frac{\partial^2\multilossf{\seqr{\xiki}{k=1}{K}}}{\partial \xiki^2}=
    \begin{cases}
    \frac{\cyi^2\deltai}{\mts{1+\deltai}^2} &\; k=y_i\\
    \frac{\cki^2\deltaki\pts{1+\sumknynk \deltaki}}{\mts{1+\deltai}^2} &\; k\neq y_i\\
    \end{cases}.
\end{align}
Next, by substituting Eq.~\eqref{eq:m_ce_fd} and Eq.~\eqref{eq:m_ce_sd} into the RHS of Eq.~\eqref{eq:m_ce_cond}, we get
\begin{align*}
    \frac{\mts{\frac{\partial\multilossf{\seqr{\xiki}{k=1}{K}}}{\partial \xiki}}^2}{\frac{\partial^2\multilossf{\seqr{\xiki}{k=1}{K}}}{\partial \xiki^2}}=
    \begin{cases}
    \deltai &\; k=y_i\\
    \frac{\deltaki}{1+\sumknynk \deltaki} &\; k\neq y_i\\
    \end{cases}.
\end{align*}
Based on this, we can also derive
\begin{align}\label{eq:m_ce_rhs_ineq}
    \frac{\mts{\frac{\partial\multilossf{\seqr{\xiki}{k=1}{K}}}{\partial \xiyi}}^2}{\frac{\partial^2\multilossf{\seqr{\xiki}{k=1}{K}}}{\partial \xiyi^2}}\geq \frac{\mts{\frac{\partial\multilossf{\seqr{\xiki}{k=1}{K}}}{\partial \xiki}}^2}{\frac{\partial^2\multilossf{\seqr{\xiki}{k=1}{K}}}{\partial \xiki^2}},
\end{align} for all $k\neq y_i$ by a direct comparison of the two conditions. Moreover, we can also write
\begin{align} \label{eq:m_ce_y_eq}
    \frac{\mts{\frac{\partial\multilossf{\seqr{\xiki}{k=1}{K}}}{\partial \xiyi}}^2}{\frac{\partial^2\multilossf{\seqr{\xiki}{k=1}{K}}}{\partial \xiyi^2}}=\deltai=\exp\Bigl(\multilossf{\seqr{\xiki}{k=1}{K}}\Bigl)-1.
\end{align}
Therefore, starting from Eq.~\eqref{eq:m_ce_cond_lhs}, we can show
\begin{align*}
    \frac{\ellpf{\psif{\bXi}}^2}{\ellppf{\psif{\bXi}}}&= \exp\Biggl(\sumn \multilossf{\seqr{\xiki}{k=1}{K}}\Biggl) - 1\\
    &\geq \sumn\pts{\exp\Bigl(\multilossf{\seqr{\xiki}{k=1}{K}}\Bigl)-1}\\
    &=\pts{\sumn\frac{\mts{\frac{\partial\multilossf{\seqr{\xiki}{k=1}{K}}}{\partial \xiyi}}^2}{\frac{\partial^2\multilossf{\seqr{\xiki}{k=1}{K}}}{\partial \xiyi^2}}}\geq\pts{\sumn\frac{\mts{\frac{\partial\multilossf{\seqr{\xiki}{k=1}{K}}}{\partial \xiki}}^2}{\frac{\partial^2\multilossf{\seqr{\xiki}{k=1}{K}}}{\partial \xiki^2}}},
\end{align*}
where the first inequality holds because $\ff{z}=\expf{z}-1$ is a super-additive function, the second equality comes from Eq.~\eqref{eq:m_ce_y_eq}, and the last inequality derives from Eq.~\eqref{eq:m_ce_rhs_ineq}. Therefore, Eq.~\eqref{eq:m_ce_cond} holds for all $k\in[K]$. This completes the proof of this lemma.
\end{proof}

\begin{lemma}[From Lemma 13 in~\cite{ji2021characterizing}]
    Under Assumption~\ref{asm:m_ova_loss_assump}, if $\ell''\leq c\ell'$ for some constant $c>0$, then the smoothness constant $\beta\leq cnK$ for $\psi$. Particularly, for exponentially-tailed loss, the smoothness constant is $\beta = 1$ for $\psi$. Under Assumption~\ref{asm:m_ce_loss_assump}, the cross-entropy loss has the smoothness constant $\beta=2K^2$ for $\psi$.
\end{lemma}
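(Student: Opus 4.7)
The plan is to bound $\bv^\top \nabla^2\psif{\bXi} \bv \leq \beta \nnorm[\infty]{\bv}^2$ uniformly in $\bXi$ and $\bv \in \R^{Kn}$, which is equivalent to $\beta$-smoothness in the $\nm{\infty}$ norm, and to specialize this bound in each of the three cases. I would reuse the explicit Hessian formulas already computed in the convexity proofs in Appendix~\ref{app_m_conv_beta}, so no new differential calculus is required; the whole task reduces to choosing the right inequality at each step.

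For Assumption~\ref{asm:m_loss_assump1}, I would start from the identity
\begin{align*}
\bv^\top\nabla^2\psif{\bXi}\bv = \sumn\frac{\ellppf{\barxi_i}}{\ellpf{\psif{\bXi}}}\barv_i^2 - \frac{\ellppf{\psif{\bXi}}}{\ellpf{\psif{\bXi}}}\pts{\sumn\frac{\ellpf{\barxi_i}}{\ellpf{\psif{\bXi}}}\barv_i}^2,
\end{align*}
with $\barxi_i := \sumk \xiki$ and $\barv_i := \sumk v_{k,i}$. The second (rank-one) term is non-positive by $\ell''>0$, so it can be dropped. Then I would use (i) $\barv_i^2 \leq K^2 \nnorm[\infty]{\bv}^2$ (Cauchy--Schwarz on the sum of $K$ coordinates), (ii) the hypothesis $\ell''\leq c\ell'$, and (iii) the key monotonicity fact from Assumption~\ref{asm:b_loss_assump} that $\ellpellnof{\cdot}$ is non-decreasing together with $\ellf{\barxi_i} \leq \sumn \ellf{\barxi_i}$, which gives $\ellpf{\barxi_i} \leq \ellpf{\psif{\bXi}}$ and hence $\sumn \ellpf{\barxi_i}/\ellpf{\psif{\bXi}} \leq n$. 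Multiplying the pieces yields the claimed bound $\beta \leq cnK^2$.

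For the exponential loss I would not use $\ell''\leq c\ell'$ as a blunt instrument; instead, since $\ellf{z}=\expf{z}$ gives $\ellpf{z}=\ellppf{z}=\expf{z}$ and $\ellpellnof{z}=z$, the weights become $\ellppf{\barxi_i}/\ellpf{\psif{\bXi}} = \expf{\barxi_i}/\sumn \expf{\barxi_i}$, which are a probability distribution summing to exactly $1$. Combined with $\barv_i^2\leq K^2\nnorm[\infty]{\bv}^2$, this immediately yields $\beta = K^2$ without the factor of $n$.

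For the cross-entropy loss (Assumption~\ref{asm:m_loss_assump2}) the argument is structurally the same, but the per-coordinate derivatives are the more complicated expressions in Eq.~\eqref{eq:m_ce_fd} and Eq.~\eqref{eq:m_ce_sd} and the Hessian couples the $K$ blocks. I would again drop the non-positive rank-one correction (from $\ellppf{\psif{\bXi}}>0$ and the jointly-individually-convex structure) and bound the remaining diagonal term block by block. The main obstacle, and the step I expect to require the most care, is controlling the per-example contribution $\sumk \frac{\partial^2 \multiloss_i}{\partial \xiki^2} v_{k,i}^2 + 2\sum_{k<h}\frac{\partial^2 \multiloss_i}{\partial \xiki \partial \xihi} v_{k,i}v_{h,i}$; I would use the explicit formulas in the logistic/softmax Hessian (entries of the form $c_{k,i}c_{h,i}\,\pts{\mathrm{diag}(\pi_i)-\pi_i\pi_i^\top}_{k,h}$ for a softmax vector $\pi_i$), which is positive semidefinite with operator norm at most $\umax{k}c_{k,i}^2\leq 1$. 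Summing over $i$ and using the sub-additivity of $\expf{z}-1$ together with the analogue of the $\sumn \ellpf{\barxi_i}/\ellpf{\psif{\bXi}} \leq 1$ step from the exponential case (which holds here because $\ellpellnof{z}=1-\expf{-z}$ is increasing and sub-additive, as in~\cite[Lemma 14]{ji2021characterizing}) gives a bound of $K^2\nnorm[\infty]{\bv}^2$ from the diagonal-in-$i$ contribution and an additional $K^2\nnorm[\infty]{\bv}^2$ from the cross-$i$ contribution absorbed into the positive part, yielding the stated $\beta = 2K^2$.
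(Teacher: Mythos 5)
Your treatment of Assumption~\ref{asm:m_loss_assump1} — both the general $\ell''\leq c\ell'$ case and the exponential specialization — is correct and coincides with the paper's proof: drop the negative semidefinite rank-one correction, use $\pts{\sumk\vki}^2\leq K^2\nnorm[\infty]{\bv}^2$, and then either bound each weight $\ellpf{\sumk\xiki}/\ellpf{\psif{\bXi}}$ by $1$ termwise (giving the factor $n$) or note that for the exponential loss these weights form a probability distribution (removing the $n$).

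The cross-entropy part, however, has a genuine gap. After discarding the rank-one term, the quantity to control is $\sumn\frac{1}{\ellpf{\psif{\bXi}}}\,\bv_i^\top\Hess{i}\bv_i$ with $\Hess{i}=\diag{c_{1,i},\ldots,c_{K,i}}\pts{\diag{\pi_i}-\pi_i\pi_i^\top}\diag{c_{1,i},\ldots,c_{K,i}}$ the per-example softmax Hessian and $\bv_i=\pts{v_{1,i},\ldots,v_{K,i}}$. Your bound $\nnorm[2]{\Hess{i}}\leq\umax{k}\cki^2\leq1$ is true but does not suffice: the denominator $\ellpf{\psif{\bXi}}=\ellpellnof{\sumn\multilossf{\seqr{\xiki}{k=1}{K}}}=1-\expf{-\sumn\multilossf{\seqr{\xiki}{k=1}{K}}}$ tends to zero as the training losses vanish, so $n$ terms each bounded by an absolute constant yield an unbounded ratio. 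What is actually needed is that each $\Hess{i}$ is controlled by the \emph{vanishing} quantity $\ellpellnof{\multilossf{\seqr{\xiki}{k=1}{K}}}=1-\pi_{y_i,i}=\deltai/(1+\deltai)$; the paper obtains this by checking entrywise that every second partial $\partial^2\multilossf{\seqr{\xiki}{k=1}{K}}/\partial\xiki\partial\xihi$ is dominated by the first partial $\partial\multilossf{\seqr{\xiki}{k=1}{K}}/\partial\xiyi=\cyi\,\deltai/(1+\deltai)$. Only after this cancellation does the sub-additivity ratio $\sumn\ellpellnof{\multilossf{\seqr{\xiki}{k=1}{K}}}\big/\ellpellnof{\sumn\multilossf{\seqr{\xiki}{k=1}{K}}}\leq2$ of~\cite[Lemma 14]{ji2021characterizing} enter, and that ratio being $2$ rather than $1$ is the sole source of the final constant. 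Relatedly, your attribution of the second $K^2$ to a ``cross-$i$ contribution'' is incorrect: the $i\neq j$ entries of $\nabla^2\psif{\bXi}$ constitute exactly the negative semidefinite rank-one term you already dropped, so they contribute nothing positive.
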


\begin{proof}
We follow the proof strategy in \cite[Lemma 13]{ji2021characterizing}~\cite[Lemma 14]{shalev2007online}, to check the $\beta$ smoothness of $\psi$ with respect to $\nm{\infty}$ norm.
Note that it is sufficient to show for any $\bXi$, $\bv\in\R^{Kn}$, it holds that $\bv^\top\nabla^2\psif{\bXi}\bv\leq\beta\nnorm[\infty]{\bv}^2$.
We discuss the situations under Assumption~\ref{asm:m_ova_loss_assump} and Assumption~\ref{asm:m_ce_loss_assump} separately.

Under Assumption~\ref{asm:m_ova_loss_assump}, by the definition of $\psi$, we have
\begin{align*}
    \psif{\bXi}=\ell^{-1}\Biggl(\sumn\sumk\ellf{\xiki}\Biggl).
\end{align*}
According to the Hessian we derived in Eq.~\eqref{eq:m_ova_hessian}, it is enough to show that for any $\bv=\vct{v_1}{v_K}\in\R^{Kn}$, where $v_k\in\R^n$ for all $k\in[K]$ and $\bXi\in\R^{Kn}$, we have
\begin{align*}
    \sumn\sumk \frac{\ellppf{ \xiki}}{\ellpf{\psif{\bXi}}} \vki^2 \leq \beta \umax{1\leq k\leq K}\umax{1\leq i\leq n} \vki^2.
\end{align*}
Note that the condition is identical to \cite[Lemma 13, Eq. (27)]{ji2021characterizing} with additional summation in $K$; therefore, the smoothness constant conclusion for $\psi$ is the same as in the binary case for loss functions under Assumption~\ref{asm:m_ova_loss_assump}.

Next, for cross-entropy loss under Assumption~\ref{asm:m_ce_loss_assump}, by the definition of $\psi$, we have 
\begin{align*}
    \psif{\bXi}&=\ell^{-1}\Biggl(\sumn \multilossf{\seqr{\xiki}{k=1}{K}}\Biggl)=\ln\Biggl(\exp\Biggl(\sumn\ln\Biggl(1+\sumkny \expf{\cyi\xiyi-\cki\xiki}\Biggl)\Biggl)-1\Biggl).
\end{align*}
For simplicity, we again denote
\begin{align*}
    \deltai:=\sumkny \expf{\cyi\xiyi-\cki\xiki}, \text{ and }\deltaki:=\expf{\cyi\xiyi-\cki\xiki},
\end{align*}
and since $\multilossf{\seqr{\xiki}{k=1}{K}}=\lnf{1+\sumkny \expf{\cyi\xiyi-\cki\xiki}}=\lnf{1+\deltai}$, and we have the first derivative as
\begin{align*} 
    \frac{\partial\multilossf{\seqr{\xiki}{k=1}{K}}}{\partial \xiki}=
    \begin{cases}
        \frac{\cyi\deltai}{1+\deltai} &\; k=y_i\\
        \frac{-\cki\deltaki}{1+\deltai} &\; k\neq y_i\\
    \end{cases},
\end{align*}
and the second derivative as
\begin{align*}
    \frac{\partial^2\multilossf{\seqr{\xiki}{k=1}{K}}}{\partial \xiki\partial\xihi}=
    \begin{cases}
    \frac{\cyi^2\deltai}{\mts{1+\deltai}^2} &\; k=h=y_i\\
    \frac{-\cki\chhi\deltahi}{\mts{1+\deltai}^2} &\; k=y_i\text{ and } h \neq y_i\\
    \frac{\cki^2\deltaki\pts{1+\sumknynk \deltaki}}{\mts{1+\deltai}^2} &\; k=h\neq y_i\\
    \frac{-\cki\chhi\deltaki\deltahi}{\mts{1+\deltai}^2} &\; k\neq y_i \text{ and }h\neq y_i\\
    \end{cases}.
\end{align*}
By direct comparison in value, we can conclude that the first derivative w.r.t $\xiyi$ upper-bounds all the second derivatives such that $\frac{\partial\multilossf{\seqr{\xiki}{k=1}{K}}}{\partial \xiyi} \geq \frac{\partial^2\multilossf{\seqr{\xiki}{k=1}{K}}}{\partial \xiki\partial\xihi}$ for all $k,h\in [K]$. Also, we have $\frac{\partial\multilossf{\seqr{\xiki}{k=1}{K}}}{\partial \xiyi} = \frac{\cyi\Bigl(\expf{\multilossf{\seqr{\xiki}{k=1}{K}}}-1\Bigl)}{\exp\Bigl(\multilossf{\seqr{\xiki}{k=1}{K}}\Bigl)}$ and $$\ellpf{\psif{\bXi}}=\ellpellnoft{\sumn \multilossf{\seqr{\xiki}{k=1}{K}}}=\frac{\exp\Bigl(\sumn\multilossf{\seqr{\xiki}{k=1}{K}}\Bigl)-1}{\exp\Bigl(\sumn\multilossf{\seqr{\xiki}{k=1}{K}}\Bigl)},$$ since $\ellpellnofz{z} = \frac{\expf{z}-1}{\expf{z}}$. Hence, we can write
\begin{align*}
     \sumk\sumh\sumn \frac{\frac{\partial^2\multilossf{\seqr{\xiki}{k=1}{K}}}{\partial \xiki \partial \xihi}}{\ellpf{\psif{\bXi}}}\vki\vhi
     &\leq \sumk\sumh\sumn \frac{\frac{\partial\multilossf{\seqr{\xiki}{k=1}{K}}}{\partial \xiyi}}{\ellpf{\psif{\bXi}}}\normo{\vki\vhi} \\
     &=\sumk\sumh\sumn\frac{\frac{\cyi\Bigl(\expf{\multilossf{\seqr{\xiki}{k=1}{K}}}-1\Bigl)}{\expf{\multilossf{\seqr{\xiki}{k=1}{K}}}}}{\frac{\expf{\sumn\multilossf{\seqr{\xiki}{k=1}{K}}}-1}{\expf{\sumn\multilossf{\seqr{\xiki}{k=1}{K}}}}}\normo{\vki\vhi}\\
     &\leq \sumk\sumh2\umax{1\leq i\leq n} \normo{\vki\vhi}\leq 2K^2 \nnorm[\infty]{\bv}^2,
\end{align*}
where the second inequality holds because $\frac{\sumn\frac{\pts{\expf{\multilossf{\seqr{\xiki}{k=1}{K}}}-1}}{\expf{\multilossf{\seqr{\xiki}{k=1}{K}}}}}{\frac{\expf{\sumn\multilossf{\seqr{\xiki}{k=1}{K}}}-1}{\expf{\sumn\multilossf{\seqr{\xiki}{k=1}{K}}}}}\leq2$ by \cite[Proof of Lemma 14]{ji2021characterizing}. This completes the proof of the lemma.
\end{proof}

\newpage

\section{Proofs of converse results}\label{sec:converseproofs}

In this section, we collect the proofs of the converse results in Section~\ref{sec:converse}.

\subsection{Proof of Proposition~\ref{prop:converseexact}}\label{sec:converseexactproof}
\begin{proof}
The proof is divided into two parts.
\paragraph{Proof of Part 1}
For the proof of Part 1, 
we work from Part 1 of the proof of Theorem~\ref{thm:b_sen_upperbound}.
There, we showed that we require $\barbq$ to satisfy a series of characteristic equations; in particular, $\barbq$ needs to satisfy the equation
\begin{align}\label{eq:kktcharacteristic1}
    \diag{\y} \XX \diag{\y} \barbq = \mu h(\barbq) \text{ for some } \mu > 0,
\end{align}
where we recall that we defined $h(\cdot) := \gpf[-1]{\cdot}$ as shorthand.
Our goal is to show that if $\y$ is not an exact eigenvector of $\XX$, then all candidate solutions in the family $\barbq \propto \diag{\y} (\XX)^{-1} \y$ cannot satisfy Eq.~\eqref{eq:kktcharacteristic1} for any value of $\mu > 0$.
We consider the candidate solution $\barbq = \beta \diag{\y} (\XX)^{-1} \y$ for some $\beta > 0$.
Because we have assumed that $\XX$ is full-rank, this is the unique direction of the dual solution that would correspond to a primal solution that is proportional to the MNI.
Then, Eq.~\eqref{eq:kktcharacteristic1} being satisfied for some $\mu > 0$ implies
\begin{align}
\beta \cdot \diag{\y} \XX \diag{\y} \cdot \diag{\y} (\XX)^{-1} \y &= \mu \cdot h(\beta \cdot \diag{\y} (\XX)^{-1} \y)  \nonumber \\
\implies \one &= \frac{\mu}{\beta} \cdot h(\beta \diag{\y} (\XX)^{-1} \y).\label{eq:hequation}
\end{align}

Now, we recall the properties of $h(\cdot)$ that arise when $g(d) \neq d$, i.e. when the mapping $g(\cdot)$ is not the identity.
Because $g(\cdot)$ is strictly convex and increasing, we have that $g^{-1}(\cdot)$ is strictly concave and $h(\cdot) = \gpf[-1]{\cdot}$ is therefore \emph{strictly decreasing}.
This means that for any $d \neq e$, we have $h(d) \neq h(e)$.
Consequently, for Eq.~\eqref{eq:hequation} to be true for any value of $\mu > 0$, we require all the entries of $\diag{\y} (\XX)^{-1} \y$ to be equal. In other words, we need
\begin{align*}
    \diag{\y} (\XX)^{-1} \y &\propto \one \\
    \implies \diag{\y} (\XX)^{-1} \y &= \gamma \one \text{ for some } \gamma \neq 0 \\
    \implies (\XX)^{-1} \y &= \gamma \y \text{ for some } \gamma \neq 0 \\
    \implies \XX \y &= \frac{1}{\gamma} \y \text{ for some } \gamma \neq 0,
\end{align*}
implying that $\y$ needs to be an exact non-zero eigenvector of $\XX$.
This completes the proof of the first part of the proposition.


\paragraph{Proof of Part 2}
For the proof of Part 2, recall the KKT conditions in Eq.~\eqref{eq:b_kkt_relaxed} in the proof of Lemma~\ref{lem:b_imp_bias_relax} for the auxiliary convex program~\eqref{eq:b_imp_bias_relax_eq}, reproduced below for completeness.

\begin{subequations}\label{b_kktprimalall}
    \begin{align}
        -q_i                                    &<      0 \myquad[2] \text{for all } i\in[n],\label{b_conv_primal_2}\\
        1-\sumn \gf[-1]{q_i}                    &\leq   0,\label{b_conv_primal_3}\\
        \lambda_i                               &\geq   0 \myquad[2] \text{for all } i\in[n],\label{b_conv_dual_2}\\
        \mu                                     &\geq   0,\label{b_conv_dual_3}\\
        -\lambda_i q_i                          &=      0 \myquad[2] \text{for all } i\in[n],\label{b_conv_comp_2}\\
        \mu\biggl(1-\sumn \gf[-1]{q_i}\biggl)           &=      0,\label{b_conv_comp_3}\\
        \yXXy\q-\blambda-\mu \hf{\q}  &=      \zero,\label{b_conv_station_1}
    \end{align}
    \end{subequations}
    where we denote $\hf{\q}:=\gpf[-1]{\q}$ as shorthand.

    To write our candidate solution $\barbq$ in the case where $\XX = \D = \diag{\dbold} \neq \I$, we define some additional notation.
    We define $f(d) := \frac{h(d)}{d}$ on the domain $(0,1]$.
    We note that, because $h(d)$ is strictly decreasing in $d$ and $\frac{1}{d}$ is strictly decreasing in $d$, $f(d)$ is strictly decreasing in $d$ as well, and is therefore invertible. Also note that $f(d) \in (0,\infty)$.
    
    Then we can pick a candidate $\barbq$ such that for every $i \in [n]$, we have 
    \begin{align*}
    \barq_i = f^{-1} \left(\frac{d_i}{\mu}\right),
    \end{align*}
    where $\mu > 0$ satisfies
    \begin{align}\label{eq:muregularity}
    \sum_{i=1}^n g^{-1}\biggl(f^{-1} \left(\frac{d_i}{\mu}\right)\biggl) &= 1.
    \end{align}
    
    Before verifying the KKT conditions for this candidate solution, let us confirm that it is possible to select a value of $\mu > 0$ satisfying Eq.~\eqref{eq:muregularity}.
    Hiding the dependence on $\dbold$, we define $H(\mu) := \sumn g^{-1}\biggl(f^{-1} \left(\frac{d_i}{\mu}\right)\biggl)$.
    Note that $H(\mu)$ is a continuous function in $\mu > 0$.
    Moreover, it is easy to verify that $H(0) = \sum_{i=1}^n g^{-1}(0) = 0 < 1$.
    Assuming $h(1) \neq 0$, we can set $\mu = \frac{d_1}{f(1)}$ and get $H(\mu) > g^{-1}\left(f^{-1}(f(1))\right) = g^{-1}(1) = 1$.
    In the alternative case where $h(1) = 0$, we would still get $\lim_{\mu \to \infty} H(\mu) > 1$ by the same logic; meaning that there exists a value of $\mu > 0$ such that $H(\mu) > 1$ as well.
    In either case, the mean-value-theorem implies that there exists a $\mu \in \left[0, \frac{d_1}{f(1)}\right)$ such that Eq.~\eqref{eq:muregularity} is satisfied.
    
    We verify that all the KKT conditions are satisfied by this candidate solution:
    \begin{itemize}
        \item \emph{Primal feasibility:} ~\eqref{b_conv_primal_2} is satisfied because the domain of $f(\cdot)$, and therefore the range of $f^{-1}(\cdot)$, is $(0,1]$. ~\eqref{b_conv_primal_3} is satisfied because Eq.~\eqref{eq:muregularity} implies that $\sum_{i=1}^n g^{-1}(q_i) = 1$.
        \item \emph{Dual feasibility:} ~\eqref{b_conv_dual_2} is satisfied by setting $\blambda = \zero$, and~\eqref{b_conv_dual_3} is satisfied by the choice of $\mu \in \left[0, \frac{d_1}{f(1)}\right)$.
        \item \emph{Complementary slackness:}~\eqref{b_conv_comp_2}~\eqref{b_conv_comp_3} are satisfied because of the choices of $\blambda = \zero$ and Eq.~\eqref{eq:muregularity} respectively being satisfied.
        \item \emph{Stationary condition:} We require $\diag{\y} \XX \diag{\y} \barbq = \mu h(\barbq)$.
        This is equivalent to
        \begin{align*}
            d_i \barq_i &= \mu h(\barq_i) \\
            \iff \mu &= \frac{d_i}{f(\barq_i)} \text{ for all } i \in [n]. 
        \end{align*}
        Substituting our choice of $\barq_i$ into the RHS above gives
        \begin{align*}
            \frac{d_i}{f(\barq_i)} = \frac{d_i}{\frac{d_i}{\mu}} = \mu.
        \end{align*}
    \end{itemize}
    
    Thus, we have verified all the KKT conditions for this candidate solution.
    Ultimately, we get $\barw:=\ulim{\tinf{t}}\normalize[2]{\w_t}=\normalize[2]{\X^\top\diag{\y}\barbq}=\normalize[2]{\X^\top(\XX)^{-1} \D \diag{\y} \barbq}$.
    Therefore, the primal solution interpolates the adjusted binary levels given by
    \begin{align*}
        \tilde{y}_i = d_i y_i \barq_i = d_i y_i f^{-1} \left(\frac{d_i}{\mu}\right).
    \end{align*}
    This completes the proof.
\end{proof}

\subsection{Proof of Corollary~\ref{cor:ood_interpolation}}\label{sec:oodinterpolationproof}

In this section, we prove Corollary~\ref{cor:ood_interpolation}.

\begin{proof}
It is easy to verify that minimizing the importance-weighted empirical risk with a polynomial loss function of degree $m > 0$ becomes equivalent to minimizing the unweighted empirical risk on the following per-example loss function:
\begin{align}\label{eq:per-example-loss}
    \ell_i(\tilde{z}_i;Q) := \frac{1}{(Q^{-\frac{\Ind[i \in S]}{m}} - \tilde{z}_i)^m},
\end{align}
where $\tilde{z}_i := - y_i \langle \tilde{\x}_i, \w \rangle$ and $\tilde{\x}_i := Q^{-\frac{1}{m} \Ind[i \in S]}\x_i$.
Clearly, the per-example loss function in Eq.~\eqref{eq:per-example-loss} continues to verify Assumption~\ref{asm:b_loss_assump} for any fixed value of $Q > 0$.
Specifically, it continues to satisfy $\ell_i'(\ell_i^{-1}(z)) = m z^{\frac{m+1}{m}}$ for $z \geq 0$ and so we get $g_i(d) = g(d) = d^{\frac{m+1}{m}}$ for each $i \in [n]$.
Consequently, the convex program underlying the dual implicit bias is identical to~\eqref{eq:b_imp_bias_relax_eq} and the setting of Proposition~\ref{prop:converseexact}, with adjusted diagonal matrix $\D = \tilde{\X} \tilde{\X}^\top$, where we denote $\tX:=\diag{\Q^{-\frac{1}{m} \Ind[i \in S]}}\X$.
To apply Proposition~\ref{prop:converseexact}, we first calculate the functions $g(\cdot),g^{-1}(\cdot),h(\cdot),f(\cdot)$ and $f^{-1}(\cdot)$.
Direct calculations yield $g(z) = z^{\frac{m+1}{m}},g^{-1}(z) = z^{\frac{m}{m+1}},h(z) = \frac{m}{m+1} \cdot z^{-\frac{1}{m+1}},f(z) = \frac{m}{m+1} \cdot z^{-\frac{m+2}{m+1}}$, and $f^{-1}(z) = \left(\frac{(m+1)z}{m}\right)^{-\frac{m+1}{m+2}}$.
Applying Proposition~\ref{prop:converseexact} then gives us $\barq_i = f^{-1} \left(\frac{d_i}{\mu}\right) \propto  d_i^{-\frac{m+1}{m+2}}$.
Next, we have
\begin{align*}
    \barw:=\ulim{\tinf{t}}\normalize[2]{\w_t}=\normalize[2]{\tX^\top\diag{\y}\barbq}&=\normalize[2]{\tX^\top(\tXtX)^{-1} \D \diag{\y} \barbq}\\
    &=\normalize[2]{\X^\top(\XX)^{-1} \diag{\Q^{\frac{1}{m} \Ind[i \in S]}} \D \diag{\y} \barbq}.
\end{align*}

Therefore, the adjusted labels that are interpolated are proportional to $Q^{\frac{1}{m} \Ind[i \in S]}y_i d_i \cdot d_i^{-\frac{m+1}{m+2}} = Q^{\frac{1}{m} \Ind[i \in S]}d_i^{\frac{1}{m+2}}$.
It remains to calculate the value of $d_i$.
Note that we have assumed $\XX = \alpha \I$, and so $\ip{\x_i}{\x_j} = \delta_{ij}$ where $\delta_{ij}$ denotes the Kronecker delta function.
Because we have defined $\tilde{\x}_i := Q^{-\frac{1}{m} \Ind[i \in S]}\x_i$, we automatically get $\langle \tilde{\x}_i, \tilde{\x}_j \rangle = Q^{-\frac{1}{m} \cdot \pts{\Ind[i \in S]+\Ind[j \in S]}} \delta_{ij}$, meaning that $d_i = Q^{-\frac{2}{m} \Ind[i \in S]}$.
Putting all of this together results in interpolation of the per-example-adjusted labels $\tilde{y}_i \propto Q^{\frac{1}{m+2} \Ind[i \in S]} y_i$.
This completes the proof.
\end{proof}

\subsection{Proof of Proposition~\ref{prop:b_sen_lowerbound}}\label{sec:converseapproxproof}
In this section, we prove Proposition~\ref{prop:b_sen_lowerbound}.
\begin{proof}
Our starting point lies in the proof of Theorem~\ref{thm:b_sen_upperbound}; the necessity for the dual implicit bias $\barbq$ to satisfy the following characteristic equations, restated below.
\begin{subequations}
\begin{align}
    \yXXy\barbq                 &=      \mu \hf{\barbq},      \label{b_con_sen_charc_1}\\
    \mu                         &>      0,  \myquad[2]\text{ and } \label{b_con_sen_charc_2}\\
    \sumn \gf[-1]{\barq_i}      &=      1.  \label{b_con_sen_charc_3}
\end{align}
\end{subequations}

We consider in particular Equations~\eqref{b_con_sen_charc_1} and~\eqref{b_con_sen_charc_2}. Recalling that we defined $\qy := \diag{\y} \barbq$ as shorthand, our equivalent goal is to lower-bound $\nnorm[2]{\normalize[2]{\qy}-\normalize[2]{\y}}$. Moreover, pre-multiplying both sides by $\diag{\y}$ means that the first and second characteristic equations imply
\begin{align}\label{eq:b_sen_charc_for_converse}
    \XX \qy = \mu \diag{\y} \hf{\barbq} \text{ for some } \mu > 0.
\end{align}

Next, we show that without loss of generality we can set $\mu = 1$ or any positive value, which greatly simplifies the proof exposition.
The reason for this is as follows: consider a solution $\qy$ that satisfies Equation~\eqref{eq:b_sen_charc_for_converse} for some $\mu \neq 1$.
Then, since $\hf{q}$ is a homogeneous function where $\hf{ab}=a^{\gamma}\hf{b}$ for $a,b\geq0$ and $\gamma\in\R$, it is easy to verify that the modified solution $\mu^{\frac{1}{\gamma-1}} \qy$ will satisfy Equation~\eqref{eq:b_sen_charc_for_converse} for $\mu = 1$. Moreover, because the new solution is a scalar multiple of $\qy$, it is identical in direction. Hence, for simplicity, we choose to solve the characteristic equations with a $\bar{\mu}$ where $\bar{\mu}\hf{1}=1$, and we also define $\bhf{z} := \bar{\mu}\hf{z}$, where $\bhf{z}$ is still a homogeneous function. Equation~\eqref{eq:b_sen_charc_for_converse} becomes
\begin{align}\label{eq:b_sen_charc_for_converse_bar_h}
    \XX \qy = \diag{\y} \bhf{\barbq}.
\end{align}

Then we denote that $\barbq = \beta \pts{\one + \bDelta}$ for some $\beta > 0$ and some vector $\bDelta$ such that $\nnorm[2]{1 + \bDelta} = \sqrt{n}$, and this ensures that $\nnorm[2]{\qy} = \beta \sqrt{n}$.
Note that
\begin{align*}
   \frac{1}{\sqrt{n}} \nnorm[2]{\bDelta} = \nnorm[2]{\frac{\barbq}{\sqrt{n}\beta} - \frac{\one}{\sqrt{n}}} = \nnorm[2]{\normalize[2]{\qy} - \normalize[2]{\y}},
\end{align*}
and so to obtain our desired lower bound on the set of candidate solutions $\qy$ satisfying $\nnorm[2]{\qy} = \beta\sqrt{n}$, it suffices to obtain a lower bound on $\nnorm[2]{\bDelta}$.
By considering Equation~\eqref{eq:b_sen_charc_for_converse_bar_h},
we write as shorthand $\bDelta_{\y} := \diag{\y} \cdot \bDelta$; therefore, we have
\begin{align}\label{eq:intermediate}
    \beta \cdot \XX \y + \beta \cdot \XX \bDelta_{\y} &= \beta^{\gamma} \cdot \diag{\y} \bhf{\one + \bDelta} \nonumber \\
    \iff \beta \cdot \XX \bDelta_{\y} - \beta^{\gamma} \cdot \diag{\y} \sigma(\bDelta) &= \pts{\beta^{\gamma} \I - \beta \XX} \y,
\end{align}
where we define $\sigma(\Delta_i) := \bhf{1 + \Delta_i}-1$ for any $\Delta_i > -1$.
This function is well-defined for our choice of $\bDelta$, because the constraint $\barbq \succ 0$ necessitates $\bDelta \succ - \one$.

We now upper bound the norm of the LHS of Equation~\eqref{eq:intermediate} above. Note that we assume $\nnorm[\infty]{\bDelta}\leq\delta$ for some $\delta \in (0,1)$, and $\bhf{z}$ is a decreasing function with $\bhf{1}=1$. Hence, since we have $\sigma(0)=0$, it is straightforward to upper bound $|\sigma(\Delta_i)|$ using an absolute linear function $\normo{-k\Delta_i}$ with $k\geq0$. If $\bhf{z}$ is a convex function, we can determine $k$ using $\Delta_i =-\delta$; otherwise, if $\bhf{z}$ is a concave function, we can determine $k$ using $\Delta_i =\delta$. Therefore, we have $|\sigma(\Delta_i)| \leq \cond{\frac{\bhf{1-\delta}-1}{\delta}\normo{\Delta_i}}{\bhf['']{z}\geq0}{\frac{1-\bhf{1+\delta}}{\delta}\normo{\Delta_i}}{\bhf['']{z}<0}$. As a result, we can choose $k=\max\pts{{\frac{\bhf{1-\delta}-1}{\delta}, \frac{1-\bhf{1+\delta}}{\delta}}}$ such that $|\sigma(\Delta_i)|\leq k \normo{\Delta_i}$ and $\nnorm[2]{\sigma(\bDelta)}\leq k \nnorm[2]{\bDelta}$. This leads to the upper bound
\begin{align*}
\nnorm[2]{\beta \cdot \XX \bDelta_{\y} - \beta^{\gamma} \cdot \diag{\y} \sigma(\bDelta)}
&\leq \beta\nnorm[2]{\XX}\nnorm[2]{\bDelta_{\y}} + \beta^{\gamma}\nnorm[2]{\sigma(\bDelta)} \\
&\leq \left(\beta \nnorm[2]{\XX} + k \beta^{\gamma} \right) \nnorm[2]{\bDelta} \\
&\leq 2 \max(\beta \nnorm[2]{\XX}, k \beta^{\gamma}) \nnorm[2]{\bDelta}.
\end{align*}

Plugging this upper bound into Equation~\eqref{eq:intermediate} above and dividing numerator and denominator by $\beta > 0$ yields
\begin{align*}
    \nnorm[2]{\bDelta} \geq \min\left\{ \frac{\nnorm[2]{(\XX - \alpha \I)\y}}{2k \alpha}, \frac{\nnorm[2]{(\XX - \alpha \I)\y}}{2\nnorm[2]{\XX}}\right\},
\end{align*}
where we defined $\alpha := \beta^{\gamma-1}$ as shorthand.
Consequently, we have
\begin{align*}
   \nnorm[2]{\normalize[2]{\qy} - \normalize[2]{\y}} =\frac{1}{\sqrt{n}} \nnorm[2]{\bDelta} \geq \frac{1}{2\sqrt{n}}\min\left\{ \frac{\nnorm[2]{(\XX - \alpha \I)\y}}{k \alpha}, \frac{\nnorm[2]{(\XX - \alpha \I)\y}}{\nnorm[2]{\XX}}\right\},
\end{align*}
Further minimizing over all $\alpha > 0$ then yields the desired result.
\end{proof}

\newpage

\section{Additional simulations for importance weighting under random data} \label{app:simulation}
\begin{figure}[h]
\noindent\begin{subfigure}[b]{.45\textwidth}
 \includegraphics[width=68mm]{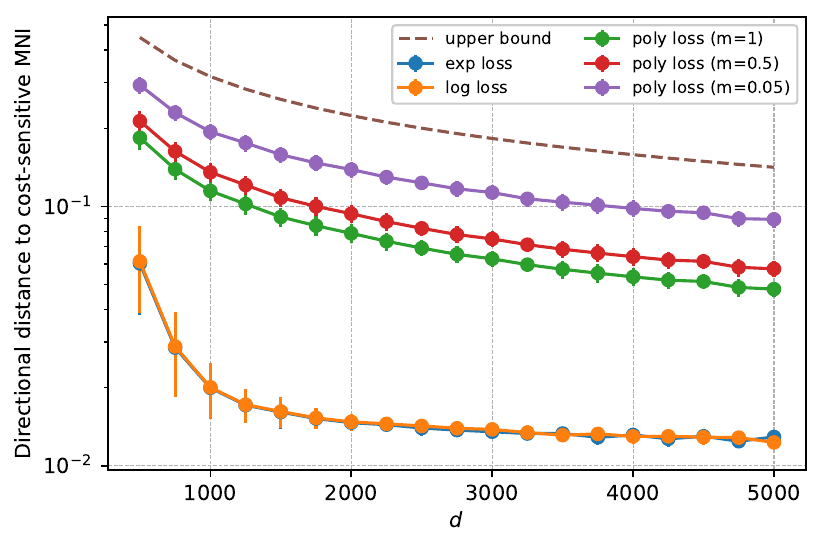}
 \caption{Directional convergence of various loss functions with importance weighting.}\label{fig:iw_conv}
\end{subfigure}
\hfill
\noindent\begin{subfigure}[b]{.45\textwidth}
 \includegraphics[width=69mm]{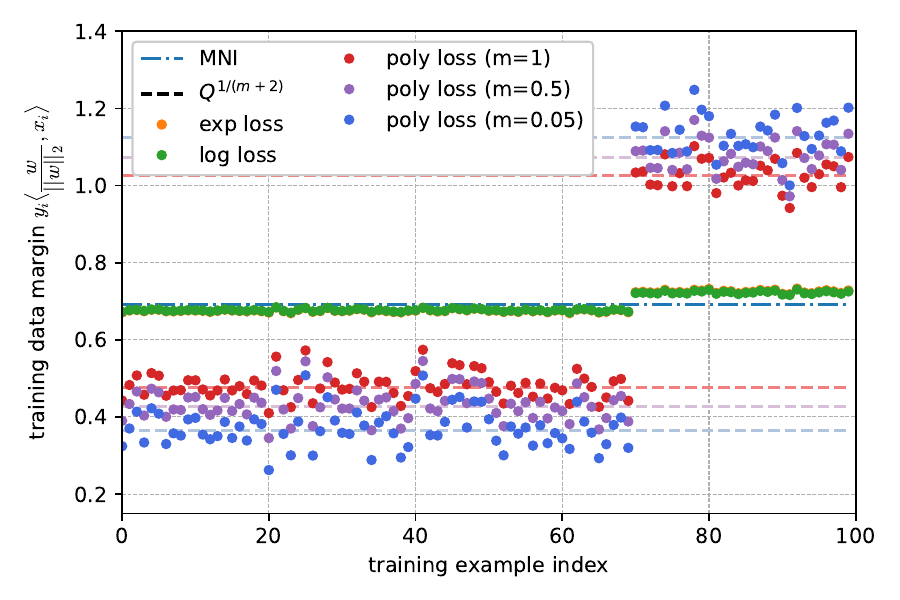}
 \caption{Importance weighting on different loss functions for random data.}
 \label{fig:margin_5k}
\end{subfigure}

\caption{Panel (a) compares the implicit bias of gradient descent to the \emph{cost-sensitive MNI} (defined in Corollary~\ref{cor:ood_interpolation}), which is obtained by fitting the adjusted label $\diag{ Q^{\frac{1}{m+2} \cdot \Ind[i \in S]}}\y$ with $Q=2.0$ on $y_i=-1$. The results demonstrate that the directional distance to the cost-sensitive MNI follows a similar upper bound as in Theorem~\ref{thm:b_sen_upperbound}. The simulation setup is the same as Figure~\ref{fig:binary_exp}. Panel (b) present the outcomes of importance weighting on different loss functions under random data for data dimensions $d = 5000$. The covariates $\seqr{\x_i}{i=1}{n}$ are independently and identically distributed (IID) isotropic Gaussian with a fixed sample size $n = 100$. The first $70$ examples are \emph{majority examples} and labeled as $y_i=+1$, and the rest of the $30$ examples are \emph{minority examples} labeled as $y_i=-1$. Note that we apply the importance weighting factor $Q=10.0$ only to the minority examples. We run gradient descent on all loss functions for the minimum of $10^4$ iterations, or when the empirical risk falls below $10^{-12}$. We also observe that heavier-tailed polynomial losses (i.e. smaller values of $m$) lead to a stronger importance weighting effect.}
\end{figure}
In this section, we provide additional simulations on random data in order to evaluate how different loss functions influence the training data margins under importance weighting.
These simulations are a more realistic complement to Figure~\ref{fig:margin}, which considered the idealized scenario where $\XX = \alpha \I$.

\end{document}